\tikzset{node distance=1.5cm, every state/.style={ semithick,
           fill=gray!5},
         initial text={},     double distance=3pt, every edge/.style={  draw,
           ->,>=stealth',     auto,
           semithick}}
\newtheorem{theorem}{Theorem}[section]
\newtheorem{proposition}[theorem]{Proposition}
\newtheorem{lemma}[theorem]{Lemma}
\newtheorem{definition}[theorem]{Definition}
\newtheorem{assumption}[theorem]{Assumption}
\newtheorem{remark}[theorem]{Remark}
\crefname{assumption}{assumption}{assumptions}
\Crefname{assumption}{Assumption}{Assumptions}
\crefname{conjecture}{conjecture}{conjectures}
\Crefname{conjecture}{Conjecture}{Conjectures}
\newcommand{\yhat}{\widehat{y}}
 \newcommand{\E}{\mathbb{E}}
\providecommand{\argmin}{\mathop{\mathrm{argmin}}}
\providecommand{\E}{\mathbb{E}}
\providecommand{\yhat}{\hat{\yvec}}
\definecolor{dg}{RGB}{0,120,0}
\newcommand{\cf}{Counterfactual Fairness}
\newcommand{\cls}{\phi}
\newcommand{\pcf}{\textnormal{PCF}}
\newcommand{\PCF}{\textnormal{PCF}}
\newcommand{\CF}{\textnormal{CF}}
\newcommand{\train}{\textnormal{train}}
\newcommand{\test}{\textnormal{test}}
\newcommand{\te}{\textnormal{TE}}
 \newcommand{\risk}{\mathcal{R}}
\newcommand{\U}{\bf{U}}
\newcommand{\V}{\bf{V}}
\newcommand{\TE}{\textnormal{TE}}
\definecolor{amaranth}{rgb}{0.9, 0.17, 0.31}
\renewcommand{\E}{\mathbb{E}}
\title{Counterfactual Fairness by Combining Factual and Counterfactual Predictions}
\author{Zeyu Zhou, Tianci Liu, Ruqi Bai, Jing Gao, Murat Kocaoglu, David I. Inouye  \\
  Elmore Family School of Electrical and Computer Engineering\\
  Purdue University\\
  \texttt{\{zhou1059,  liu3351, bai116, jinggao, mkocaoglu, dinouye\}@purdue.edu} \\
}
\begin{document}

\maketitle

\begin{abstract}
In high-stakes domains such as healthcare and hiring, the role of machine learning (ML) in decision-making raises significant fairness concerns. 
This work focuses on Counterfactual Fairness (CF), which posits that an ML model's outcome on any individual should remain unchanged if they had belonged to a different demographic group.
Previous works have proposed methods that guarantee CF. 
Notwithstanding, their effects on the model's predictive performance remain largely unclear.
To fill this gap, 
we provide a theoretical study on the inherent trade-off between CF and predictive performance in a model-agnostic manner. 
We first propose a simple but effective method to cast an optimal but potentially unfair predictor into a fair one with minimal performance degradation.
By analyzing the excess risk incurred by perfect CF, we quantify this inherent trade-off. 
Further analysis on our method's performance with access to only incomplete causal knowledge is also conducted. 
Built upon this, we propose a practical algorithm that can be applied in such scenarios. 
Experiments on both synthetic and semi-synthetic datasets demonstrate the validity of our analysis and methods.

\end{abstract}

\section{Introduction}\label{sec:intro}

Machine learning (ML) has been widely used in high-stakes domains such as healthcare \citep{daneshjou2021disparities}, hiring \citep{hoffman2018discretion}, criminal justice \citep{brennan2009evaluating}, and loan assessment \citep{khandani2010consumer}, bringing with it critical ethical and social considerations. 
A prominent example is the bias observed in the COMPAS tool against African Americans in recidivism predictions \citep{brackey2019analysis}. 
This issue is particularly alarming in an era where large-scale deep learning models, commonly trained on noisy data from the internet, are increasingly prevalent. 
Such models, due to their extensive reach and impact, amplify the potential for widespread and systemic biases.
This increasing awareness underscores the need for ML practitioners to integrate fairness considerations into their work, extending their focus beyond merely maximizing prediction accuracy \citep{bolukbasi2016man, calders2010three, dwork2012fairness, grgic2016case, hardt2016equality}. 
Various fairness notions have been developed, ranging from group-level measures such as group parity \citep{hardt2016equality} to individual-level metrics \citep{dwork2012fairness}. 
Recently, there has been a growing interest in approaches based on causal inference, particularly in understanding the causal effects of sensitive attributes such as \textit{gender} and \textit{age} on decision-making \citep{chiappa2019path, galhotra2022causal, khademi2019fairness}. 
This has led to the proposal of Counterfactual Fairness (CF), which states that prediction for an individual in hypothetical scenarios where their sensitive attributes differ should remain unchanged \citep{kusner2017counterfactual}.
As an individual-level notion agnostic to the choice of similarity measure \citep{kusner2017counterfactual, rosenblatt23counterfactual}, CF has recently gained traction \citep{anthis2024causal, nilforoshan2022causal, makhlouf2022survey, rosenblatt23counterfactual}.

\begin{figure}[!ht]
    \centering
    \begin{subfigure}[t]{0.4\linewidth}
        \centering
        \includegraphics[width=0.75\linewidth]{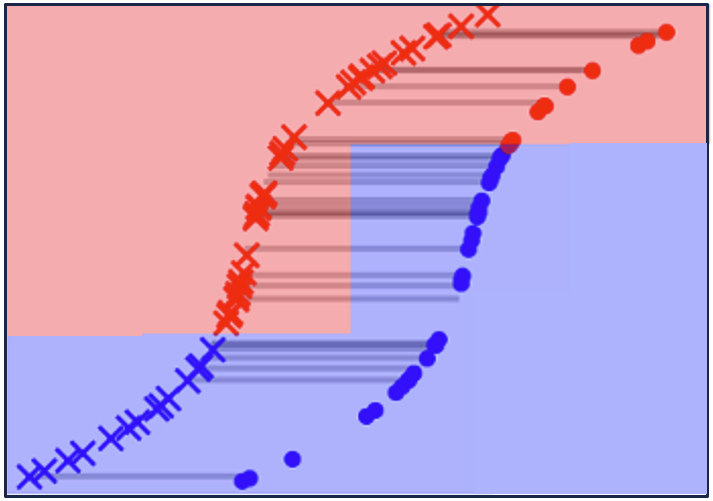}
        \caption{Optimal predictor without counterfactual fairness constraint.}
        \label{fig:estana_linear_knn_cf_effect}
    \end{subfigure}
    \hspace{2em}
    \begin{subfigure}[t]{0.45\linewidth}
        \centering
\includegraphics[width=\linewidth]{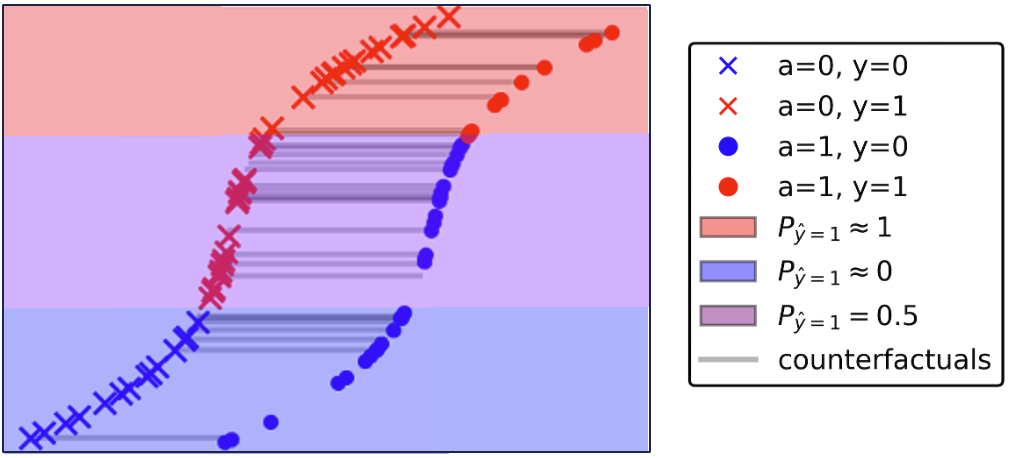}
        \caption{Optimal predictor under perfect counterfactual fairness constraint.}        \label{fig:estana_cubic_knn_cf_effect}
    \end{subfigure}
    
    \caption{
    The optimal (unfair) predictor (a) violates counterfactual fairness in the middle region because the predictions are different for the factual-counterfactual pairs \protect\footnotemark (denoted by line segments between $a=0$ and $a=1$). We prove that the optimal fair predictor (b) simply mixes the optimal unfair predictions at the factual and counterfactual points (i.e., mixes the predictions at both endpoints of the line). 
    This mixing incurs the inherent excess risk associated with counterfactual fairness.
    Colors represent target classes ($Y$), and dot styles represent sensitive attributes ($A$).
    }
    \label{fig:cf-ill}
    \vspace{-1.5em}
\end{figure}

To achieve CF, \citet{kusner2017counterfactual} first proposed a naive solution, suggesting that predictions should only use non-descendants of the sensitive attribute in a causal graph. 
This approach only requires a causal topological ordering of variables and achieves perfect CF by construction.
However, it limits the available features for downstream tasks and could be inapplicable in certain cases \citep{kusner2017counterfactual}. 
For relaxation, they further proposed an algorithm that leverages latent variables.
Extending this line of work, \citet{zuo2023counterfactually} introduced a technique that incorporates additional information by mixing factual and counterfactual samples. 
Although perfect CF has been established in their work, 
its predictiveness degraded, and whether the predictive power can be improved remains unknown.
In parallel to this, another branch of research employed regularization and augmentation to \textit{encourage} CF \citep{garg2019counterfactual,stefano2020removing,kim2021counterfactual}. 
However, as these methods cannot guarantee perfect CF, analyzing the optimal predictive performance under CF constraints is highly challenging. 
\footnotetext{Because of our invertibility assumption, any factual point has a unique counterfactual.}

To theoretically understand the tradeoff between CF and ML performance, we consider a class of invertible causal models and prove that the optimal solution under perfect Counterfactual Fairness (CF) has a simple form w.r.t. to the Bayes optimal classifier and explicitly quantify the excess risk of imposing a perfect CF constraint as has been done for non-causal fairness notions\citep{zhao2022inherent,xian2023fair}.
The optimal predictor under the fairness constraint can be achieved by combining factual and counterfactual predictions using a (potentially unfair) optimal predictor. 
Next, we quantify the excess risk between the optimal predictor with and without CF constraints.
This quantity sheds light on the best possible model, in terms of predictive performance, under the stringent notion of perfect CF. 
Our results are illustrated in \Cref{fig:cf-ill}.
To consider scenarios with incomplete causal knowledge (e.g. unknown causal graph or model),
we further study the CF and predictive performance degradation caused by imperfect counterfactual estimations.
Inspired by our theoretical findings, we propose a plugin method that leverages a (potentially unfair) pretrained model to achieve a better tradeoff of fairness and predictive performance than the prior methods.
Furthermore, we propose a method to improve the pretrained model that accounts for counterfactual estimation errors and can achieve good empirical performance even with limited causal knowledge.
We summarize our contributions as follows:

\begin{enumerate}
\item We propose a CF method that is provably optimal in terms of predictive performance under perfect CF. 
    \item 
    To the best of our knowledge, we are the first to characterize the inherent trade-off of CF and ML performance, which applies to all CF methods. 
     \item We investigate the CF and predictive performance degradation from estimation error resulting from limited causal knowledge and propose methods to mitigate estimation errors in practice.
     \item We empirically demonstrate that our proposed CF methods outperform existing methods in both full and incomplete causal knowledge settings \footnote{Code can be found in \href{https://github.com/inouye-lab/pcf}{https://github.com/inouye-lab/pcf}}.
\end{enumerate}

\section{Preliminaries}\label{sec:prelim}

\paragraph{Notation} We use capital letters to represent random variables and lowercase letters to represent the realizations of random variables.
Now we define a few variables that will be considered in this work.
$A$ represents the sensitive attribute of an individual (e.g., gender), $Y$ represents the target variable to predict, $X$ represents observed features other than $A$ and $Y$, and $U$ represents unobserved confounding variables which are not caused by any observed variables while $a,y,x,u$ represent their realization respectively.

\paragraph{Counterfactual} In this work, we use the framework of Structural Causal Models (SCMs) \citep{pearl2009causality}.
A SCM is a triplet $\mathcal{M}=(\U,\V,\mathcal{F})$ where $\U$ represents exogenous variables (factors outside the model), $\V$ represents endogenous variables, and $\mathcal{F}$ contains a set of functions $F_i$ that map  from $U_i$ and $Parent(V_i)$ to $V_i$.
A counterfactual query asks a question like: what would the value of $Y$ be if $A$ had taken a different value given certain observations?
For example, given that a person is a woman and given everything we observe about her performance in an interview, what is the probability of her getting the job if she had been a man?
More formally, given a SCM, a counterfactual query can be written as $P(Y_{A=a}|W=w)$.
Here $W=w$ is the evidence and $A=a$ in the subscript represents the intervention on $A$.
For the general procedure to estimate counterfactuals, please refer to \citet{pearl2009causality}. 

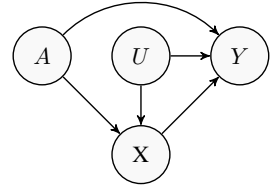
\begin{wrapfigure}{r}{0.25\textwidth}
    \centering
    \resizebox{\linewidth}{!}{
    \begin{tikzpicture}
    \node[state] (Zd) {$A$};
    \node[state, right of=Zd] (Zy) {$U$};
    \node[state, right of=Zy] (Zres) {$Y$};
    \node[state, below of=Zy] (X) {X};

\path (Zy) edge [] node [above] {} (Zres) ;
    \path (Zd) edge [] node [above] {} (X) ;
    \path (Zy) edge [] node [above] {} (X) ;
    \path (X) edge [] node [above] {} (Zres) ;
    \path (Zd) edge [bend left=45] node [above] {} (Zres) ;
    \end{tikzpicture}}
\caption{Causal graph. $A$ represents sensitive attribute, $Y$ represents the target variable, $U$ represents latent confounders, $X$ represents observed features. 
Note that the validity of our theoretical analysis holds for all causal models that satisfy the condition given by \Cref{asm:main}. It is not restricted to this specific graph.}
    \label{fig:causal-graph}
\end{wrapfigure}

\paragraph{Counterfactual Fairness}
Built upon the framework above, we focus on Counterfactual Fairness (CF), which requires the predictors to be fair among factual and counterfactual samples. 
More formally, it is defined as below
\begin{definition}(Counterfactual Fairness)
    We say a predictor $\hat Y$ is counterfactually fair if $$p(\hat{Y}_{A=a}|X=x,A=a)= p(\hat{Y}_{A=a'}|X=x,A=a), \quad \forall (x, a).$$
\end{definition}
This definition states that intervention on $A$ should not affect the distribution of $\hat{Y}$.
Using the same example above, the probability of a woman getting the job should be the same as that if she had been a man. 
For that goal, we use the following metric to evaluate CF
\begin{definition}(Total Effect)
The Total Effect (TE) of a predictor $\hat{Y}$ is 
    $$ \TE \triangleq \E[|\hat{Y}_{A=a} - \hat{Y}_{A=a'}|].
    $$
\end{definition}
Therefore, a predictor is counterfactually fair if and only if $\TE=0$.
Throughout the paper, we use TE to quantify the violation of counterfactual fairness.

\section{Counterfactual Fairness via Output Combination}\label{sec:fair-alg}
\label{sec:method}
\subsection{Problem Setup}
We assume that all data we have is generated by a causal model \citep{pearl2009causality}, and we consider the representative causal graph shown in \Cref{fig:causal-graph} that has been widely adopted in the fairness literature \citep{DBLP:journals/ml/GrariLD23,kusner2017counterfactual,zuo2023counterfactually}.
Our analysis is presented based on binary $A \in \{0, 1\}$ given its pivotal importance in the literature \citep{pessach2023algorithmic} and for the sake of presentation clearness, but our analysis and our method can be naturally extended to multi-class $A$.
We first state the main assumptions needed in this section.

\begin{assumption}\label{asm:main}\text{}
    \begin{enumerate}
        \item $A$ and $U$ are independent of each other.
        \item The mapping between $X$ and $U$ is invertible given $A$.
    \end{enumerate}
\end{assumption}

The first assumption is very common in the fairness literature.
While the invertibility assumption might be restrictive in certain scenarios, it simplifies the theoretical analysis and has been adopted in recent works on counterfactual estimation \citep{nasr2023counterfactual, kulinski2023towards}.
We expect that exact invertibility is not required in practice but rather only strong mutual information between $X$ and $U$ given $A$ would be sufficient.
Further, we empirically validate the effectiveness of our method after relaxing invertibility in the experiment section.
To facilitate our discussion, we first define $F_X$ as the mapping between $X$ and $(U, A)$, i.e., $X = F_X(U, A)$. 
According to our second assumption, $F_X(\cdot, a)$ is an invertible function, i.e., $\exists {F_X^*}^{-1}, {F_X^*}^{-1}(x,a)={F_X^*}^{-1}(F_X^*(u,a),a) = u, \forall (x,a)$.
This assumption simplifies the counterfactual estimation of $X$ for different values of $A$ into a deterministic function. 
In our context, the counterfactual query is specifically $p(X_{a'} | X=x, A=a)$, which simplifies to a Dirac delta at a single value given the invertibility assumption.
Thus, we introduce the concept of a deterministic counterfactual generating mechanism (CGM), denoted as $x_{a'} = G(x, a, a')$.
Also, in our case, we will assume $A$ is binary so that $a$ and it's counterfactual $a'$ can be written as $1-a$.
All proofs can be found in \Cref{app-sec:proof}.
Given this setup, the following lemma characterizes the perfect CF constraint on $\phi$.

\begin{lemma}
\label{thm:perfect-te}
    Given \Cref{asm:main}, predictor $\phi$ on $(X,A)$ is counterfactually fair if and only if the predictor returns the same value for a sample and its counterfactuals, i.e., $\mathrm{TE}(\phi)=0 
        \Leftrightarrow \phi(x,a) \overset{\text{a.s.}}{=} \phi(x_{1-a}, 1-a), \quad\forall (x,a)$.
\end{lemma}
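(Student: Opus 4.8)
The plan is to prove both directions of the equivalence by unpacking the counterfactual-fairness definition through the structure provided by \Cref{asm:main}. The key observation is that under the invertibility assumption, the counterfactual query $p(X_{a'} \mid X = x, A = a)$ collapses to a Dirac delta at the single point $x_{1-a} = G(x, a, 1-a)$, so the distributions appearing in the definition of CF become point masses and the equality of distributions becomes equality of deterministic predictor outputs.

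First I would rewrite the CF condition. Fix $(x,a)$ and consider $p(\hat Y_{A=a} \mid X = x, A = a)$ versus $p(\hat Y_{A=1-a} \mid X = x, A = a)$. For the factual branch, conditioning on $X = x, A = a$ and then (trivially) intervening to set $A = a$ leaves $\hat Y = \phi(x, a)$ deterministically, so this is a point mass at $\phi(x,a)$. For the counterfactual branch, I would use the abduction--action--prediction procedure: conditioning on $X = x, A = a$ pins down $U = {F_X^*}^{-1}(x, a)$ by invertibility (abduction), intervening $A \leftarrow 1-a$ gives $X_{1-a} = F_X(U, 1-a) = G(x, a, 1-a) = x_{1-a}$ (action), and then the predictor evaluates to $\phi(x_{1-a}, 1-a)$ (prediction). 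Hence this branch is a point mass at $\phi(x_{1-a}, 1-a)$. Two point masses are equal if and only if their atoms coincide, which gives $\phi(x,a) = \phi(x_{1-a}, 1-a)$ for that particular $(x,a)$.

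Next I would handle the quantifier and the almost-sure qualifier. The definition of CF requires the equality for all $(x,a)$, while the claimed characterization is stated with $\overset{\text{a.s.}}{=}$; I would be careful here and note that the pointwise argument above gives the equality at every $(x,a)$ in the support of $(X,A)$, and that TE, being an expectation over $(X,A)$, only sees the support — so $\mathrm{TE}(\phi) = \E[|\phi(X,A) - \phi(X_{1-A}, 1-A)|] = 0$ if and only if $\phi(x,a) = \phi(x_{1-a},1-a)$ for $(X,A)$-almost-every $(x,a)$, using nonnegativity of the integrand. For the forward direction ($\mathrm{TE}(\phi) = 0 \Rightarrow$ a.s. equality) this is the standard fact that a nonnegative random variable with zero expectation is a.s. zero; for the reverse direction it is immediate that the integrand vanishes a.s. so its expectation is zero. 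Combining with the first step, CF $\Leftrightarrow \mathrm{TE}(\phi) = 0 \Leftrightarrow$ a.s. equality of factual and counterfactual predictions.

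The main obstacle I anticipate is the bookkeeping around the abduction step: I need to justify that conditioning on $(X,A) = (x,a)$ determines $U$ exactly, which relies on part 2 of \Cref{asm:main} (invertibility of $F_X(\cdot, a)$) together with part 1 ($A \independent U$, so that the distribution of $U$ is unaffected by conditioning on $A$ and the intervention on $A$ is well-defined without disturbing $U$). I would also want to state clearly that the predictor $\phi$ is a deterministic function of $(X,A)$ — if one instead allowed a stochastic predictor $p(\hat Y \mid X, A)$, the same argument goes through with "point mass at $\phi(x,a)$" replaced by "the distribution $p(\hat Y \mid X = x, A = a)$", and the two conditional distributions must then agree, which is exactly the natural generalization; but since the lemma is phrased with a function $\phi$ I would keep the deterministic presentation. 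Everything else is routine.
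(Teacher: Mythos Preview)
Your proposal is correct and covers the same mathematical content as the paper's proof, but is considerably more detailed. The paper's own proof is a one-liner: it simply unfolds $\mathrm{TE}(\phi)=\E[|\phi(X,A)-\phi(X_{1-A},1-A)|]$ and invokes nonnegativity to get the a.s.\ equality, taking the equivalence ``CF $\Leftrightarrow \mathrm{TE}=0$'' as already established in the preliminaries. You instead justify that equivalence explicitly via the abduction--action--prediction decomposition, showing how invertibility collapses the counterfactual distribution to a point mass; this is the step the paper asserts without proof just before the lemma. Your extra care buys a self-contained argument and clarifies exactly where both parts of \Cref{asm:main} are used, at the cost of length; the paper's approach is terser but leans on an unproven claim from the setup.
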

The proof is straightforward from the definition of TE. 
Notably, this lemma helps disambiguate the question of whether counterfactual fairness is a distribution- or individual-level requirements as raised in \citet{plecko2022causal}. 
In our setup, they are equivalent due to invertibility between $X$ and $U$ given $A$.

\subsection{Optimal Counterfactual Fairness and Inherent Trade-off}\label{sec:method-pretrained}

Given the complete knowledge of the causal model, it is viable to satisfy the perfect CF constraint \citep{kusner2017counterfactual, zuo2023counterfactually}. 
However, these methods are known to result in \textit{empirical} degradation of ML models' performance, raising critical concerns about the \textit{fairness-utility trade-offs}. 
Moreover, it is still unknown \textit{to what extent the ML model performance has to be affected in order to achieve perfect CF}. 
In this section we provide a formal study on this to close the gap. 
Our solution consists of two steps. 
First, we propose a simple yet effective method that is provably optimal under the constraint of perfect CF. 
Next, we characterize the inherent trade-off between CF and predictive performance by checking the excess risk compared to a Bayes optimal (unfair) predictor. 
Our result shows that the inherent trade-off is dominated by the dependency between $Y$ and $A$, 
echoing previous analysis on non-causal based fairness notions \citep{chzhen2020fair,xian2023fair}.
For brevity, we refer to a \textit{Bayes optimal} predictor as ``optimal'', and a model that satisfies \textit{perfect CF} as ``fair''.

We start with the following theorem instantiating an \textit{optimal and fair} predictor.

\begin{theorem}\label{thm:opt-predictor}
    Given \Cref{asm:main} 
    \footnote{Note that while the discussion in this paper focuses on the causal model in \Cref{fig:causal-graph}, our theorem is valid for more general cases that satisfy \Cref{asm:main}. And for brevity, the following discussion is done under this assumption unless otherwise stated.} and loss $\ell$ (i.e., squared $L_2$ loss for regression tasks, and cross-entropy loss for classification tasks), 
    an optimal and fair predictor (i.e., the best possible model(s) under the constraint of perfect CF) is given by the average of the optimal (potentially unfair) predictions on itself and all possible counterfactuals:
    \begin{align*}
        \phi^*_\mathrm{CF}(x,a) 
        &\triangleq p(A\!=\!a)\phi^*(x, a) + p(A\!=\!1\!-\!a) \phi^*(x_{1-a}, 1\!-\!a)
        \in \argmin_{\phi:\mathrm{TE}(\phi)=0}  \E[\ell(\phi(X,A), Y)] \,,
\end{align*}
    where $x_{1-a} = G^*(x,a,1\!-\!a)$ is the counterfactual of $(x,a)$ when intervening with $A=1\!-\!a$, and $\phi^*(x,a)$ is an unconstrained optimal predictor, i.e., $\phi^*(x,a)\triangleq \argmin_{\phi} \E[\ell(\phi(X,A), Y)]\!=\!\E[Y|X\!\!=\!x,A\!=\!a]$.
\end{theorem}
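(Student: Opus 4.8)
The plan is to recast the constrained optimization $\min_{\phi:\mathrm{TE}(\phi)=0} \E[\ell(\phi(X,A),Y)]$ into an unconstrained problem over the \emph{quotient} of the sample space under the factual-counterfactual equivalence relation, and then solve that unconstrained problem pointwise. By \Cref{thm:perfect-te}, the constraint $\mathrm{TE}(\phi)=0$ is exactly the requirement that $\phi$ be constant on each pair $\{(x,a),(x_{1-a},1-a)\}$; because of invertibility the CGM $G^*$ pairs up the space into such two-element orbits (with $G^*(x,a,1-a)$ and $G^*(x_{1-a},1-a,a)$ being inverse to each other), so a fair predictor is equivalent to an arbitrary function on the set of orbits. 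First I would therefore write the expected loss as an integral over orbits, splitting the expectation by conditioning on $A$: for a fixed orbit represented by $(x,a)$ with counterfactual $(x_{1-a},1-a)$, the contribution to the risk is $p(A=a)\,\E[\ell(c,Y)\mid X=x,A=a] + p(A=1-a)\,\E[\ell(c,Y)\mid X=x_{1-a},A=1-a]$ where $c$ is the common value the fair predictor must assign to that orbit.

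Next I would minimize this pointwise over $c\in\R$ (or over the probability simplex, in the classification case). For the squared $L_2$ loss, $c\mapsto p_0\,\E[(c-Y)^2\mid \text{event}_0] + p_1\,\E[(c-Y)^2\mid \text{event}_1]$ is a strictly convex quadratic whose minimizer is the weighted average $p_0\,\E[Y\mid\text{event}_0] + p_1\,\E[Y\mid\text{event}_1]$; substituting $\E[Y\mid X=x,A=a]=\phi^*(x,a)$ and $\E[Y\mid X=x_{1-a},A=1-a]=\phi^*(x_{1-a},1-a)$ yields exactly the stated formula for $\phi^*_{\mathrm{CF}}$. For the cross-entropy loss I would use the standard fact that $\arg\min_q \E[\ell_{\mathrm{CE}}(q,Y)] = \E[\text{one-hot}(Y)]$ (the loss is a Bregman divergence / proper scoring rule whose population minimizer is the mean label distribution), and since a convex combination of two conditional label distributions is again such a mean under the appropriately mixed conditioning, the same weighted-average expression is optimal. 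In both cases one checks that the resulting $\phi^*_{\mathrm{CF}}$ is symmetric under $(x,a)\leftrightarrow(x_{1-a},1-a)$ — i.e. it does take a single value on each orbit — which is what guarantees feasibility; this symmetry follows because swapping the roles of $a$ and $1-a$ permutes the two terms and the two mixing weights simultaneously.

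The main obstacle I expect is the measure-theoretic bookkeeping needed to make ``integrate over orbits'' rigorous: one must verify that the pushforward of $P_{X,A}$ under the quotient map is well-defined, that conditioning on $X=x,A=a$ versus on the counterfactual event is compatible (this is where \Cref{asm:main} and the determinism of $G^*$ are essential — the counterfactual of $x$ under $A=1-a$ is a single point, so there is no extra averaging), and that exchanging the pointwise minimization with the outer integral is justified (a standard interchange-of-min-and-integral argument, valid because the integrand is minimized independently on each orbit and the family of fair predictors is exactly the family of orbit-measurable functions). A secondary subtlety is confirming that the class of attainable ``fair'' predictors is \emph{all} orbit-constant functions and not a strict subclass — but this is precisely the content of \Cref{thm:perfect-te}, which I am free to invoke. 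Once these foundational points are in place, the optimization itself is the routine pointwise convex minimization sketched above, and the $\argmin$ (rather than $=$) in the statement correctly reflects possible non-uniqueness on null sets.
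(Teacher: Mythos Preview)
Your proposal is correct and follows essentially the same route as the paper: invoke \Cref{thm:perfect-te} to reduce the CF constraint to constancy on factual--counterfactual pairs, decompose the risk over these pairs, and minimize pointwise the resulting two-term convex objective to recover the $p(A)$-weighted average of $\phi^*$. The only practical difference is that the paper parametrizes each orbit by the latent $u$ (writing $\E_{X,A,Y}[\cdot]=\E_U\E_A\E_{Y|U,A}[\cdot]$ via invertibility and $U\independent A$), which makes the ``integrate over orbits'' step literally $\E_U[\cdot]$ and thereby sidesteps the quotient-measure bookkeeping you flag as the main obstacle.
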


This result suggests that, if we have to access to ground truth counterfactuals, a simple algorithm using a (potentially) unfair model could achieve strong fairness and accuracy.
Built upon the above result, we are ready to characterize the inherent trade-off between CF and model performance by the following theorem.

\begin{theorem}\label{thm:excess-risk-cf}
The inherent trade-off between CF and predictive performance, characterized by the excess risk of the Bayes optimal predictor under the CF constraint, is given by 
$$
\mathcal{R}^*_{\text{CF}} - \mathcal {R}^* 
= \sigma^2_A  \E_{U}\left [\left(\E_{Y\mid U=u, A=a} [Y] - \E_{Y\mid U=u, A=1-a} [Y] \right)^2\right],
$$
for regression tasks using squared $L_2$ loss where $\sigma^2_A$ denotes the variance of $A$; and 
$$
\mathcal{R}^*_{\text{CF}} - \mathcal {R}^* 
= I(A; Y \mid U),
$$
for classification tasks using cross-entropy loss.
\end{theorem}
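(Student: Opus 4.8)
The plan is to prove both identities by the same two moves: (i) use the invertibility in \Cref{asm:main} to change variables from $(X,A)$ to $(U,A)$, which turns $\phi^*$ and its counterfactual evaluation into conditional moments of $Y$ given $(U,A)$; and (ii) substitute the explicit mixture form of the optimal fair predictor from \Cref{thm:opt-predictor} into the loss-specific bias decomposition. \textbf{Setup.} Since $F_X(\cdot,a)$ is invertible, conditioning on $(X,A)=(x,a)$ is equivalent to conditioning on $(U,A)=(u,a)$ with $u={F_X^*}^{-1}(x,a)$, and the counterfactual $x_{1-a}=G^*(x,a,1-a)$ is the point with the \emph{same} $u$ and intervened attribute $1-a$. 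Write $m_a(u)\triangleq\E[Y\mid U=u,A=a]$, so $\phi^*(x,a)=m_a(u)$, and (in the classification case) $P^u_a\triangleq p(Y\mid U=u,A=a)$. By \Cref{thm:opt-predictor} together with \Cref{thm:perfect-te}, $\phi^*_{\mathrm{CF}}$ depends on $(x,a)$ only through $u$ and equals the \emph{same} $A$-mixture for $a=0$ and $a=1$, namely $\psi(u)\triangleq p(A{=}0)\,(\cdot\mid U{=}u,A{=}0)+p(A{=}1)\,(\cdot\mid U{=}u,A{=}1)$. Throughout I will use \Cref{asm:main}(1), $A\independent U$, to replace $p(U\mid A)$ by $p(U)$ and $p(A\mid U)$ by $p(A)$ under the integrals; this is also why $\psi(u)=p(Y\mid U=u)$ in the classification case.

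\textbf{Regression.} Apply the bias--variance identity: for any $Y$-independent predictor $\phi$, $\E[\ell(\phi(X,A),Y)]=\E[(\phi(X,A)-\phi^*(X,A))^2]+\mathcal{R}^*$ (the cross term vanishes since $\E[\phi^*(X,A)-Y\mid X,A]=0$), so $\mathcal{R}^*_{\mathrm{CF}}-\mathcal{R}^*=\E[(\phi^*_{\mathrm{CF}}(X,A)-\phi^*(X,A))^2]$. From the mixture form, $\phi^*_{\mathrm{CF}}(x,a)-\phi^*(x,a)=p(A{=}1{-}a)\bigl(m_{1-a}(u)-m_a(u)\bigr)$. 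Squaring, taking $\E_{(U,A)}$, factoring out $\E_U$ by independence, and using $(m_{1-a}(U)-m_a(U))^2=(m_1(U)-m_0(U))^2$, the $A$-average collapses via $\sum_{a}p(A{=}a)\,p(A{=}1{-}a)^2=p(A{=}1)\bigl(1-p(A{=}1)\bigr)=\sigma^2_A$, yielding $\mathcal{R}^*_{\mathrm{CF}}-\mathcal{R}^*=\sigma^2_A\,\E_U[(m_a(U)-m_{1-a}(U))^2]$.

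\textbf{Classification.} Use ``cross entropy $=$ entropy $+$ KL'': for each $(u,a)$, $\E_{Y\mid U=u,A=a}[\ell(\psi(u),Y)]=H(P^u_a)+\KL\!\bigl(P^u_a\,\|\,\psi(u)\bigr)$. Averaging over $(U,A)\sim p(U,A)$, the entropy terms assemble to $\E_{(U,A)}[H(p(Y\mid U,A))]=H(Y\mid U,A)$, which equals $H(Y\mid X,A)=\mathcal{R}^*$ by invertibility; the divergence terms give $\E_U\bigl[\sum_a p(A{=}a)\,\KL(P^U_a\,\|\,\psi(U))\bigr]$. Since $A\independent U$ makes $\psi(u)=p(Y\mid U=u)$ and $p(A{=}a)=p(A{=}a\mid U=u)$, the bracket is precisely $I(A;Y\mid U=u)$ by definition of conditional mutual information, so the sum is $\E_{u\sim p(U)}[I(A;Y\mid U=u)]=I(A;Y\mid U)$, as claimed.

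\textbf{Main obstacle.} The routine parts (bias--variance, the cross-entropy decomposition, the binary-$A$ algebra) are not where care is needed. The delicate step is making the change of variables in the Setup rigorous: arguing that the deterministic CGM $G^*$ transports the conditional law of $Y$ given $(X,A)$ to the one given $(U,A)$ while preserving the $U$-coordinate under the intervention, so that $\phi^*$ evaluated at a point and at its counterfactual become $m_a(u)$ and $m_{1-a}(u)$ (and the corresponding $P^u_a$ in the classification case); and, in the classification case, recognizing the leftover weighted-KL quantity as exactly the conditional mutual information once $A\independent U$ is invoked. A closing remark would note that for non-binary $A$ the same argument goes through with the two-point mixture replaced by the $|\mathcal{A}|$-point mixture and $\sum_a p(A{=}a)p(A{=}1{-}a)^2$ replaced by its analogue.
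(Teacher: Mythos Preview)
Your proposal is correct and follows essentially the same approach as the paper: both use the bias--variance identity (squared loss) or the cross-entropy $=$ entropy $+$ KL decomposition (classification), substitute the explicit mixture form of $\phi^*_{\mathrm{CF}}$ from \Cref{thm:opt-predictor}, change variables from $(X,A)$ to $(U,A)$ via invertibility, and then use $A\independent U$ to collapse the $A$-average into $\sigma_A^2$ in the regression case and to identify the weighted-KL term with $I(A;Y\mid U)$ in the classification case. Your presentation is slightly more streamlined (invoking the decompositions by name rather than expanding them line by line), but the mathematical content and the order of the key moves match the paper's proof.
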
 

Remarkably, the excess risks are completely characterized by the \textit{inherent} dependency between $Y$ and $A$ as determined by the underlying causal mechanism, similar to non-causal based group fairness \citep{chzhen2020fair,xian2023fair}. 
Moreover, 
they lower bound the excess risk of all possible predictors in order to achieve perfect CF.

\subsection{Method with Incomplete Causal Knowledge}\label{sec:practical-alg}

In this section, we aim to address CF in the scenario where causal knowledge is limited.
Inspired by \Cref{thm:opt-predictor}, we first present a simple plugin method as summarized in \Cref{alg:pcf}.
For regression tasks, $\hat{\mu}$ is the final output, and for classification tasks, $\hat{\mu}$ represents the probability of $Y=1$, i.e., $p(Y=1|X=x,A=a)=\E[Y|X=x,A=a]$. 
It is noteworthy that PCF is agnostic to the training of predictor $\phi$ that can be determined by the user freely. 
In fact, with access to the oracle CGM $G^*$, then PCF would achieve perfect CF as proved in the next result.
\begin{algorithm}[!ht]
\caption{Plug-in Counterfactual Fairness (PCF)}
\label{alg:pcf}
\begin{algorithmic}
\STATE {\bfseries Input:} Pretrained probabilistic prediction predictor $\phi: \mathcal{X} \times \mathcal{A} \to \mathcal{Y}$, CGM $G$, test datapoint $(x,a)$, prior distribution $p$ of $A$
\STATE {\bfseries Output:} Predicted output $\hat{\mu}$
\STATE $\hat{x}_{1-a} \gets G(x,a,1-a)$
\STATE $\hat{\mu} \gets p(A=a)\phi(x, a) + p(A=1-a)\phi(\hat{x}_{1-a},1-a)$
\end{algorithmic}
\end{algorithm}
\begin{proposition}\label{thm:pcf-perfect}
    Given that $G$ is the ground truth counterfactual generating mechanism, i.e., $G(x,a,a') = x_{a'}, \forall (x,a,a')$, \Cref{alg:pcf} achieves perfect CF for any pretrained predictor $\phi$.
\end{proposition}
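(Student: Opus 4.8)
The plan is to invoke \Cref{thm:perfect-te} to turn ``perfect CF'' into a pointwise functional identity, and then verify that identity using a symmetry (involution) property of the oracle counterfactual generating mechanism. Write $\hat\mu(x,a) \triangleq p(A\!=\!a)\,\phi(x,a) + p(A\!=\!1\!-\!a)\,\phi(G(x,a,1\!-\!a),\,1\!-\!a)$ for the function computed by \Cref{alg:pcf}, and recall that under the hypothesis $G = G^*$, so $G(x,a,1-a) = x_{1-a}$. By \Cref{thm:perfect-te}, $\mathrm{TE}(\hat\mu) = 0$ is equivalent to $\hat\mu(x,a) \overset{\text{a.s.}}{=} \hat\mu(x_{1-a},1-a)$ for all $(x,a)$, so it suffices to establish this equality on the support of $(X,A)$.

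The first step is the key lemma: the oracle CGM is an involution in its intervention argument, i.e., $G^*(G^*(x,a,1-a),\,1-a,\,a) = x$ for all $(x,a)$. This is exactly where the invertibility part of \Cref{asm:main} does its work. Under that assumption the abduction step of a counterfactual query on $X$ is deterministic and equals the inverse of $F_X(\cdot,a)$: writing $u = F_X^{-1}(x,a)$, the definition of the deterministic counterfactual gives $x_{1-a} = G^*(x,a,1-a) = F_X(u,1-a)$. Applying the same reasoning to the point $(x_{1-a},1-a)$ with intervention value $a$: abduction returns $F_X^{-1}(F_X(u,1-a),1-a) = u$, and the action/prediction step returns $F_X(u,a) = x$. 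Hence the counterfactual of the counterfactual is the original point.

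Given the involution lemma, I would simply substitute. Evaluating $\hat\mu$ at $(x_{1-a},1-a)$, its counterfactual under intervention $A=a$ is $G^*(x_{1-a},1-a,a) = x$, so
$$\hat\mu(x_{1-a},1-a) = p(A\!=\!1\!-\!a)\,\phi(x_{1-a},1-a) + p(A\!=\!a)\,\phi(x,a),$$
which equals $\hat\mu(x,a)$ by commutativity of addition. Thus $\hat\mu(x,a) = \hat\mu(x_{1-a},1-a)$ for every $(x,a)$ in the support — in particular almost surely — and \Cref{thm:perfect-te} yields $\mathrm{TE}(\hat\mu) = 0$, i.e., PCF is perfectly CF regardless of the pretrained $\phi$.

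The only nontrivial step is the involution lemma; everything afterward is bookkeeping with the two-term average. Even the lemma is intuitively immediate from the invertibility of $F_X(\cdot,a)$, but it is worth stating carefully since it is the one place the structural assumptions are genuinely used, and it makes transparent why the result holds for \emph{any} $\phi$: the two summands defining $\hat\mu(x,a)$ and $\hat\mu(x_{1-a},1-a)$ involve the same pair of evaluations of $\phi$, merely with the factual and counterfactual branches (and their prior weights) swapped.
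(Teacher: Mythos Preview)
Your proof is correct and follows essentially the same approach as the paper: both reduce perfect CF to the pointwise identity $\hat\mu(x,a)=\hat\mu(x_{1-a},1-a)$ via \Cref{thm:perfect-te} and then verify it using the involution property $G^*(x_{1-a},1-a,a)=x$ of the oracle CGM. The paper's proof invokes this involution implicitly (``by the properties of the invertible and ground truth CGM''), whereas you spell it out explicitly from the invertibility of $F_X(\cdot,a)$, which is a nice clarification but not a different argument.
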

Note that this proposition only requires access to ground truth $G^*$ and holds valid \emph{for any pretrained predictor $\phi$}. 
If $\phi$ is further accurate, then the corresponding PCF is able to achieve high accuracy as well, which is empirically validated in the experiments.

\subsubsection{Given estimated $G$}
Acquiring counterfactuals in practice can be a challenging task and could lead to estimation errors.
In this section we provide a theoretical analysis on this. 
Specifically, the theorem below bounds the TE and excess risk due to the use of estimated counterfactuals.

\begin{theorem}\label{thm:pcf-fair-error-bound}
   Given an optimal predictor $\phi^*(x, a)$, suppose it is L-lipschitz continuous in $x$, and the counterfactual estimation error is bounded, i.e.,
   $$ \max_{X,A}\|G^*(x_a,a,1-a) - \hat{G}(x_a,a,1-a)\|_2 \leq \varepsilon$$ 
   for some $\varepsilon \geq 0$, where $G^*$ and $\hat G$ represent the ground truth and estimated CGMs respectively.
   Then, the total effect (TE) of \Cref{alg:pcf} based on $\hat{G}$ is bounded by $ L\varepsilon$.
   Moreover,
    for squared $L_2$ loss, the excess risk is bounded by $\sigma_A^2  L^2 \varepsilon^2 + 2 \sigma_A^2 L \varepsilon 
     \E_{U} [|\E[Y \mid U=u, A=1] - \E[Y \mid U=u, A=0]|] $, and for cross-entropy loss \footnote{Here we assume the logits are L-lipschitz continuous in $x$.}, the excess risk is bounded by $L\varepsilon$.
\end{theorem}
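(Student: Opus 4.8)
The plan is to run \Cref{alg:pcf} with the Bayes-optimal $\phi^*$ and the estimated mechanism $\hat{G}$, yielding $\hat{\phi}_{\mathrm{CF}}(x,a) \triangleq p(A{=}a)\,\phi^*(x,a) + p(A{=}1{-}a)\,\phi^*(\hat{G}(x,a,1{-}a),\,1{-}a)$, and to compare it \emph{pointwise} with the oracle predictor $\phi^*_{\mathrm{CF}}$ of \Cref{thm:opt-predictor} (built from the true $G^*$), which attains the optimal fair risk $\mathcal{R}^*_{\mathrm{CF}}$; the excess-risk bounds are read as excess over $\mathcal{R}^*_{\mathrm{CF}}$. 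Everything rests on one observation: since $\|\hat{G}(\cdot)-G^*(\cdot)\|_2\le\varepsilon$ uniformly, replacing a true counterfactual $x_{1-a}=G^*(x,a,1{-}a)$ by $\hat{x}_{1-a}=\hat{G}(x,a,1{-}a)$ moves the input to $\phi^*$ by at most $\varepsilon$, hence moves $\phi^*$ by at most $L\varepsilon$ in the regression case (Lipschitzness of $\phi^*$), or moves the logit $\ell^*=\sigma^{-1}(\phi^*)$ by at most $L\varepsilon$ in the classification case (the footnote assumption), with $\sigma$ the logistic sigmoid.

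For the TE bound, fix $(x,a)$, put $x'=G^*(x,a,1{-}a)$, and note that invertibility of $F_X(\cdot,a)$ forces $G^*(x',1{-}a,a)=x$. Expanding $\hat{\phi}_{\mathrm{CF}}(x,a)-\hat{\phi}_{\mathrm{CF}}(x',1{-}a)$ and regrouping by the weights $p(A{=}a),p(A{=}1{-}a)$, it telescopes into exactly two residuals, $p(A{=}a)\,[\phi^*(x,a)-\phi^*(\hat{G}(x',1{-}a,a),a)]$ and $p(A{=}1{-}a)\,[\phi^*(\hat{G}(x,a,1{-}a),1{-}a)-\phi^*(x',1{-}a)]$, where pairing the factual term at $(x,a)$ with the counterfactual term at $(x',1{-}a)$ uses $G^*(x',1{-}a,a)=x$. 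Applying the uniform estimation-error bound at $(x',1{-}a,a)$ and at $(x,a,1{-}a)$, the observation above bounds each residual by its weight times $L\varepsilon$ (in the classification case by $\tfrac14 L\varepsilon$, as $\sigma$ is $\tfrac14$-Lipschitz), so the pointwise gap is $\le L\varepsilon$; taking $\E[\,|\cdot|\,]$ in the definition of $\mathrm{TE}$ gives $\mathrm{TE}(\hat{\phi}_{\mathrm{CF}})\le L\varepsilon$.

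For squared $L_2$ loss use the identity $\mathcal{R}(\phi)-\mathcal{R}^*=\E[(\phi(X,A)-\phi^*(X,A))^2]$, so with $D_1\triangleq\hat{\phi}_{\mathrm{CF}}-\phi^*_{\mathrm{CF}}$ and $D_2\triangleq\phi^*_{\mathrm{CF}}-\phi^*$ we get $\mathcal{R}(\hat{\phi}_{\mathrm{CF}})-\mathcal{R}^*_{\mathrm{CF}}=\E[D_1^2]+2\,\E[D_1D_2]$. Now $|D_1(x,a)|=p(A{=}1{-}a)\,|\phi^*(\hat{x}_{1-a},1{-}a)-\phi^*(x_{1-a},1{-}a)|\le p(A{=}1{-}a)\,L\varepsilon$, while $|D_2(x,a)|=p(A{=}1{-}a)\,|\phi^*(x_{1-a},1{-}a)-\phi^*(x,a)|=p(A{=}1{-}a)\,\big|\E[Y\mid U{=}u,A{=}1{-}a]-\E[Y\mid U{=}u,A{=}a]\big|$, the last step using invertibility to write $\phi^*(\cdot)=\E[Y\mid U,A=\cdot]$ on the pair indexed by $u$. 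Bounding $\E[D_1D_2]\le\E[|D_1|\,|D_2|]$, factoring out $L\varepsilon$, using $A\perp U$ (\Cref{asm:main}) and the elementary identity $\E[p(A{=}1{-}A)^2]=p(A{=}0)\,p(A{=}1)=\sigma^2_A$ for Bernoulli $A$, the two terms become exactly $\sigma^2_AL^2\varepsilon^2$ and $2\,\sigma^2_AL\varepsilon\,\E_U[\,|\E[Y\mid U{=}u,A{=}1]-\E[Y\mid U{=}u,A{=}0]|\,]$.

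For cross-entropy, both risks are expectations of pointwise cross-entropy against the conditional label probability $p^*(x,a)=\E[Y\mid X{=}x,A{=}a]$, so $\mathcal{R}(\hat{\phi}_{\mathrm{CF}})-\mathcal{R}^*_{\mathrm{CF}}=\E_{X,A}[\,\mathrm{CE}(p^*,\hat{\phi}_{\mathrm{CF}})-\mathrm{CE}(p^*,\phi^*_{\mathrm{CF}})\,]$. The obstacle is that $\mathrm{CE}(p,\cdot)$ is not Lipschitz in the predicted probability (its derivative blows up near $0$ and $1$), so I route through logit space using two facts: (i) $z\mapsto\mathrm{CE}(p,\sigma(z))$ is $1$-Lipschitz, since $\tfrac{d}{dz}\mathrm{CE}(p,\sigma(z))=\sigma(z)-p\in[-1,1]$; and (ii) for fixed $c\in[0,1]$ and weight $w$, the map $z\mapsto\sigma^{-1}(wc+(1-w)\sigma(z))$ is $1$-Lipschitz, which I verify by checking its derivative $\tfrac{(1-w)\sigma(z)(1-\sigma(z))}{Q(1-Q)}\le1$ with $Q=wc+(1-w)\sigma(z)$, an inequality that reduces after expansion to $w\big[c(1-c)+(1-w)(c-\sigma(z))^2\big]\ge0$. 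Because $\hat{\phi}_{\mathrm{CF}}(x,a)$ and $\phi^*_{\mathrm{CF}}(x,a)$ share the same factual summand $p(A{=}a)\phi^*(x,a)$ and differ only through the logit of the counterfactual prediction, (ii) gives $|\sigma^{-1}(\hat{\phi}_{\mathrm{CF}}(x,a))-\sigma^{-1}(\phi^*_{\mathrm{CF}}(x,a))|\le|\ell^*(\hat{x}_{1-a},1{-}a)-\ell^*(x_{1-a},1{-}a)|\le L\varepsilon$, and then (i) gives $|\mathrm{CE}(p^*,\hat{\phi}_{\mathrm{CF}})-\mathrm{CE}(p^*,\phi^*_{\mathrm{CF}})|\le L\varepsilon$ pointwise; taking expectations finishes. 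Verifying the logit-contraction fact (ii) is the only non-routine ingredient — it is exactly what keeps the $1/(Q(1-Q))$ factor from exploding — while everything else is Lipschitz bookkeeping plus the fact that $\phi^*_{\mathrm{CF}}$ attains $\mathcal{R}^*_{\mathrm{CF}}$ (\Cref{thm:opt-predictor}).
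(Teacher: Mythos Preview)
Your proof is correct. For the TE bound and the squared-$L_2$ excess risk, your argument mirrors the paper's: decompose the factual--counterfactual gap (resp.\ the risk) through the oracle fair predictor $\phi^*_{\mathrm{CF}}$, then control each residual via Lipschitzness and the uniform CGM error. The only cosmetic difference is that you invoke the Bayes-risk identity $\mathcal{R}(\phi)-\mathcal{R}^*=\E[(\phi-\phi^*)^2]$ up front, which makes the two-term split $\E[D_1^2]+2\E[D_1D_2]$ a bit cleaner than the paper's direct square expansion; the subsequent bounds on $|D_1|$, $|D_2|$ and the use of $\E[p(A{=}1{-}A)^2]=\sigma_A^2$ are identical.

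For cross-entropy, your route is genuinely different. The paper bounds $Y\log(\phi^*_{\mathrm{CF}}/\hat{\phi}_{\mathrm{CF}})$ and $(1-Y)\log((1-\phi^*_{\mathrm{CF}})/(1-\hat{\phi}_{\mathrm{CF}}))$ separately via the log-sum inequality, then exploits the observation that the two resulting log-ratios carry opposite signs so that their sum can be upper-bounded by a single logit difference. You instead chain two Lipschitz facts in logit space: (i) $z\mapsto\mathrm{CE}(p,\sigma(z))$ is $1$-Lipschitz, and (ii) the mixture-then-inverse-sigmoid map $z\mapsto\sigma^{-1}(wc+(1-w)\sigma(z))$ is $1$-Lipschitz. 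Both routes land on the same endpoint, namely that the pointwise excess is controlled by $|\ell^*(\hat{x}_{1-a},1{-}a)-\ell^*(x_{1-a},1{-}a)|\le L\varepsilon$. The paper's log-sum argument is ad hoc but requires no auxiliary lemma; your contraction fact (ii) is a reusable structural statement that explains exactly why the $1/(Q(1-Q))$ factor does not blow up, at the cost of the algebraic verification you sketch (which is indeed correct: the derivative bound reduces to $w[c(1-c)+(1-w)(c-\sigma(z))^2]\ge 0$).
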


Note that $\sigma_A$ and $ \E_{U} [|\E[Y \mid U=u, A=1] - \E[Y \mid U=u, A=0]|] $ are inherent characteristic of the underlying mechanism and is independent of the counterfactual estimation.
This suggests that if the counterfactuals are not too far away and $\phi^*$ is smooth, then fairness and prediction performance will not be significantly affected.
In practice, CGM in \Cref{alg:pcf} can be obtained using counterfactual estimation methods, as discussed in \Cref{sec:conclusion}.

\subsubsection{Given estimated $G$ and $\phi$}\label{sec:practical-alg-g-phi}

In the previous section, we discussed how counterfactual estimation error directly impacts the performance of PCF in terms of CF and predictive performance.
Here, we consider the situation where $\phi$ also needs to be estimated. 
We first note that the degradation in fairness remains the same as previous result
\begin{remark}
    The bound of TE given $\hat{\phi}$ and $\hat{G}$ follows that in \Cref{thm:pcf-fair-error-bound}.
\end{remark}

The proof is straightforward since the original proof in \Cref{thm:pcf-fair-error-bound} does not use any characteristic of optimality.
To achieve good predictive performance, a natural approach is to train $\phi$ on the observed data via Empirical Risk Minimization (ERM), which should fit the predictor well given sufficient samples and a reasonable predictor class.
However, ERM can only approximate Bayes optimality within the support of the training data. 
Outside this support, its performance can deteriorate significantly, as extensively studied in areas such as Domain Adaptation \citep{farahani2021brief} and Domain Generalization \citep{zhou2022domain}. 
Consequently, when integrated with an approximate $G$, we may encounter the issue where $p(\hat{X}_{A=1-a}|A=a) \neq p(X|A=1-a)$, inducing a distribution shift problem (Note that these would be equal given the graph in \Cref{fig:causal-graph} and Pearl's rules) \citep{kulinski2023shift}.
To mitigate this, we suggest improving $\phi$ on the estimated counterfactual distribution. 
More formally we define the following objective called Counterfactual Risk Minimization (CRM):
$$
\min_{\phi} \E_{X,A,Y} [\ell(\phi(X,A),Y) + \ell(\phi(G(X,A,1-A),A),Y)] 
$$

This can be achieved either by augmenting the original training dataset or by fine-tuning with estimated counterfactual samples. 
The choice between training from scratch or fine-tuning depends on the scale of the experiment and computational constraints.
 It is important to note that the $Y$ corresponding to the estimated counterfactual should remain the same with that of the factual samples. 
 While the optimal prediction for the counterfactual may differ from that of the original data, under the constraint of perfect CF, a predictor is required to predict the same outcome to counterfactual pairs. 
Hence, the optimal solution will change. This is exactly what causes the excess risk we characterized in \Cref{thm:excess-risk-cf}.
Furthermore, we can prove that, given the ground truth $G$, CRM yields the same optimal solution as PCF.
Since this result is dependent on the ground truth $G$, we provide a more formal statement in \Cref{app-sec:crm} to ensure consistency.

In summary, this section discusses how to improve the estimation of $\phi$ under counterfactual estimation error. 
We propose using data augmentation or fine-tuning based on practical scenarios. 
Additionally, in domains with abundant off-the-shelf pre-trained models \citep{bommasani2021opportunities}, we can potentially avoid this issue by using these models as a good proxy for $\phi$.

\section{Related Works}

\textbf{Fairness Notions} 
Fair Machine Learning has accumulated a vast literature that proposes various notions to measure fairness issues of machine learning models. 
Representative fairness notions can be categorized into three classes. 
\textit{Group fairness}, such as demographic parity \citep{pedreshi2008discrimination} and equalized odds \citep{hardt2016equality}, requires certain group-level statistical independence between model predictions and individuals' demographic information. 
Despite its conceptual simplicity, group fairness is known for ruling out perfect model performance \citep{hardt2016equality} and may allow for bias against certain individuals \citep{corbett2023measure}. 
\textit{Individual fairness} \citep{dwork2012fairness}, on the other hand, asks a model to treat similar individuals similarly. 
However, determining the similarity between different individuals is often highly task-specific and open-ended.
Recently, \textit{counterfactual fairness} (CF, \citet{kusner2017counterfactual}) further takes the causal relationship of data attributes into consideration when measuring fairness. 
In words, counterfactual fairness proposes that a model should treat any individual the same as their \textit{counterfactual} if the individual had been from another demographic group.
As an individual-level notion agnostic to the choice of similarity measure \citep{kusner2017counterfactual},  
CF has recently gained traction \citep{wu2019counterfactual, nilforoshan2022causal, makhlouf2022survey, rosenblatt23counterfactual}.
Motivated by these recent advances, in this work, we focus on the counterfactual fairness. 

\textbf{Methods for Fairness} 
Given an unfair dataset, 
attempts to achieve fairness fall into three categories. 
\textit{Pre-processing} cleans the data before running machine learning models on it, typically by resampling samples or removing undesired attributes \citep{kamiran2012data}.
\textit{In-processing} intervenes the model-training process by incorporating fairness constraints \citep{zafar2017fairness,donini2018empirical,lohaus2020too} or penalties \citep{mohler2018penalized,scutari2021achieving,liu2023simfair}. 
\textit{Post-processing} adjusts the raw model outputs to close the bias gap by, e.g., assigning each demographic group a unique decision threshold \citep{jang2022group}.
Post-processing has been favored as an efficient and practical solution because it does not require retraining the original model \citep{petersen2021post,xian2023fair}.
To achieve CF, \citet{kusner2017counterfactual} applied pre-processing and discarded all descendants of the sensitive feature. 
\citet{chen2024learning} pre-processed the data via orthogonalization and marginal distribution mapping.
\citet{garg2019counterfactual,stefano2020removing,kim2021counterfactual} in-processed the model training by penalizing CF violations but their solutions lack formal CF guarantees and often contain unsatisfactory bias after the intervention \citep{zuo2023counterfactually}.
Recently, \citet{zuo2023counterfactually} proposed another in-processing based solution that is capable of achieving perfect CF and better performance via mixing features.
\citet{ma2023counterfactualfairnesspredictionsusing} leveraged mediators estimated by Generative Adversarial Networks and provided a theoretical guarantee of CF under well-estimated counterfactuals.
However, it is unclear whether their methods are optimal.
\citet{wang2023adjusting} leveraged predictor that satisfies equal counterfactual opportunity criterion to construct a counterfactually fair predictor.
While they provide results on optimality, their findings assume an ideal setting where non-sensitive features are independent of sensitive features.

\textbf{Inherent Trade-off between Fairness and Predictiveness}
Machine learning models are known to suffer from performance drops after fairness interventions \citep{hardt2016equality, menon2018cost, chen2018my}, which is known as the \textit{fairness-utility} trade-offs. 
Recently, inherent trade-offs towards non-causal based fairness such as demographic parity (DP) has been established separately for regression \citep{chzhen2020fair} and classification tasks \citep{xian2023fair}.
The excess risks are characterized by certain distribution distance (i.e., Wasserstein-2 barycenter for regression, and total-variation or Wasserstein-1 barycenter for noiseless or noisy classification) between the conditional distribution of $Y$ given $A$.
A similar trade-off between CF and predictiveness has also been empirically observed \citep{zuo2023counterfactually}. 
Nonetheless, their inherent trade-off remains an open question. 
In this work we take the first step towards this goal and provide a quantitative analysis in both complete and incomplete causal knowledge settings as presented in Section \ref{sec:method}. We hope our work sheds light on future works towards more effective CF.

\section{Experiments}\label{sec:exp}
In this section, we validate our theorems and the effectiveness of our algorithms through experiments on synthetic and semi-synthetic datasets. 
On synthetic datasets, we focus on validating our theorems in settings where our assumptions hold. 
On semi-synthetic datasets, we aim to assess the effectiveness of our methods in more practical scenarios, where limited causal knowledge is available and the invertibility assumption is relaxed.

\paragraph{Metrics}
We consider two metrics in this paper: Error and  Total Effect (TE).
The former evaluates whether each method can achieve its goal, irrespective of fairness.
This is important because we can achieve perfect Counterfactual Fairness by always outputting fixed prediction given whatever input, but that is not useful at all.
The latter is a common metric to evaluate Counterfactual Fairness \citep{kim2021counterfactual,zuo2023counterfactually}.
Given a test set $\mathcal{D}_\test$, Error is defined as $\textnormal{Error} = \frac{1}{|\mathcal{D}_\test|}\sum_{x^{(i)} \in \mathcal{D}_\test} \ell(\yhat (x^{(i)}), y ^{(i)})$ where $y ^{(i)}$ is the ground truth target, $\yhat (x^{(i)})$ is the prediction of $x^{(i)}$, and $\ell$ depends on the task.
TE is defined as $\te = \frac{1}{|\mathcal{D}_\test|} \sum_{x^{(i)} \in \mathcal{D}_\test} |\yhat (x^{(i)}) -\yhat (x_{1-a}^{(i)}) |$ where $x_{1-a}$ is the ground truth counterfactual corresponding to $x^{(i)}$.
Since we only consider binary sensitive attribute, we further define $\te_0  = \frac{1}{|\{i:a^{(i)}=0\}|}\sum_{i:a^{(i)}=0}|\yhat (x^{(i)}) -\yhat (x_{1-a}^{(i)}) | $ and $\te_1  = \frac{1}{|\{i:a^{(i)}=1\}|}\sum_{i:a^{(i)}=1}|\yhat (x^{(i)}) -\yhat (x_{1-a}^{(i)}) | $ to evaluate Counterfactual Fairness for different group respectively.

\paragraph{Methods}
In general, we consider the following methods: 
(1) \textbf{Empirical Risk Minimization (ERM):} Train a classifier on all features without any fairness consideration. 
Specifically $\widehat{y} = \cls(x,a),$ where $\cls$ represents the predictor.
(2) \textbf{Counterfactual Fairness with $U$ (CFU) \citep{kusner2017counterfactual}:} To achieve \cf, CFU proposes to use $U$ for prediction.
Specifically, $\widehat{y} = \cls(u)$.
(3) \textbf{Counterfactual Fairness with fair representation (CFR) \citep{zuo2023counterfactually}:}
CFR proposes to use $U$ and a symmetric version of $x,x_{1-a}$.
Specifically, $\widehat{y} = \cls(\frac{x+x_{1-a}}{2},u)$.
(4) \textbf{Equal Counterfactual Opportunity (ECOCF)\citep{wang2023adjusting}:} 
An ECO predictor is adjusted to become counterfactually fair.
Specifically,
$\hat{y}= p(a)[p(a)\phi(x,a) + (1-p(a))\phi(x,1-a)] + (1-p(a))[(1-p(a))\phi(x_{1-a},1-a) + p(a) \phi(x_{1-a},a)]$
(5) \textbf{PCF\footnote{Essentially PCF with ERM (PCF-ERM). For brevity, we just call it PCF.}:} As introduced in \Cref{alg:pcf}, PCF mixes the output of factual and counterfactual prediction. 
Specifically, $\hat{y} = p(a)\phi(x,a) + (1-p(a))\phi(x_{1-a},1-a)$.
(6) \textbf{PCF with analytic solution (PCF-Ana): } In synthetic experiments, instead of training via ERM, we can directly acquire bayes optimal $\phi$ in closed-form. 
Detailed can be found in \Cref{app-sec:ana-solution}.
(7) \textbf{PCF with CRM (PCF-CRM): } As discussed in \Cref{sec:practical-alg}, it could be hard to get the optimal predictor when there is counterfactual estimation error. 
Here due to the scale of our experiment, we augment the dataset with estimated counterfactuals rather than finetuning.
Specifically, $\phi$ is trained via ERM on the dataset $\mathcal{D}_\train = \{x^{(i)},y^{(i)},a^{(i)}\}_{i=1}^N \cup \{\hat{x}^{(i)}_{1-a},y^{(i)},1-a^{(i)}\}_{i=1}^N $.

\subsection{Synthetic Dataset}
In this section, we consider two regression synthetic datasets and two classification tasks where all of our assumptions in \Cref{asm:main} are satisfied.
The regression tasks are as below
\begin{equation*}
\begin{aligned}[c]
    & \textit{Linear-Reg}\\
    A &\sim \textnormal{Bernoulli}(p_A), U \sim \mathcal{N}(0,1), \epsilon_Y \sim \mathcal{N}(0,1)\\
    X & =  w_A A + w_U U\\
    Y &  = w_X X + w'_U U + w_Y \epsilon_Y\\
\end{aligned}
\quad 
\begin{aligned}[c]
    & \textit{Cubic-Reg}\\
    A &\sim \textnormal{Bernoulli}(p_A), U \sim \mathcal{N}(0,1), \epsilon_Y \sim \mathcal{N}(0,1)\\
    X & =  w_A A + w_U U\\
    Y &  = w_X X^3 + w'_U U + w_Y \epsilon_Y\\
\end{aligned}
\end{equation*}
The classification tasks take the same form except $Y \sim \textnormal{Bernoulli}( \sigma(w_X X + w'_U U + w_Y \epsilon_Y )$ and $Y  \sim \textnormal{Bernoulli}( \sigma(w_X X^3 + w'_U U + w_Y \epsilon_Y ))$ for \textit{Linear-Cls} and \textit{Cubic-Cls} respectively.
More details could be found in \Cref{app-sec:dataset}.
Results are averaged over 5 different runs where the structural model is kept the same but data is resampled.
All results shown in the main paper use KNN based predictor. 
Results with other predictors can be found in \Cref{app-sec:exp-result}.

\paragraph{Optimality of PCF given true counterfactuals}
We first test different methods in situations where all methods have access to ground truth counterfactuals and $U$ as needed.
In \Cref{fig:gt_knn}, we observe that while CFE, CFR and PCF all achieve perfect CF, PCF has lowest predictor error. 
This validates \Cref{thm:opt-predictor} regarding the optimality of PCF under the constraint of CF.
Furthermore, since here ERM can get solution close to optimal predictor (this indicates the plugin $\phi$ used by PCF is also close to being optimal), we can also observe the inherent fairness-utility trade-off discussed in \Cref{thm:excess-risk-cf}.

\begin{figure}[!ht]
    \centering
    \begin{subfigure}[t]{0.22\linewidth}
        \centering
        \includegraphics[width=\linewidth]{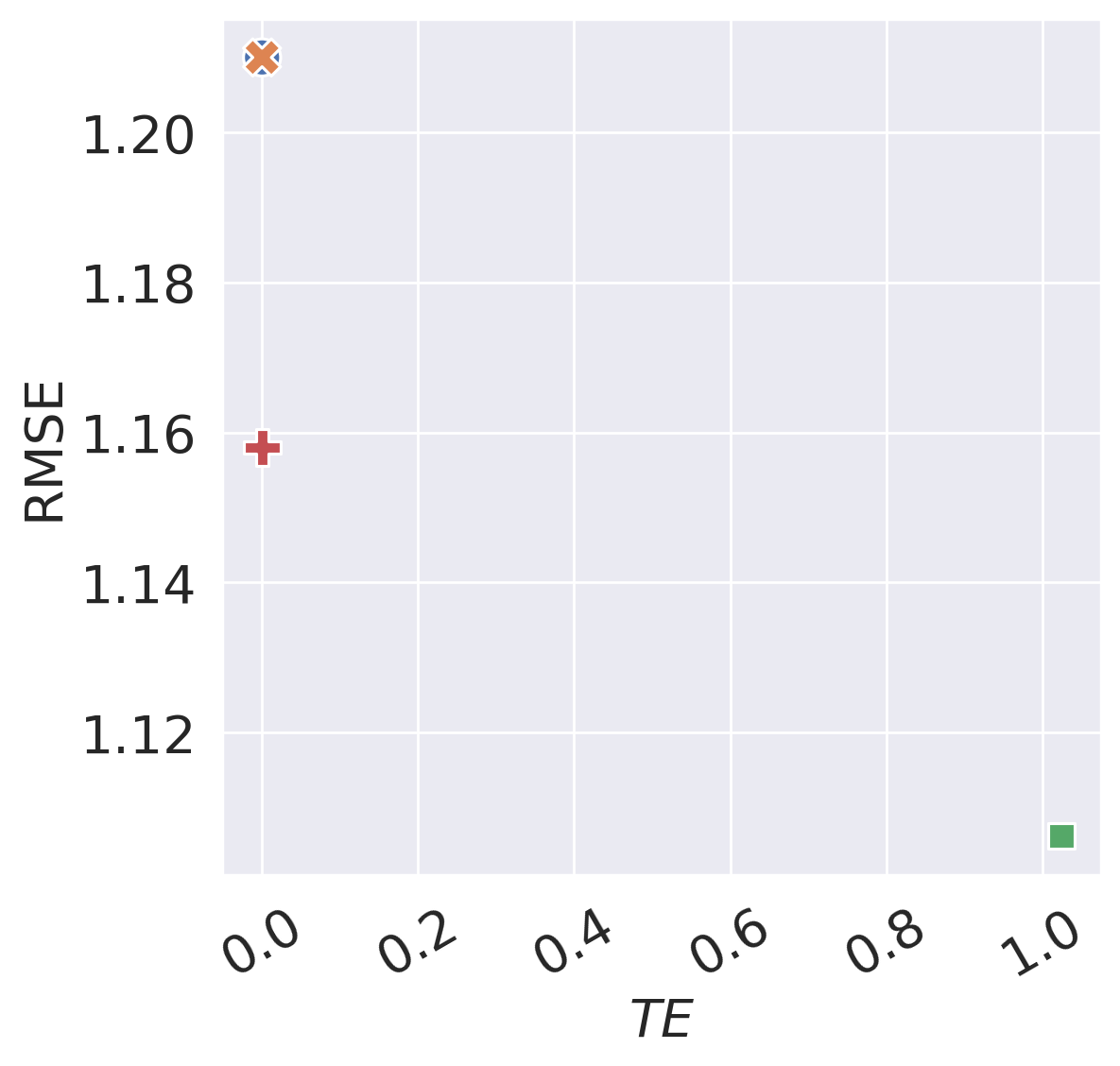}
        \caption{\textit{Linear-Reg}}
    \end{subfigure}
    \begin{subfigure}[t]{0.22\linewidth}
        \centering
        \includegraphics[width=\linewidth]{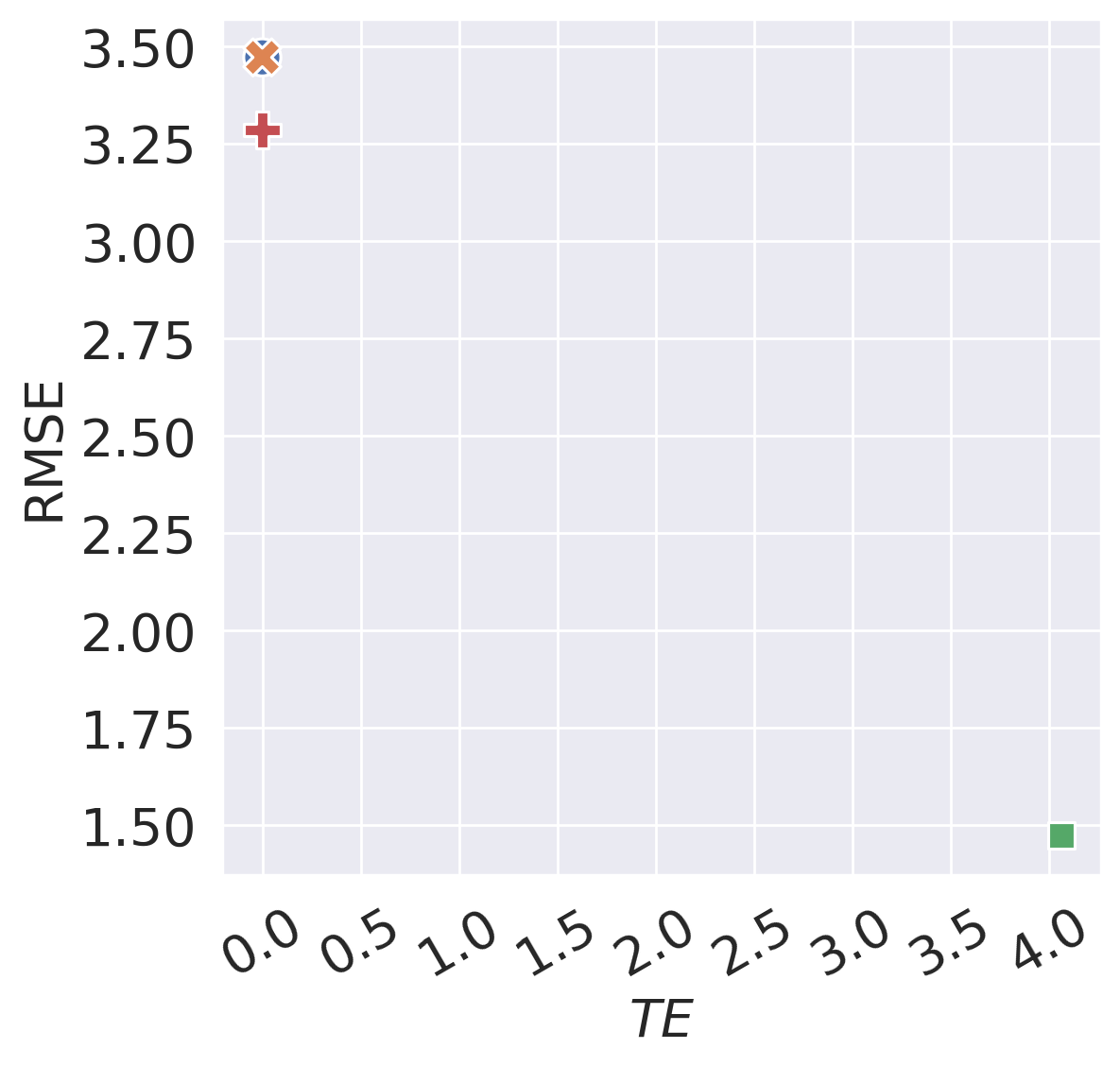}
        \caption{\textit{Cubic-Reg}}
    \end{subfigure}
    \begin{subfigure}[t]{0.22\linewidth}
        \centering
        \includegraphics[width=\linewidth]{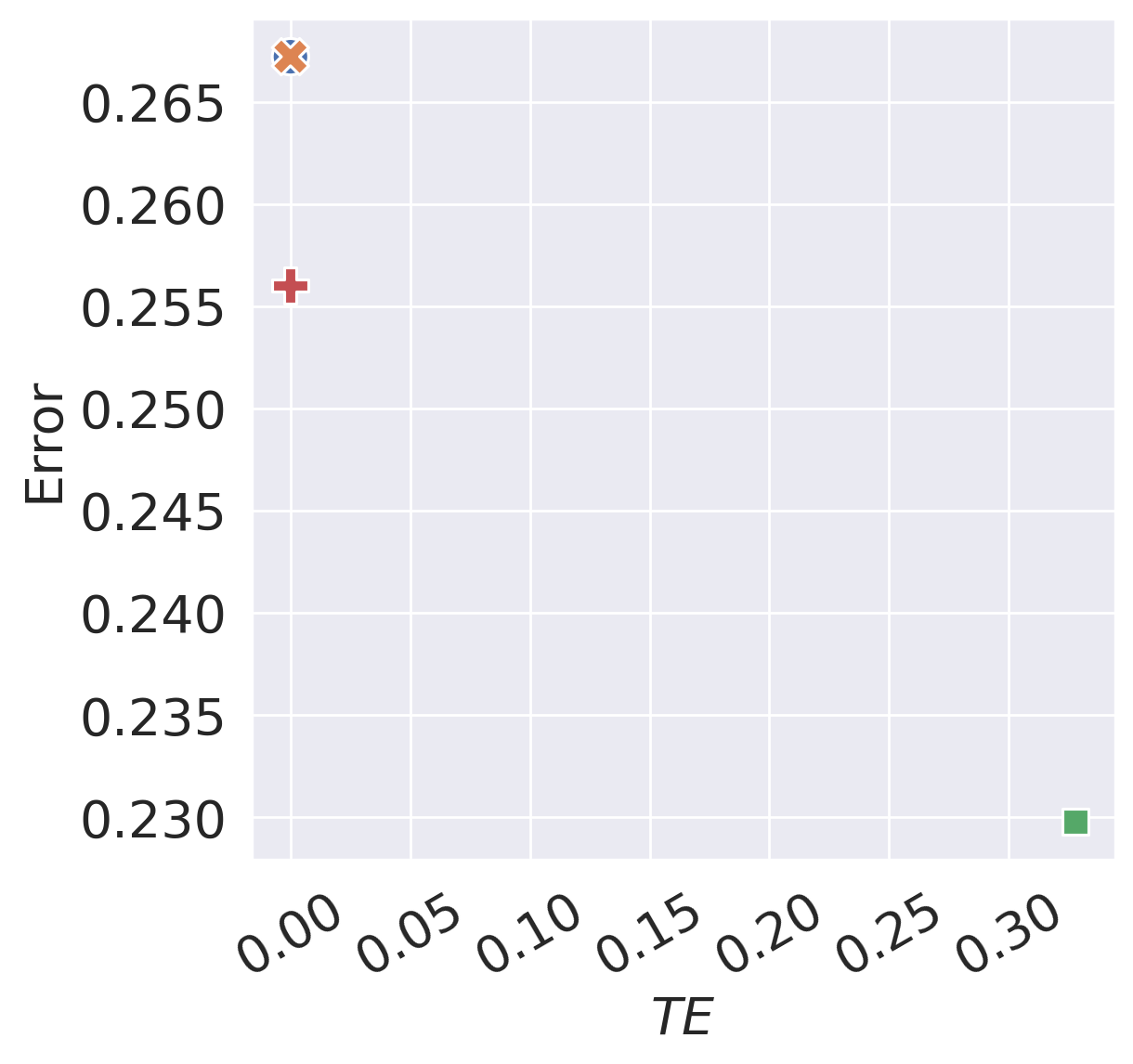}
        \caption{\textit{Linear-Cls}}
    \end{subfigure}
    \begin{subfigure}[t]{0.27\linewidth}
        \centering
        \includegraphics[width=\linewidth]{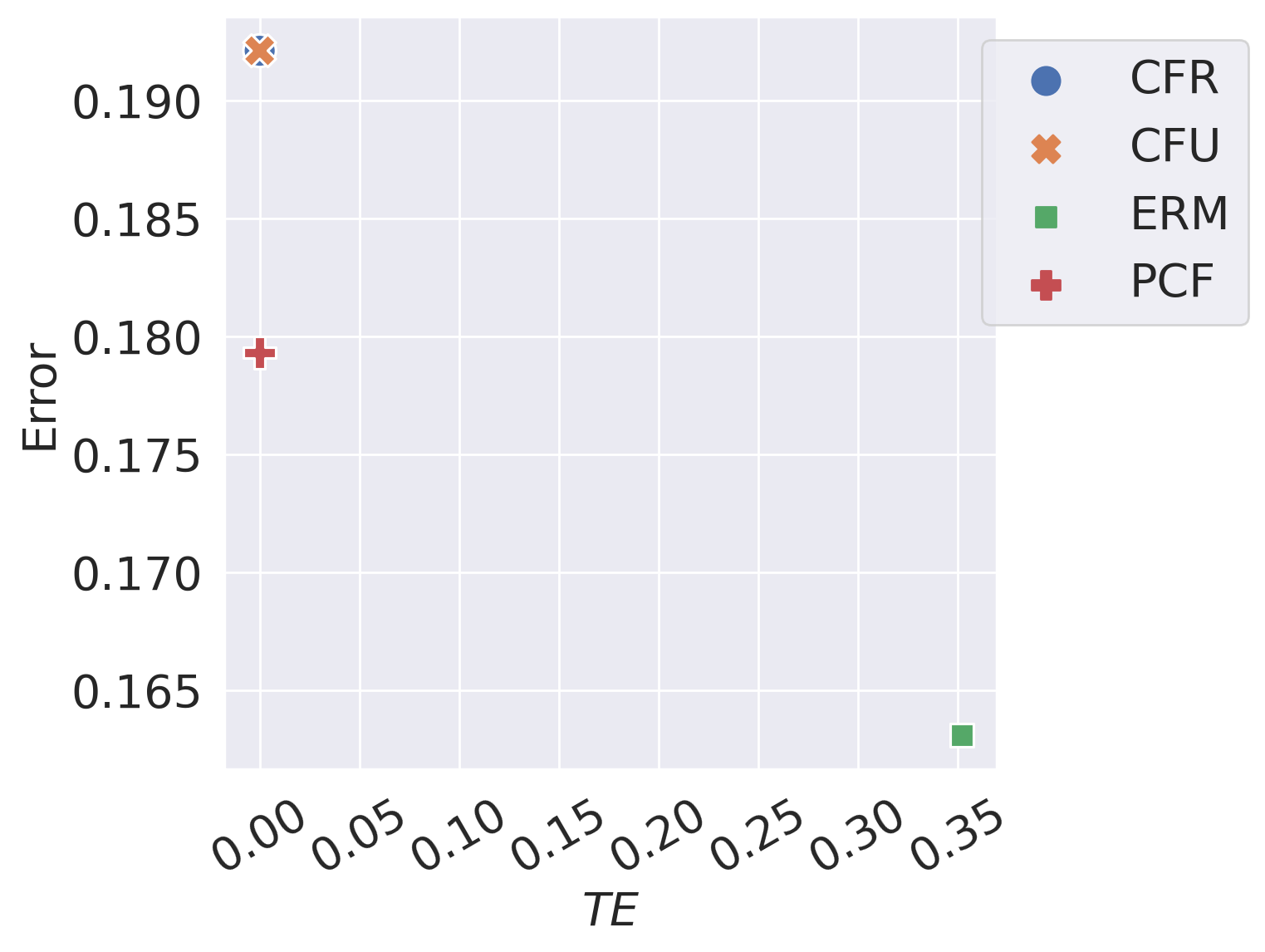}
        \caption{\textit{Cubic-Cls}}
    \end{subfigure}
    \caption{Results on synthetic datasets given ground truth counterfactuals.}
    \label{fig:gt_knn}
\end{figure}

\begin{figure}[!ht]
    \centering
    \begin{subfigure}[t]{0.22\linewidth}
        \centering
        \includegraphics[width=\linewidth]{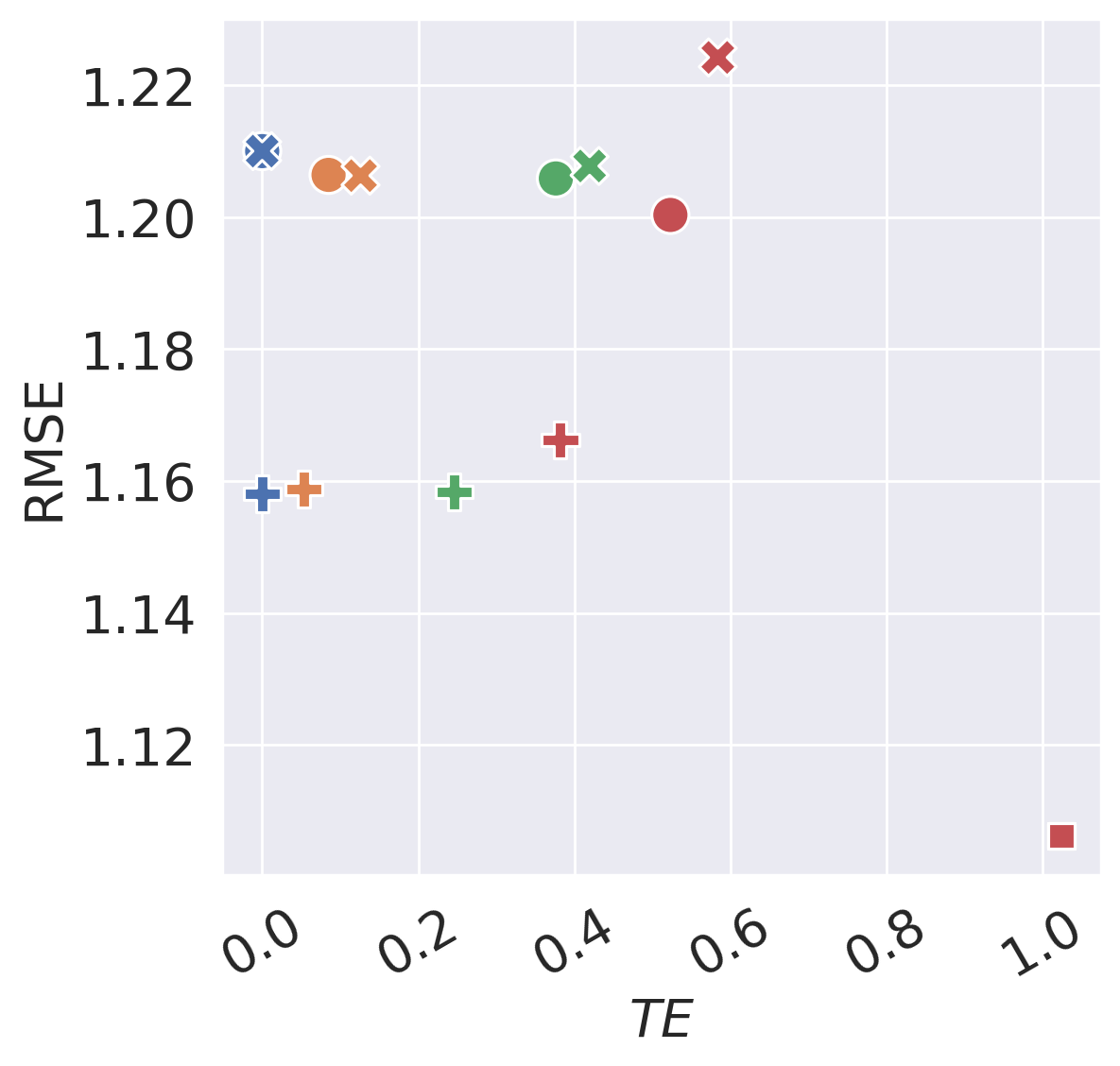}
        \caption{\textit{Linear-Reg}}
    \end{subfigure}
    \begin{subfigure}[t]{0.22\linewidth}
        \centering
        \includegraphics[width=\linewidth]{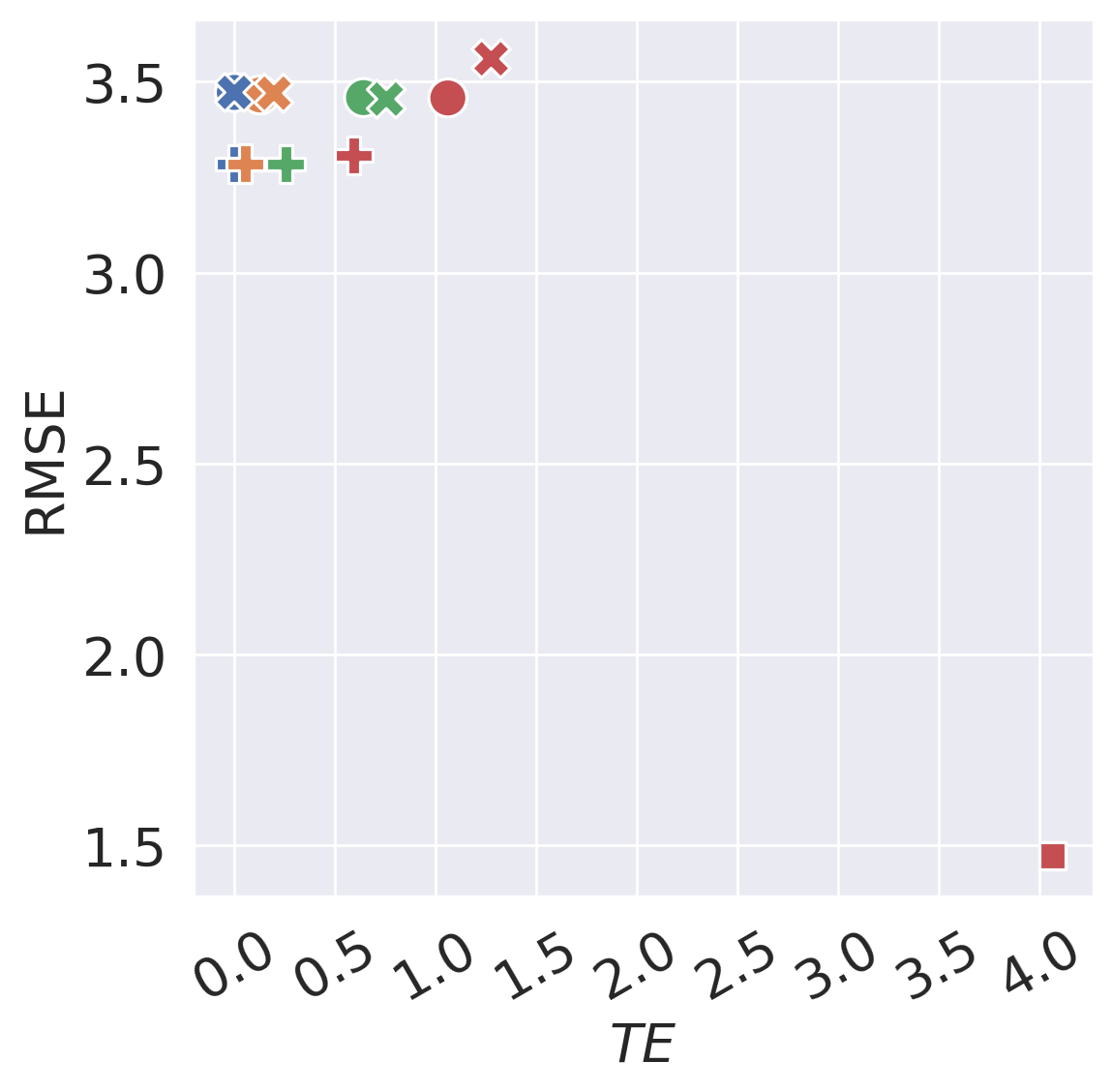}
        \caption{\textit{Cubic-Reg}}
    \end{subfigure}
    \begin{subfigure}[t]{0.22\linewidth}
        \centering
        \includegraphics[width=\linewidth]{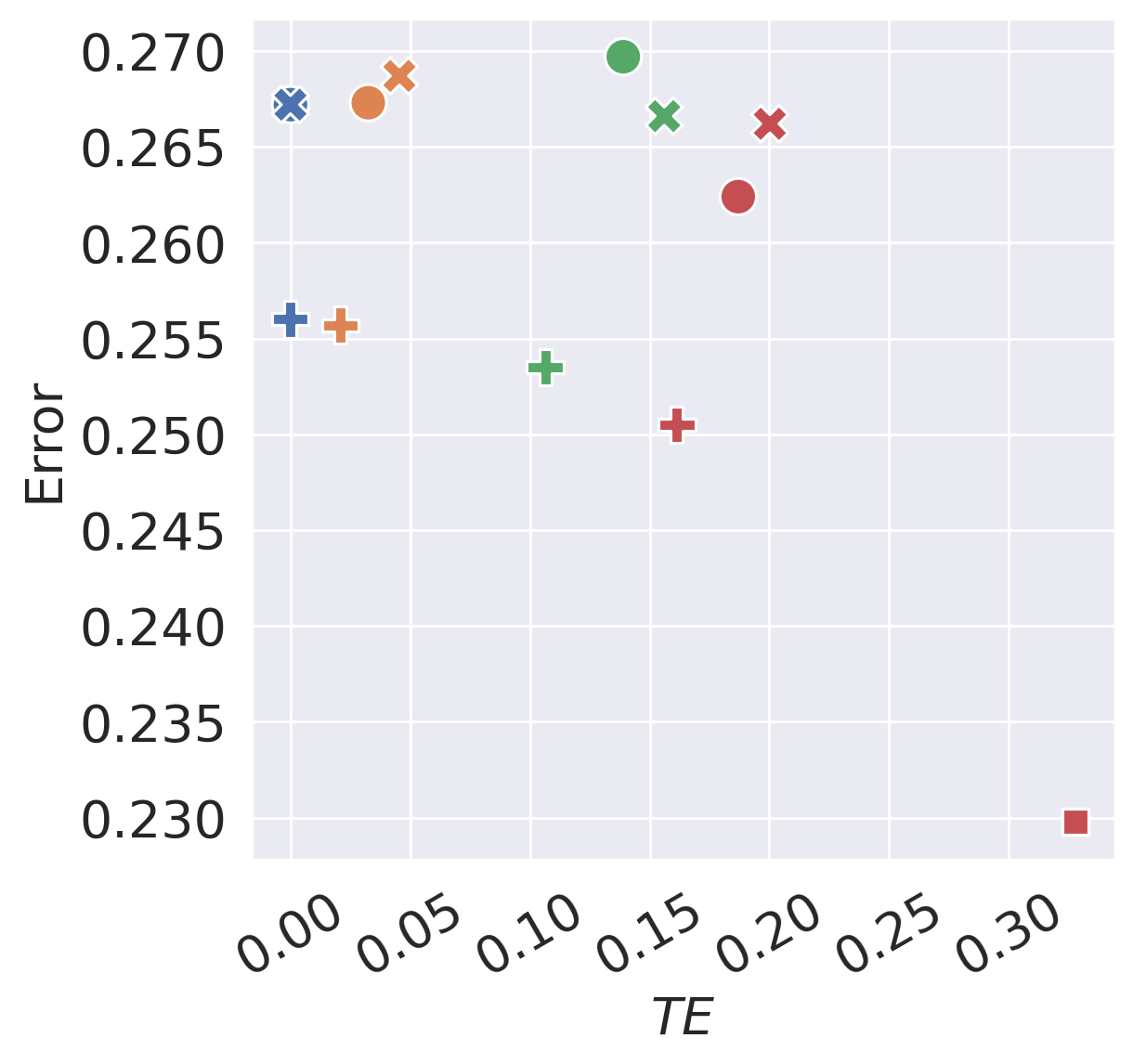}
        \caption{\textit{Linear-Cls}}
    \end{subfigure}
    \begin{subfigure}[t]{0.28\linewidth}
        \centering
        \includegraphics[width=\linewidth]{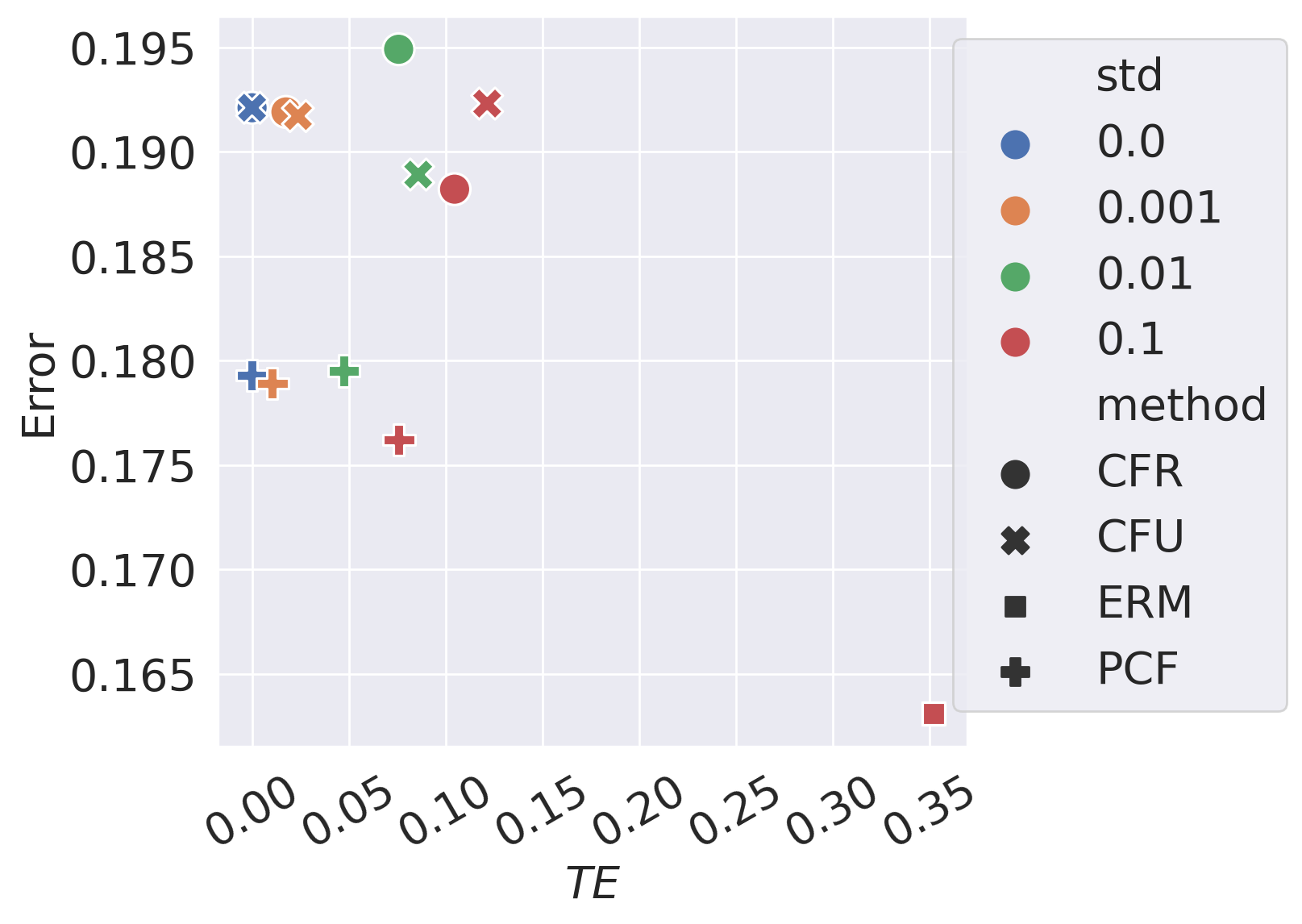}
        \caption{\textit{Cubic-Cls}}
    \end{subfigure}
    \caption{Results on synthetic datasets under counterfactual estimation error. 
    Different color represents different $\alpha$ indicating the standard deviation of the error ($\epsilon \sim \mathcal{N}(0,\alpha)$) while shape represents different algorithms.
    Results with different $\beta$ can be found in \Cref{app-sec:exp-result}.}
    \label{fig:est_knn}
\end{figure}

\begin{figure}[!ht]
    \centering
    \begin{subfigure}[t]{0.22\linewidth}
        \centering
        \includegraphics[width=\linewidth]{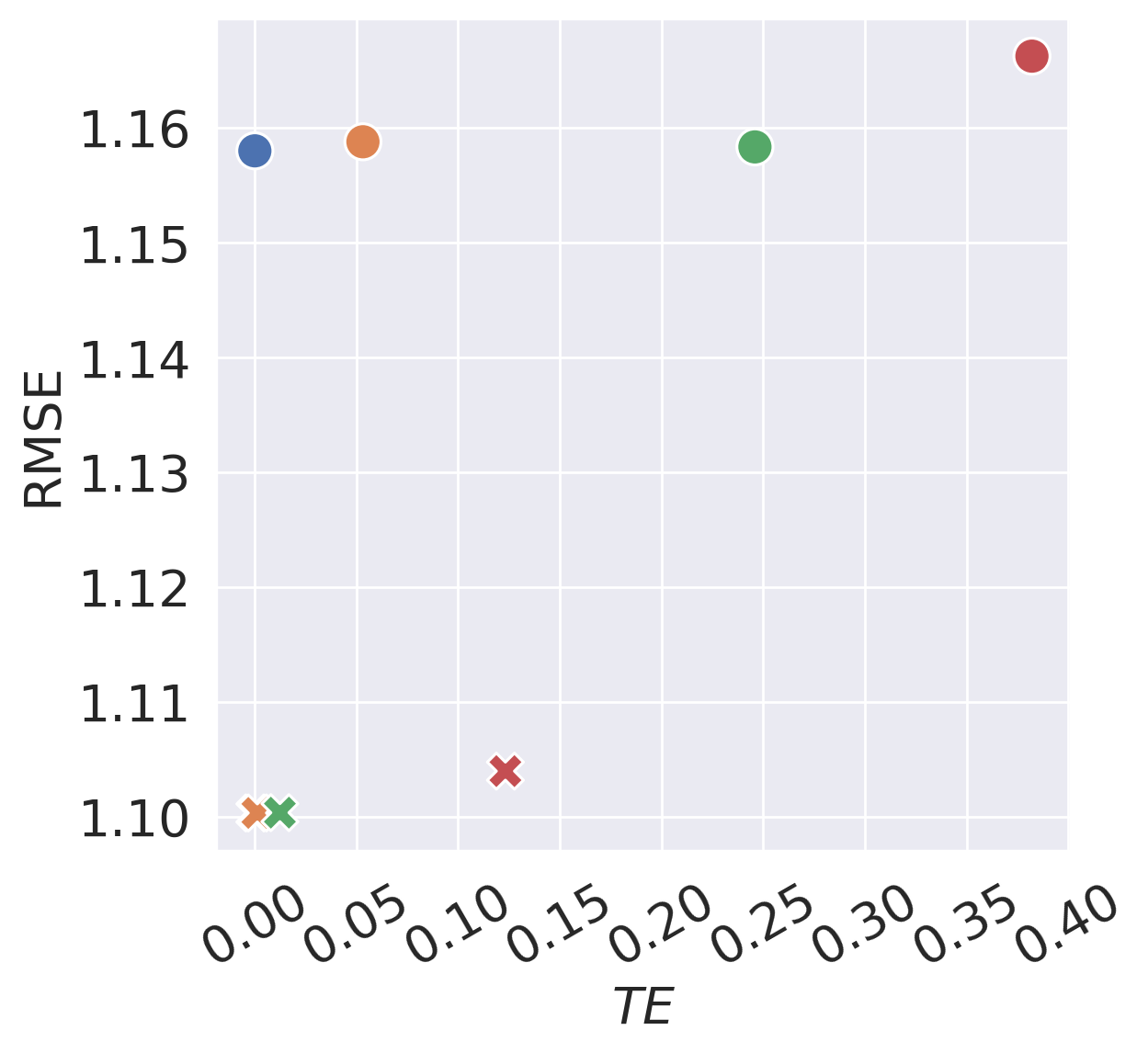}
        \caption{\textit{Linear-Reg}}
    \end{subfigure}
    \begin{subfigure}[t]{0.22\linewidth}
        \centering
        \includegraphics[width=\linewidth]{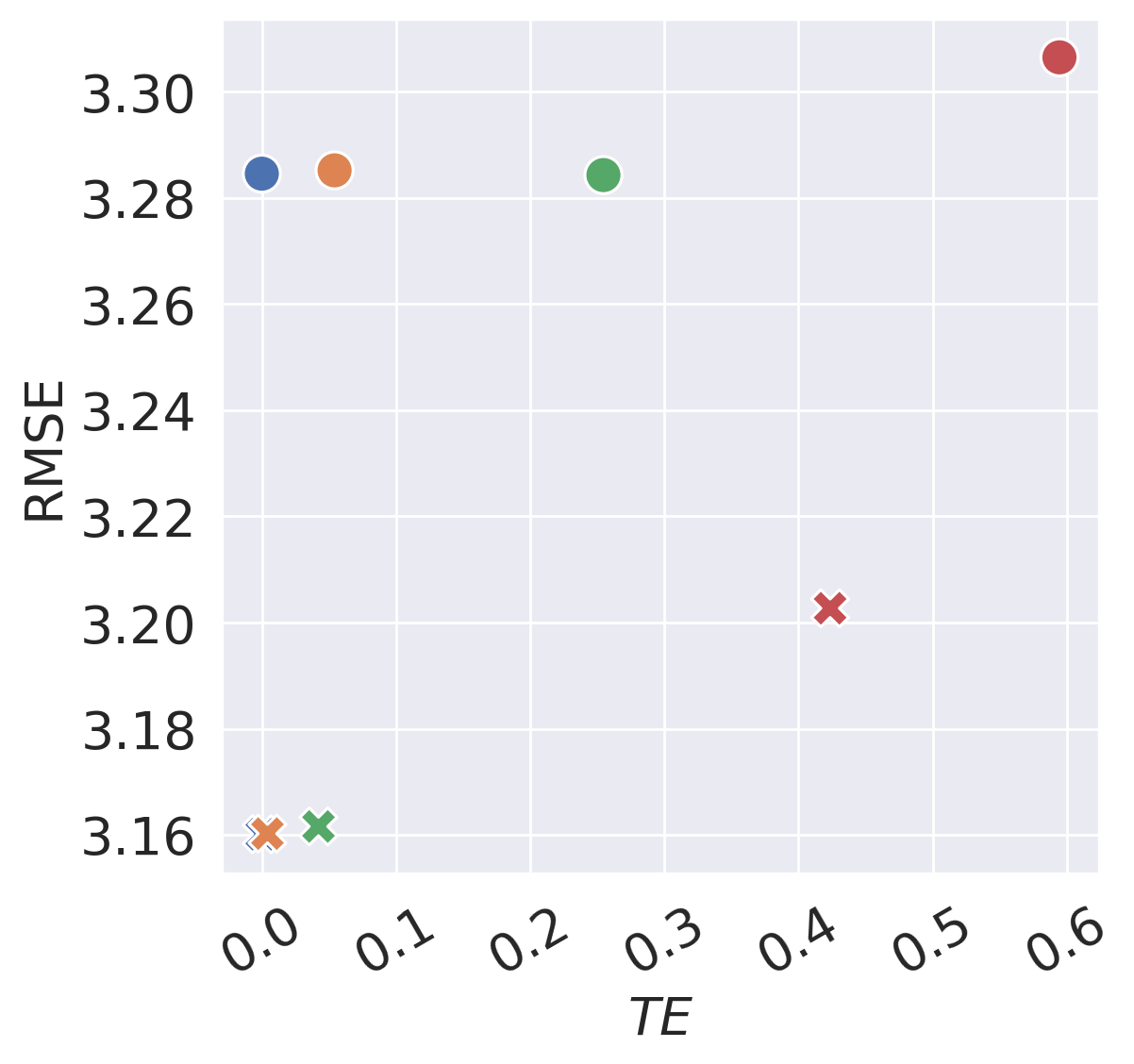}
        \caption{\textit{Cubic-Reg}}
    \end{subfigure}
    \begin{subfigure}[t]{0.22\linewidth}
        \centering
        \includegraphics[width=\linewidth]{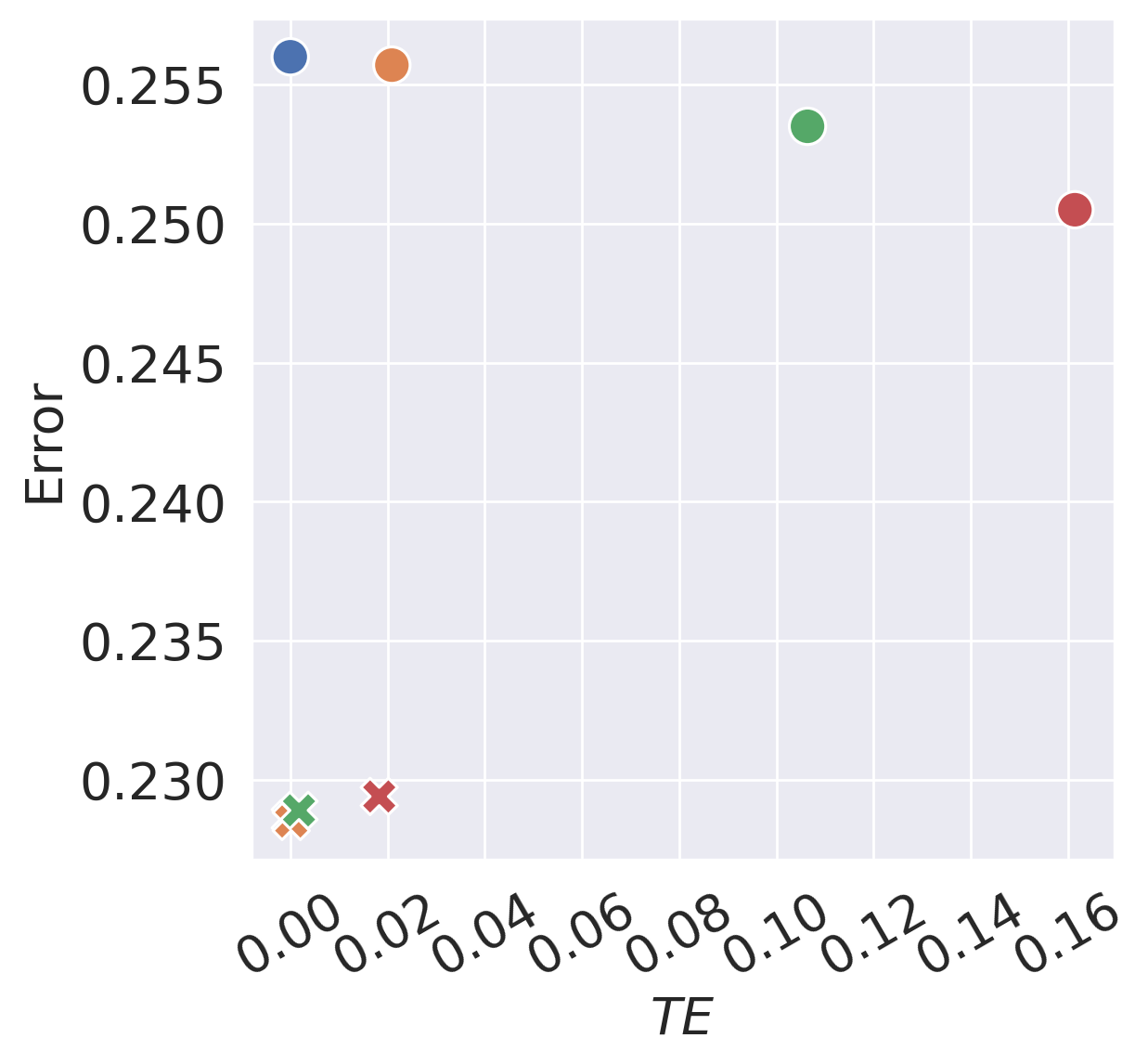}
        \caption{\textit{Linear-Cls}}
    \end{subfigure}
    \begin{subfigure}[t]{0.28\linewidth}
        \centering
        \includegraphics[width=\linewidth]{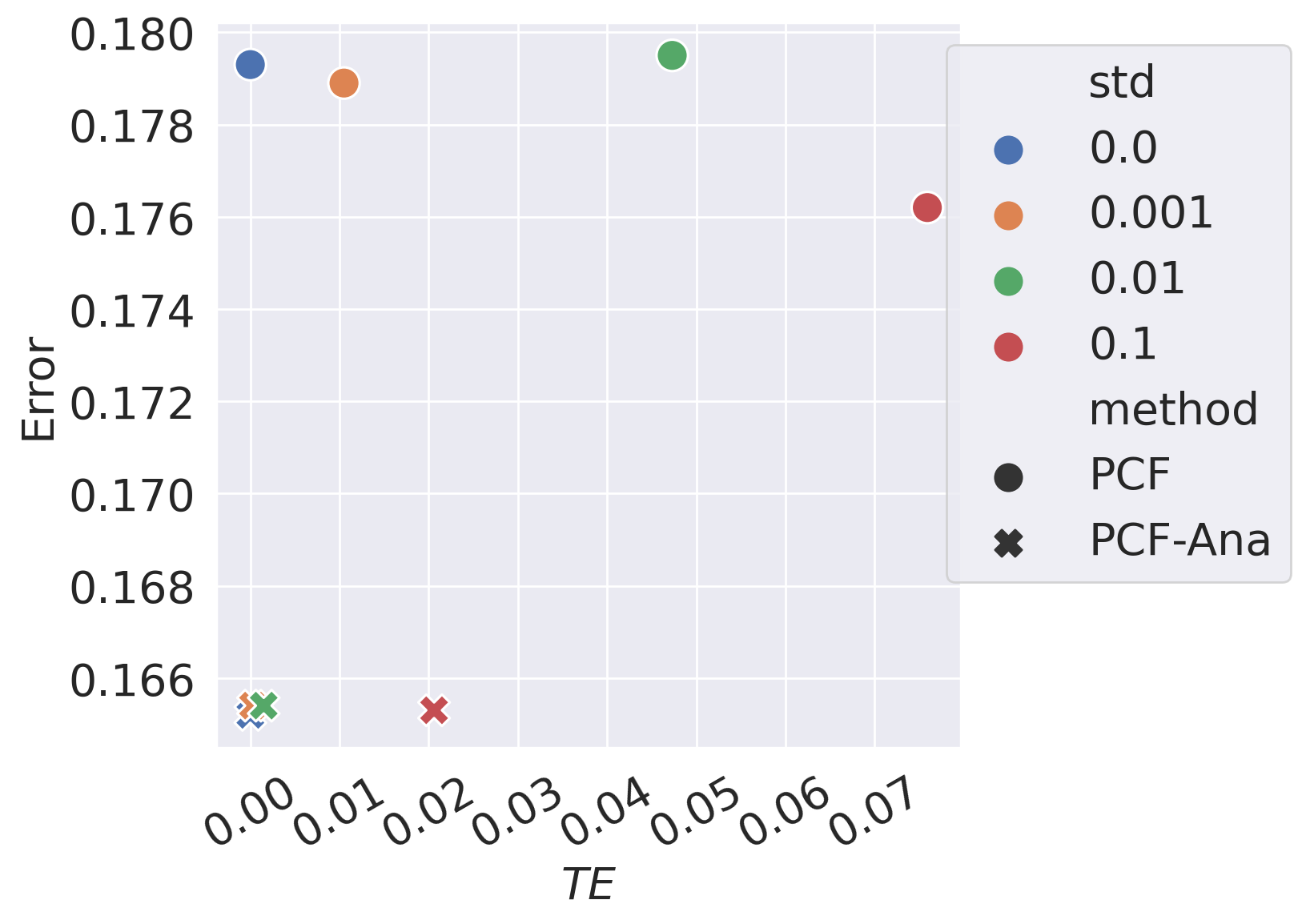}
        \caption{\textit{Cubic-Cls}}
    \end{subfigure}
    \caption{Results on synthetic datasets comparing PCF and PCF-Analytic.
    Different color represents different $\alpha$ indicating the standard deviation of the error ($\epsilon \sim \mathcal{N}(0,\alpha)$) while shape represents different algorithms.
    Results with different $\beta$ can be found in \Cref{app-sec:exp-result}.}
    \label{fig:estana_knn}
    \vspace{-1em}
\end{figure}

\paragraph{Performance under controllable error}
Here we investigate a more practical scenario where both counterfactuals and $U$ need to be estimated. 
To investigate how error and TE changes with counterfactual estimation error in a more controllable way and investigate , we simulate the estimation error by adding gaussian noise.
Specifically, $\hat{x}_{a'} = x_{a'} + \epsilon$ and $\hat{u} = u+ \epsilon$ where $\epsilon \sim \mathcal{N}(\beta,\alpha)$.
In \Cref{fig:est_knn}, we observe that while the fairness and ML performance (especially fairness) of CFE, CFR and PCF tends to get worse as error gets more significant, PCF remains best for all noise level.

\paragraph{Investigating source of error}
Here we further investigate what could be source of error in the previous scenario. 
As discussed in \Cref{sec:practical-alg-g-phi}, in practice, two things in \Cref{thm:opt-predictor} break down: access to Bayes optimal classifier and ground truth counterfactuals.
In \Cref{fig:estana_knn}, we observe that PCF-Analytic tends to be more robust against counterfactual estimation error than PCF.
We argue this is because $\phi$ used in PCF is not trained well on the estimated counterfactual distribution.

\begin{figure}
    \centering
        \includegraphics[width=0.55\linewidth]{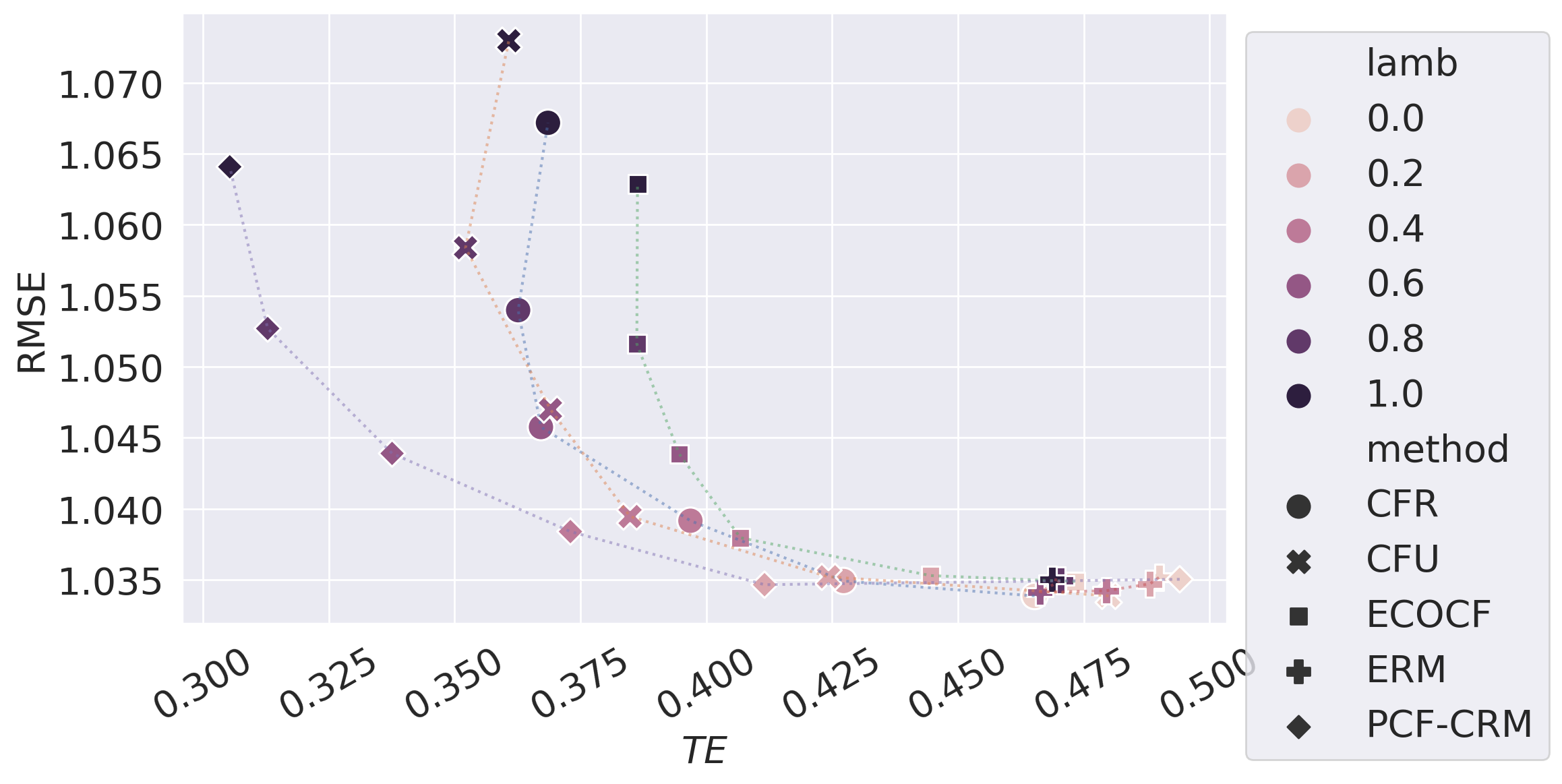}
    \caption{Results on Sim-Law with estimated counterfatuals.
    The predictor is a MLP regressor.
    We also test the convex combination of each algorithm and ERM.
    For example, PCF-CRM with $\lambda$ means $\hat{y} = \lambda \hat{y}_{\textnormal{PCF-CRM}} + (1-\lambda) \hat{y}_{\textnormal{ERM}}$.
    This suggests that PCF-CRM can achieve lower Error given the same TE and lower TE given the same Error.}
\label{fig:law}
\end{figure}

\subsection{Semi-synthetic Dataset}
In this section, we consider Law School Success dataset \citep{wightman1998lsac} where the sensitive attribute is gender and the target is first-year grade..
The main goal of this experiment is to validate the effectiveness of our methods in more practical scenarios where limited causal knowledge is available and the invertibility assumption is relaxed.

To compute TE, we need access to ground truth counterfactuals.
Hence we train a generative model on real dataset to generate semi-synthetic dataset following the method in \citet{zuo2023counterfactually}.
We want to emphasize that counterfactuals are hidden from downstream models and used for the evaluation of TE only.
This way, we get access to the ground truth $u^*$ and can generate ground truth counterfactuals without any error.
In our investigation, exogenous noise, factual data and counterfactual data are all actually the simulated version of original datasets.
However they do follow a fixed data generating mechanism that is close to the real data.
More details could be found in \Cref{app-sec:dataset}. 
All experiments are repeated 5 times on the same semi-synthetic dataset.

\paragraph{Results}

In \Cref{fig:law}, we observe that PCF-CRM  achieves better CF and lower Error in comparison to CFU and CFR. 
This validates our improvement \Cref{sec:practical-alg-g-phi} indeed leads to more practical algorithm.
Results on comparing PCF and PCF-CRM can be found in \Cref{app-sec:exp-result}, which further justifies this.
While ERM could achieve lower error, it has worst fairness.
This is inevitably determined by the inherent trade-off discussed in \Cref{thm:excess-risk-cf}.
Furthermore, inspired by the trade-off, we test the result of mixing all predictors with ERM. 
The curve shows that PCF-CRM remains optimal given fixed CF and best CF given fixed error.
This demonstrates again that PCF-CRM is the best among all methods.

\section{Conclusion and Discussion}\label{sec:conclusion}

\paragraph{Conclusion}

In this work, we conducted a formal investigation of the trade-off between Counterfactual Fairness (CF) and predictive performance. 
We proved that combining factual and counterfactual predictions with a potentially unfair, optimal predictor achieves optimal CF. 
Additionally, we derived the excess risk between predictors with and without CF constraints, quantifying the minimum performance degradation necessary to ensure perfect CF.
To address incomplete causal knowledge, we analyzed the effects of imperfect counterfactual estimations on CF and predictive performance. 
We proposed a plugin approach that leverages pre-trained models for optimal fair prediction and developed a practical method to mitigate estimation errors.

Despite the theoretical contributions of our method, two limitations may impact practical applicability: (1) access to ground-truth counterfactuals and (2) access to Bayes optimal predictors. 
Below, we delve into these limitations, clarifying how our methods can be practically applied and how they can benefit from contributions from the broader community. 
We hope this discussion will also inspire future research directions.

\paragraph{Access to ground truth counterfactuals}

While how to better estimate counterfactuals is out of the scope of this work, it is indeed an unavoidable challenge faced by the community of Counterfactual Fairness.
It not only limits the deployment of CF algorithms, but also leads to difficulty in validating proposed CF methods.
While counterfactual data can be obtained in specific scenarios, such as through randomized controlled trials, it is challenging to acquire in most applications.
There are some works in the field of causality that aims at estimating counterfactuals.
For instance, \citet{nasr2023counterfactual} proves counterfactual identifiability under certain causal graphs.
However, in more general scenarios, such causal knowledge may be lacking and identifying the causal graph itself can be challenging. 
These tasks have been well studied in the field of causal discovery \citep{chickering2002optimal, colombo2014order} and causal representation learning \citep{scholkopf2021toward}.
Solutions to this problem typically rely on strong assumptions, such as the linearity of Structural Causal Models (SCMs) or additive noise \citep{shimizu2006linear, hoyer2008nonlinear, peters2014causal}.
More recently, \citet{kulinski2023towards} propose a method of estimating counterfactuals without the need to identify the causal model or graph.
We believe this approach to direct counterfactual estimation could have the potential to be a good plugin counnterfactual estimator in our algorithm.
Additionally, generative models could also be used to generate samples as if they had come from a different sensitive attribute \citep{choi2018stargan,zhou2022iterative,zhou2023efficient,rombach2022high}.
These methods often offer the advantage of higher sample quality, especially in modalities such as images or natural language. 
However, they must be applied with considerable care, as they generally lack integration with the causal model and may introduce significant estimation errors.

\paragraph{Access to Bayes optimal predictors}
Another crucial plugin estimator of our method is the optimal predictor. 
In classical ML settings, achieving a good estimator for the counterfactual distribution often requires retraining or fine-tuning. 
However, in this era, with the abundance of pre-trained models, such as foundation models \citep{bommasani2021opportunities}, it could be much easier to get a predictor that is close to being optimal.
Rather, given that these models are trained on noisy internet data and have extensive reach and impact, it is of great importance to find effective ways to debias them. 
We propose that our plugin algorithm could be a suitable solution due to its post-processing nature, which avoids incurring significant computational costs.

\section*{Acknowledgement}
Z.Z., R.B., and D.I. acknowledge support from NSF (IIS-2212097), ARL (W911NF-2020221), and ONR (N00014-23-C-1016). 
M.K. acknowledges support from NSF CAREER 2239375, IIS 2348717, Amazon Research Award and Adobe Research.
T.L. and J.G. acknowledge support from NSF-IIS2226108.
Any opinions, findings, and conclusions or recommendations expressed in this material are those of the author(s) and do not necessarily reflect the views of any funding source.

\bibliography{reference}
\bibliographystyle{plainnat}

\clearpage
\appendix

\section{Proofs} \label{app-sec:proof}

\subsection{Proof of \Cref{thm:perfect-te}}
\begin{proof}[Proof of \Cref{thm:perfect-te}]
    \begin{align}
        \mathrm{TE}(\phi) =0 
        \Leftrightarrow \E[|\phi(X,A)-\phi(X_{1-A},1-A)|] =0 \Leftrightarrow \phi(x,a)\overset{\text{a.s.}}{=}\phi(x_{1-a},1-a),\quad\forall (x,a) \,,
    \end{align}
    where the first equality is by definition and the second equality is because absolute value is always non-negative for any $(x,a)$. Thus, the predictions must be almost surely equal for all $(x,a)$. Similarly, if they are all equal on the non-zero metric set, then the expectation must be 0.
\end{proof}

\subsection{Proof of \Cref{thm:opt-predictor}} \label{app-sec:proof-opt-predictor}
Before proving the main theorem, we first provide one well-known lemma that  reminds the reader of the well-known result of the optimal predictor, which is denoted by $\phi^*$ in the theorem statement.

\begin{lemma}[Optimal Predictor is Conditional Mean]
    The conditional mean $\E[Y|X=x]$ is the optimal predictor without fairness constraints for classification with cross-entropy loss and for regression with MSE loss.
\end{lemma}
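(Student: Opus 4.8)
The plan is to reduce the unconstrained risk minimization to a family of pointwise optimization problems via the tower property of conditional expectation, and then solve each pointwise problem by convexity; the two loss functions are handled separately but by the same template.

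\emph{Regression with squared loss.} For any measurable $\phi$, I would write $\E[(\phi(X)-Y)^2] = \E_X\bigl[\E[(\phi(X)-Y)^2\mid X]\bigr]$. Since $\phi(X)$ is a constant given $X=x$, it suffices to minimize $c\mapsto \E[(c-Y)^2\mid X=x]$ over $c\in\R$ for each fixed $x$. Expanding the square gives $c^2 - 2c\,\E[Y\mid X=x] + \E[Y^2\mid X=x]$, a strictly convex quadratic in $c$ whose unique minimizer is $c^*(x) = \E[Y\mid X=x]$. Because this holds for almost every $x$, the measurable function $\phi^*(x) = \E[Y\mid X=x]$ attains the infimum over all measurable predictors.

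\emph{Binary classification with cross-entropy loss.} Here $Y\in\{0,1\}$, the predictor outputs $\phi(x)\in(0,1)$ interpreted as the predicted probability of $Y=1$, and $\ell(c,y) = -y\log c - (1-y)\log(1-c)$. Conditioning on $X$ as above, I would set $\eta(x)\triangleq \Prob(Y=1\mid X=x) = \E[Y\mid X=x]$ and minimize $g(c) = -\eta(x)\log c - (1-\eta(x))\log(1-c)$ over $c\in(0,1)$. Differentiating, $g'(c) = -\eta(x)/c + (1-\eta(x))/(1-c)$, which vanishes exactly at $c = \eta(x)$, while $g''(c) = \eta(x)/c^2 + (1-\eta(x))/(1-c)^2 > 0$, so $c=\eta(x)$ is the unique minimizer. (Equivalently, $g(c)$ equals the Shannon entropy of a Bernoulli($\eta(x)$) variable plus $\kldiv$ from Bernoulli($\eta(x)$) to Bernoulli($c$), and Gibbs' inequality gives the same conclusion.) Hence again $\phi^*(x) = \E[Y\mid X=x]$.

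\emph{Where the care is needed.} There is no substantive obstacle in this lemma; the only points requiring mild attention are the measurable-selection step — immediate here since $x\mapsto\E[Y\mid X=x]$ is measurable by construction — the boundary cases $\eta(x)\in\{0,1\}$ in the cross-entropy argument, handled by the usual convention $0\log 0 = 0$, and the restriction to binary $Y$, which is exactly what lets the single number $\E[Y\mid X=x]$ summarize the relevant part of the conditional law (for multiclass targets one would instead recover the whole conditional pmf). In the paper's setting one simply substitutes the pair $(X,A)$ for $X$ throughout.
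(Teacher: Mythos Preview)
Your proposal is correct and follows essentially the same approach as the paper: both condition on $X$ to reduce the risk to a pointwise problem and then minimize by differentiating (for MSE, expanding the quadratic; for cross-entropy, solving $g'(c)=0$). Your version is slightly more careful in noting convexity explicitly, handling the boundary cases $\eta(x)\in\{0,1\}$, and mentioning the measurable-selection point, but the substance is identical.
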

\begin{proof}
    First, let's establish that the optimal predictor without constraints is in fact $\E[Y|X=x]$.
    For squared $L_2$ loss, we have that derivative:
    \begin{align*}
        &\E[\ell(\phi(X), Y)]\\ 
        &=\E_X[\E_{Y|X}[(Y - \phi(X))^2]] \\
        &=\E_X[\E_{Y|X}[Y^2] - 2\E_{Y|X}[Y\phi(X)] + \E_{Y|X}[\phi(X)^2]] \\
        &=\E_X[\E_{Y|X}[Y^2] - 2\phi(X)\E_{Y|X}[Y] + \phi(X)^2\E_{Y|X}[1]] \\
        &=\E_X[\E_Y[Y^2] - 2\phi(X)\E[Y|X] + \phi(X)^2]
    \end{align*}
    Taking the derivative of the inside expectation w.r.t. $\phi(X)$ and setting to 0 yields $\phi^*(X) = \E[Y|X]$.

    Now let's look at cross-entropy loss for classification:
    \begin{align*}
        &\E[\ell(\phi(X), Y)] \\
        &=\E_X[\E_{Y|X}[-Y\log (\phi(X))- (1-Y) \log (1-\phi(X))]] \\
        &=\E_X[-\log (\phi(X))\E[Y|X]- \log (1-\phi(X))\E[(1-Y)|X] ]] 
    \end{align*}
    Again, if you take the derivative w.r.t. $\phi(X)$ and set to 0, we see that $\phi^*(X) = \E[Y|X]$.
\end{proof}

Now we seek to prove \Cref{thm:opt-predictor}.
\begin{proof}
    First, we decompose the factual error across the sensitive attribute $A$ given the exogenous noise $U$.
    \begin{align*}
        &\E_{X,A,Y}[\ell(\phi(X,A), Y)] \\
        &=\E_{U,A,Y}[\ell(\phi(F^*_X(U,A),A), Y)] \\
        &=\E_U[\E_A[\E_{Y|U,A}[\ell(\phi(F^*_X(U,A),A), Y)]]] \\
        &=\E_U[p(A=a)\E_{Y|U,A=a}[\ell(\phi(F^*_X(U,a),a), Y)] + p(A=1-a)\E_{Y|U,A=1-a}[\ell(\phi(F^*_X(U,1-a),1-a), Y)]] \,.
    \end{align*}
    Consider $U=u,$ inside the expectation we have
    \begin{align*}
        &p(A=a)\E_{Y|U=u,A=a}[\ell(\phi(F^*_X(u,a),a), Y)] + p(A=1-a)\E_{Y|U=u,A=1-a}[\ell(\phi(F^*_X(u,1-a),1-a), Y)]\\
        &=p(A=a)\E_{Y|X=x,A=a}[\ell(\phi(x,a), Y)] + p(A=1-a)\E_{Y|X=x_{1-a},A=1-a}[\ell(\phi(x_{1-a},1-a), Y)] \,,
    \end{align*}
where w.l.o.g., $x$ is viewed as the factual and $x_{a'}$ is viewed as the counterfactual.
Because of invertibility, these two terms are unique for every $(u,a)$ or correspondingly $(x,a)$ combination and thus the problem decomposes across $U$.
Thus, the factual loss can be viewed as a combination of the factual loss from one specific $a$ plus the counterfactual loss for $a'$ for each point $x$.
    
We have the following subproblems indexed by $u$: The factual loss can be viewed as a combination of the factual loss from one specific $a$ plus the counterfactual loss for $1-a$ for each point $x$. Notice that the constraint is $\phi(x,a)\overset{a.s.}{=} \phi(x_{1-a},1-a)$ from \Cref{thm:perfect-te}. We can directly push the constraint into the optimization problem by optimizing over $\phi_0 \triangleq \phi(x,a) \overset{a.s.}{=} \phi(x_{1-a},1-a)$:
\begin{align}
    \argmin_{\phi} p(A=a)\E_{Y|X=x,A=a}[\ell(\phi_0, Y)] + p(A=1-a)\E_{Y|X=x_{1-a},A=1-a}[\ell(\phi_0, Y)]
\end{align}
Taking $\ell$ as squared $L_2$ loss: we have 
    \begin{align*}
        &\argmin_{\phi} p(A=a)\E_{Y|X=x,A=a}[(Y-\phi_0)^2] + p(A=1-a)\E_{Y|X=x_{1-a},A=1-a}[(Y-\phi_0)^2] \\
        &=\argmin_{\phi}p(A=a)\{\E_{Y|X=x,A=a}[Y^2] -2\phi_0\E_{Y|X=x,A=a}[Y] + \phi_0^2\}\\
        & + p(A=1-a)\{\E_{Y|X=x_{1-a},A=1-a}[Y^2] -2\phi_0\E_{Y|X=x_{1-a},A=1-a}[Y] + \phi_0^2\} \,.
    \end{align*}
Similarly, if we take $\ell$ as (binary) cross-entropy loss: we have
\begin{align*}
    &\argmin_{\phi} p(A=a)\E_{Y|X=x,A=a}[-(Y\log(\phi)+(1-Y)\log(1-\phi))] \\
    &+ p(A=1-a)\E_{Y|X=x_{1-a},A=1-a}[-(Y\log(\phi)+(1-Y)\log(1-\phi))] 
\end{align*}

It is simple to see that both loss functions are convex, thus could obtain a unique solution by taking the derivative. Thus, for each $x, x_{a'}$ induced by $U=u$, we could get the optimal $\phi_0$:
\begin{align*}
\phi_0=\sum_{a' \in \{0,1\}}p(A=a')\phi^*(x_{a'},a')\,,
\end{align*}
where $\phi^*$ is the optimal predictor from the lemma above.
This result holds for every $u$ and thus gives the final result.
\end{proof}

\subsection{Proof of \Cref{thm:excess-risk-cf}}

\begin{proof}
Let $\phi^*(x,a)$ and $\phi^*_{\textnormal{CF}}(x,a)$ be the Bayes optimal predictor under no constraint and CF constraint respectively.
We have shown that $\phi^*(x,a) = \E_{Y|X=x,A=a}[Y]$ and  $\phi^*_{\textnormal{CF}}(x,a) =p(A=a)\phi^*(x,a) + p(A=1-a)\phi^*(x_{1-a}, 1-a) $.

Noting that $\phi^*$ is Bayes optimal, its risk satisfies $\risk^* \leq \risk^*_\CF$ where $\risk^*_\CF$ denotes the risk of  $\phi^*_{\textnormal{CF}}(x,a)$. 
By definition, the excess risk of $\phi^*_{\textnormal{CF}}(x,a)$ is 
\begin{align*}
   \risk^*_\CF - \risk^*  
    =& \mathbb{E}_{X,A}\left[\E_{Y|X=x,A=a}[\ell(\phi^*_\CF(x,a), Y) -\ell(\phi^*(x,a), Y) ]\right].
\end{align*}

For regression task and real-valued $Y$, we take $\ell$ as squared $L_2$ loss and have 
\begin{align*}
& \risk^*_\CF - \risk^*  \\
=& 
\mathbb{E}_{X,A}\left[\E_{Y|X=x,A=a}[(\phi^*_\CF(x,a)-Y)^2 -(\phi^*(x,a)-Y)^2 ]\right]\\
=& 
\mathbb{E}_{X,A}\left[\E_{Y|X=x,A=a}[(\phi^*_\CF(x,a)-\phi^*(x,a))(\phi^*_\CF(x,a)+\phi^*(x,a) -2Y) ]\right]\\
=& 
\mathbb{E}_{X,A} \left[(\phi^*_\CF(x,a)-\phi^*(x,a))\E_{Y|X=x,A=a}[(\phi^*_\CF(x,a)+\phi^*(x,a) -2Y) ]\right]\\
=& 
\mathbb{E}_{X,A} \left[(\phi^*_\CF(x,a)-\phi^*(x,a))^2 \right] \\
=& 
\mathbb{E}_{X,A} \left[ ((p(A=a)\phi^*(x,a) + p(A=1-a)\phi^*(x_{1-a}, 1-a) -\phi^*(x,a))^2 \right]\\
=& 
\mathbb{E}_{X,A} \left[ (1-p(A=a))^2 (\phi^*(x,a)  -\phi^*(x_{1-a}, 1-a))^2 \right]  \\
=& \mathbb{E}_{X,A} \left[ p^2(A=1-a) (\phi^*(x,a)  -\phi^*(x_{1-a},1-a))^2 \right]\\
=& \mathbb{E}_{A} \left[ p^2(A=1-a) \E_{X|A=a}[(\phi^*(x,a)  -\phi^*(x_{1-a},1-a))^2] \right].
\end{align*}
Let's define 
\begin{align*}
\Delta_a 
&\triangleq 
\E_{X|A=a}\left [\left(\E_{Y\mid X=x, A=a} [Y] - \E_{Y\mid X={x_{1-a}}, A=1-a} [Y] \right)^2\right] \\
&\overset{(a)}{=} 
\E_{U|A=a}\left [\left(\E_{Y\mid U=u, A=a} [Y] - \E_{Y\mid U=u, A=1-a} [Y] \right)^2\right] \\
&\overset{(b)}{=} 
\E_{U}\left [\left(\E_{Y\mid U=u, A=a} [Y] - \E_{Y\mid U=u, A=1-a} [Y] \right)^2\right] = \Delta_{1-a},
\end{align*}

where $(a)$ holds from the invertibility between $X$ and $U$ given $A$, and $(b)$ holds from the fact that $U$ and $A$ are independent. 
For simplicity, we denote $\Delta = \Delta_{a} = \Delta_{1-a}$. 
Notably, $\Delta$ measures the expected change of $Y$ due to the change of $A$ over all possible $U$, and is in fact a measure of their dependency. 
Furthermore, we have 
\begin{align*}
& \mathbb{E}_{A} \left[ p^2(A=1-a) \E_{X|A=a}[(\phi^*(x,a)  -\phi^*(x_{1-a},1-a))^2] \right] \\
=& \mathbb{E}_{A} \left[ p^2(A=1-a) \Delta_a \right] \\
=& \sum_{a} p(A=a)p^2(A=1-a) \Delta_a \\
=& p(A=0)p^2(A=1) \Delta_0 + p(A=1)p^2(A=0) \Delta_1 \\
=& p(A=0)p(A=1) \Delta \\
=&\sigma^2_A \Delta.
\end{align*}

Next, for classification task and binary $Y$ using cross-entropy loss, then 
\begin{align*}
& \risk^*_\CF - \risk^*  \\
=& 
\mathbb{E}_{X,A}\left[\E_{Y|X=x,A=a} \left[-Y\log \phi^*_\CF(x,a)-(1-Y) \log(1-\phi^*_\CF(x,a))\right] + \right.\\
\quad& \left. \E_{Y|X=x,A=a} \left[ Y\log \phi^*(x,a)+(1-Y) \log(1-\phi^*(x,a)) \right]
\right]\\
=& 
\mathbb{E}_{X,A}\left[-\phi^*\log \phi^*_\CF - (1-\phi^*) \log(1-\phi^*_\CF) + \phi^*\log \phi^* + (1-\phi^*) \log(1-\phi^*)
\right]\\
=&
\mathbb{E}_{X,A}\left[\phi^*\log \frac{\phi^*}{\phi^*_\CF} + (1-\phi^*) \log
\frac{1-\phi^*}{1-\phi^*_\CF }\right]\\
\overset{(a)}{=}& 
\mathbb{E}_{X,A}\left[D_{KL}[p(Y \mid X, A) \| \E_A [p(Y \mid X_A, A)] \right]\\
\overset{(b)}{=}& 
\mathbb{E}_{U,A}\left[D_{KL}[p(Y \mid U, A) \| \E_A [p(Y \mid U, A)] \right]\\
=&
\mathbb{E}_{U,A}\left[D_{KL}[p(Y \mid U, A) \| p(Y \mid U) \right]\\
=&
\mathbb{E}_{U} \E_A \left[D_{KL}[p(Y \mid U, A) \| p(Y \mid U) \right]\\
=& 
I(A; Y \mid U),
\end{align*}
where $(a)$ holds from noting that $\phi^*(x, a) = p(Y=1 \mid X=x, A=a)$ and $\phi^*_{CF} = \E_{A} [p(Y=1 \mid X={x_A}, A)]$, 
and $(b)$ again holds from the invertibility between $X$ and $U$ given $A$. 
\end{proof}

\subsection{Proof of \Cref{thm:pcf-perfect}} \label{app-sec:proof-pcf-perfect}
\begin{proof}

\begin{align*}
    \E[\hat{Y}|X=x,A=a] 
    &= \hat{\mu}(x,a) \\
    & = p(A=a)\phi(x_a,a) + p(A=1-a)\phi(G(x,a,1-a),1-a)\\
     & = p(A=a)\phi(G(x_{1-a},1-a,a),a) + p(A=1-a)\phi(x_{1-a},1-a)\\
    &= \hat{\mu}(x_{1-a},1-a) \\
    &= \E[\hat{Y}|X=x_{1-a},A=1-a] \,,
\end{align*}
where the middle qualities are by the properties of the invertible and ground truth CGM.
Because the factual output for the algorithm is the same as the counterfactual output, then the TE must be 0 by \Cref{thm:perfect-te}.
\end{proof}

\subsection{Proof of \Cref{thm:pcf-fair-error-bound}}

\begin{proof}
We first bound TE.

Let $x_{a\to 1-a} \triangleq G^*(x_a,a,1-a)$, we have 
    \begin{align*}
        \te =& 
        \mathbb{E}_{X,A} \left[ |\phi_\PCF(x_a,a) - \phi_\PCF(x_{a\to 1-a}, 1-a)|\right]\\
        =&  
        \mathbb{E}_{X,A} \left[ |p(A=a) \phi(x_a,a) + p(A=1-a) \phi(\hat{x}_{a\to 1-a},1-a)  \right. \\
        & \qquad  \left. - p(A=a) \phi(\hat{x}_{1-a \to a},a) - p(A=1-a) \phi(x_{a\to 1-a},1-a) | \right]\\
        =& 
        \mathbb{E}_{X,A} \left[ |p(A=a) \phi(x_a,a) + p(A=1-a) \phi(\hat{G}(x_a,a,1-a),1-a)  \right. \\
        & \qquad  \left. - p(A=a) \phi(\hat{G}(G^*(x_a,a,1-a),1-a,a) - p(A=1-a)\phi(G^*(x_a,a,1-a),1-a) | \right]\\
        \overset{(a)}{\leq}&  
        \mathbb{E}_{X,A} \left[ p(A=a) |\phi(x_a,a) -\phi(\hat{G}(G^*(x_a,a,1-a),1-a,a),a) | \right.   \\
        & \qquad \left. + p(A=1-a) |\phi(\hat{G}(x_a,a,1-a),1-a)  - \phi(G^*(x_a,a,1-a),1-a) | \right]\\
        =&  
        \mathbb{E}_{X,A} \left[ p(A=a) |\phi (G^*(G^*(x_a,a,1-a),1-a,a),a) -\phi(\hat{G}(G^*(x_a,a,1-a),1-a,a),a) |  \right. \\
        & \qquad \left. + p(A=1-a) |\phi(\hat{G}(x_a,a,1-a),1-a)  - \phi(G^*(x_a,a,1-a),1-a) | \right]\\
        \overset{(b)}{\leq}&  
        \mathbb{E}_{X,A} \left[ p(A=a) L|G^*(G^*(x_a,a,1-a),1-a,a) - \hat{G}(G^*(x_a,a,1-a),1-a,a) |  \right. \\
            & \left. \qquad  + p(A=1-a) L |\hat{G}(x_a,a,1-a) - G^*(x_a,a,1-a) | \right]\\
        \overset{(c)}{\leq}&  
        \mathbb{E}_{X,A} \left[p(A=a) L\varepsilon + p(A=1-a) L \varepsilon \right]\\
        = & L\varepsilon.
    \end{align*}
    Here $(a)$ holds by the convexity of absolute value, $(b)$ is from the L-lipschitz property of $\phi$, and
    $(c)$ is by the bound of counterfactual estimation error.

    Now we prove the bound for the error.   
    Taking $\ell$ as squared $L_2$ loss, we have 

    \newcommand{\pp}{\phi_{\text{PCF}}}
    \newcommand{\pf}{\phi^*_{\text{CF}}}
    
    \begin{align*}
    \risk 
    &= 
    \E_{X, A} \E_{Y \mid X=x, A=a} \left[ (\pp(x, a) - y)^2 \right] \\
    &= 
    \E_{X, A} \E_{Y \mid X=x, A=a} \left[ (\pp(x, a) - \pf(x, a) + \pf(x, a) - y)^2 \right],
    \end{align*}
    Taking the inner expectation and omit subscript for brevity, we have
    \begin{align*}
    &
    \E \left[ (\pp(x, a) - \pf(x, a) + \pf(x, a) - y)^2 \right] \\
    =& 
    \E \left[ (\pp(x, a) - \pf(x, a))^2 \right] + 2 \E \left[(\pp(x, a) - \pf(x, a))(\pf(x, a) - y) \right] + C \\
    =& 
    p(A=1-a)^2 (\phi^*(\hat x_{1-a}, 1-a) - \phi^*(x_{1-a}, 1-a) )^2 \\
    \quad &+ 2 p(A=1-a) (\phi^*(\hat x_{1-a}, 1-a) - \phi^*(x_{1-a}, 1-a)) (\pf(x, a) - \phi^*(x, a)) + C \\
    \leq& 
    p(A=1-a)^2 (\phi^*(\hat x_{1-a}, 1-a) - \phi^*(x_{1-a}, 1-a) )^2 \\
    \quad &+ 2 p(A=1-a) |\phi^*(\hat x_{1-a}, 1-a) - \phi^*(x_{1-a}, 1-a) (\pf(x, a) - \phi^*(x, a)| + C \\
    \overset{(a)}{\leq}&
    p(A=1-a)^2 L^2 \varepsilon^2 + 2p(A=1-a) L \varepsilon |\pf(x, a) - \phi^*(x, a)| + C,
    \end{align*}
    where $C$ denotes the remaining term that only depends on $\pf$. 
    Here $(a)$ holds from the fact that the counterfactual estimation error is bounded by $\varepsilon$ and the assumption that $\phi^*$ is $L$-lipschitz. 

    Next, take the outer expectation,
    \begin{align*}
    \risk 
    &\leq 
     \E_{A}[p(A=1-a)^2] L^2 \varepsilon^2 + 2 L \varepsilon 
    \E_{X, A} [p(A=1-a)|\pf(x, a) - \phi^*(x, a)|] + \risk^*_\text{CF} \\
    &= 
    \sigma_A^2 L^2 \varepsilon^2 + 2L \varepsilon 
    \E_{X, A} [ p(A=1-a)^2 |\phi^*(x_{1-a}, 1-a) - \phi^*(x, a)|] + \risk^*_\text{CF}.
    \end{align*}
    Note that taking expectation of $C$ with respect to the joint distribution of $X, A$ is in fact the optimal risk $\risk^*_\text{CF}$. 
    Reorganization gives us
    \begin{align*}
    &
    \risk - \risk^*_\text{CF} \\
    \leq& 
     \sigma_A^2  L^2 \varepsilon^2 + 2L \varepsilon 
   \E_{X, A} [p(A=1-a)^2 |\phi^*(x_{1-a}, 1-a) - \phi^*(x, a)|] \\
    =&
    \sigma_A^2  L^2 \varepsilon^2 + 2L \varepsilon 
     \E_{X, A} [p(A=1-a)^2 | \E[Y \mid X=x_{1-a}, A=1-a] - \E[Y \mid X=x, A=a]|] \\
    =&
    \sigma_A^2  L^2 \varepsilon^2 + 2L \varepsilon 
     \E_{U, A} [p(A=1-a)^2|\E[Y \mid U=u, A=1-a] - \E[Y \mid U=u, A=a]|] \\
    =&
     \sigma_A^2  L^2 \varepsilon^2 + 2L \varepsilon 
     \E_{U} [p(A=1)p(A=0)^2|\E[Y \mid U=u, A=0] - \E[Y \mid U=u, A=1]|  \\
      & + p(A=0)p(A=1)^2|\E[Y \mid U=u, A=1] - \E[Y \mid U=u, A=0]| ]  \\
      =&    
    \sigma_A^2  L^2 \varepsilon^2 + 2 \sigma_A^2 L \varepsilon 
     \E_{U} [|\E[Y \mid U=u, A=1] - \E[Y \mid U=u, A=0]|]. \\\\
    \end{align*}

When $\ell$ is cross-entropy loss, the excess risk is
\begin{align*}
   & \risk - \risk^*_\CF \\
    =& \E_{X, A} \E_{Y \mid X=x, A=a} \left[ -Y \log\frac{\pp}{\pf} - (1-Y)\log\frac{1-\pp}{1-\pf} \right]
\end{align*}

Here we assume the logit (i.e., the inverse function of sigmoid) is $L$-lipschitz continuous in $x$, i.e., 
\begin{align*}
    \left| f^*(x,a) - f^*(\hat{x},a) \right | \leq L \| x - \hat x \|, \forall x, \hat x, a
\end{align*}

where $\phi^* \triangleq \sigma \circ f^* $ and $f^*(x,a) = \log \frac{\phi^*(x, a)}{1 - \phi^*(x, a)}$.
Now we check the excess risk. The first term can be upper bounded by 
\begin{align*}
    Y \log \frac{\phi^*_{\text{CF}}}{\phi_{\pcf}}
    &= 
    Y \log \frac{p(A=a) \phi^*(x, a) + p(A=1-a) \phi^*(x_{1-a}, 1-a) }{p(A=a) \phi^*(x, a) + p(A=1-a) \phi^*(\hat x_{1-a}, 1-a)} \\
    &\leq 
    Y (\phi^*_{\text{CF}}) ^{-1} \left( p(A=a) \phi^*(x, a) \log \frac{p(A=a) \phi^*(x, a)}{p(A=a) \phi^*(x, a)} \right.\\
    & \quad + \left. p(A=1-a) \phi^*(x_{1-a}, 1-a) \log \frac{p(A=1-a) \phi^*(x_{1-a}, 1-a)}{p(A=1-a) \phi^*(\hat x_{1-a}, 1-a)} \right) \\
    &=
    Y \frac{p(A=1-a) \phi^*(x_{1-a}, 1-a)}{\phi^*_{\text{CF}}} \log \frac{\phi^*(x_{1-a}, 1-a)}{\phi^*(\hat x_{1-a}, 1-a)} \\
&=
    Y C_1 \log \frac{\phi^*(x_{1-a}, 1-a)}{\phi^*(\hat x_{1-a}, 1-a)},
\end{align*}
where the inequality holds by applying log sum inequality. For the inequality, it is derived from
\begin{align*}
    \phi^*_{\text{CF}} \log \frac{ \phi^*_{\text{CF}}}{\phi_\pcf} \leq &p(A=a) \phi^*(x, a) \log \frac{p(A=a) \phi^*(x, a)}{p(A=a) \phi^*(x, a)} \\
    & \quad + p(A=1-a) \phi^*(x_{1-a}, 1-a) \log \frac{p(A=1-a) \phi^*(x_{1-a}, 1-a)}{p(A=1-a) \phi^*(\hat x_{1-a}, 1-a)} 
\end{align*}

And $C_1$ is defined as below
\begin{align*}
   C_1 & = \frac{p(A=1-a) \phi^*(x_{1-a}, 1-a)}{ P(A=a) \phi^*(x,a) + P(A=1-a)\phi^*(x_{1-a},1-a)} 
\end{align*}

Similarly,
\begin{align*}
    (1 - Y) \log \frac{1 - \phi^*_{\text{CF}}}{1 - \phi_{\pcf}}
    & =  (1-Y) \log \frac{p(A=a) - p(A=a) \phi^*(x, a) + p(A=1-a) - p(A=1-a)\phi^*(x_{1-a}, 1-a) }{p(A=a) - p(A=a) \phi^*(x, a) + p(A=1-a) -p(A=1-a) \phi^*(\hat x_{1-a}, 1-a)}\\
    &\leq 
    (1-Y) \frac{p(A=1-a) (1 - \phi^*(x_{1-a}, 1-a))}{1 - \phi^*_{\text{CF}}} \log \frac{1 - \phi^*(x_{1-a}, 1-a)}{1 - \phi^*(\hat x_{1-a}, 1-a)} \\
    &= 
    (1-Y) C_2 \log \frac{1 - \phi^*(x_{1-a}, 1-a)}{1 - \phi^*(\hat x_{1-a}, 1-a)}. 
\end{align*}
where 
\begin{align*}
    C_2 &= \frac{p(A=1-a) (1 - \phi^*(x_{1-a}, 1-a))}{1 - \phi^*_{\text{CF}}}\\
     & =\frac{p(A=1-a) (1 - \phi^*(x_{1-a}, 1-a))}{1 - P(A=a) \phi^*(x,a) - P(A=1-a)\phi^*(x_{1-a},1-a)}\\
     & =\frac{p(A=1-a) (1 - \phi^*(x_{1-a}, 1-a))}{p(A=a) - P(A=a) \phi^*(x,a)+ p(A=1-a) (1 - \phi^*(x_{1-a}, 1-a))}\\
\end{align*}

Put together
\begin{align*}
    &Y \log \frac{\phi^*_{\text{CF}}}{\phi_{\pcf}} + (1 - Y) \log \frac{1 - \phi^*_{\text{CF}}}{1 - \phi_{\pcf}}  \\
    \leq&
    Y C_1 \log \frac{\phi^*(x_{1-a}, 1-a)}{\phi^*(\hat x_{1-a}, 1-a)} + (1-Y) C_2 \log \frac{1 - \phi^*(x_{1-a}, 1-a)}{1 - \phi^*(\hat x_{1-a}, 1-a)} \\
    \leq&
    \left| Y C_1 \log \frac{\phi^*(x_{1-a}, 1-a)}{\phi^*(\hat x_{1-a}, 1-a)} + (1-Y) C_2 \log \frac{1 - \phi^*(x_{1-a}, 1-a)}{1 - \phi^*(\hat x_{1-a}, 1-a)}      \right| \\
    \overset{(a)}{\leq }& 
    \left| Y C_1 \log \frac{\phi^*(x_{1-a}, 1-a)}{\phi^*(\hat x_{1-a}, 1-a)} - (1-Y) C_2 \log \frac{1 - \phi^*(x_{1-a}, 1-a)}{1 - \phi^*(\hat x_{1-a}, 1-a)}      \right| \\
    \overset{(b)}{\leq }& 
    \max\left\{ Y C_1, (1-Y) C_2\right\} \left| \log \frac{\phi^*(x_{1-a}, 1-a)}{\phi^*(\hat x_{1-a}, 1-a)} - \log \frac{1 - \phi^*(x_{1-a}, 1-a)}{1 - \phi^*(\hat x_{1-a}, 1-a)}\right| \\
    \leq&
     \left| \log \frac{\phi^*(x_{1-a}, 1-a)}{\phi^*(\hat x_{1-a}, 1-a)} - \log \frac{1 - \phi^*(x_{1-a}, 1-a)}{1 - \phi^*(\hat x_{1-a}, 1-a)}\right| \\
     =& 
     \left| \log \frac{\phi^*(x_{1-a}, 1-a)}{1 - \phi^*(x_{1-a}, 1-a) } - \log \frac{\phi^*(\hat x_{1-a}, 1-a)}{1 - \phi^*(\hat x_{1-a}, 1-a)} \right|
     \leq L \| x_{1-a} - \hat x_{1-a} \|.
\end{align*}
Step $(a)$ holds from the observation that the two $\log$ terms must have different signs: unless $\phi^*(x_{1-a}, 1-a) = \phi^*(\hat x_{1-a}, 1-a) = 0.5$, otherwise it is impossible to have both 
\begin{align*}
    \phi^*(x_{1-a}, 1-a) &\geq \phi^*(\hat x_{1-a}, 1-a) \\
    1 - \phi^*(x_{1-a}, 1-a) &\geq 1 -  \phi^*(\hat x_{1-a}, 1-a),
\end{align*}
hold simultaneously. 
Step $(b)$ holds from the fact that the two terms now have the same sign so we can safely upper bound them, and this maximum is upper bounded by 1.  
Finally, the excess risk 
\begin{align*}
    \E_{X, A} \E_{Y \mid X=x, A=a} \left[ Y \log \frac{\phi^*_{cf}}{\phi_{pcf}} + (1 - Y) \log \frac{1 - \phi^*_{cf}}{1 - \phi_{pcf}} \right] \leq  \E_{X, A} \E_{Y \mid X=x, A=a} \left[ L \| x_{1-a} - \hat x_{1-a} \| \right] \leq L\varepsilon,
\end{align*}
so long as $\| x_{1-a} - \hat x_{1-a} \| \leq \varepsilon$. 
This completes the proof. 
\end{proof}

\section{Counterfactual Risk Minimization}\label{app-sec:crm}
\begin{theorem}\label{thm:crm}
  Given that $G$ is the ground truth counterfactual generating mechanism, CRM and PCF will have the same optimal solution under the constraint of perfect Counterfactual Fairness, i.e.,

$$\argmin_{\phi:\mathrm{TE}(\phi)=0}   \E_{X,A,Y} [\ell(\phi(X,A),Y)] =  \argmin_{\phi:\mathrm{TE}(\phi)=0}   \E_{X,A,Y} [\ell(\phi(X,A),Y) + \ell(\phi(G(X,A,1-A),A),Y)] $$
\end{theorem}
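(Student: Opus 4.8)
The plan is to show that, once restricted to the feasible set $\{\phi:\mathrm{TE}(\phi)=0\}$, the CRM objective is \emph{identically} twice the plain-risk objective that PCF minimizes (i.e.\ the one appearing in \Cref{thm:opt-predictor}), so the two problems have the same set of minimizers. The engine is the same $U$-decomposition used in the proof of \Cref{thm:opt-predictor}.

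First I would use the invertibility part of \Cref{asm:main} to factor the ordinary risk over the exogenous noise: writing $x_{a'}=F_X(u,a')$, we have $\E_{X,A,Y}[\ell(\phi(X,A),Y)] = \E_U\big[\,\sum_{a'}p(A{=}a')\,\E_{Y\mid U=u,\,A=a'}[\ell(\phi(x_{a'},a'),Y)]\,\big]$, exactly as in \Cref{thm:opt-predictor}. Next I would decompose the augmented term the same way. Since $G$ is the ground-truth CGM, for every $u$ it sends the factual features of one group to the counterfactual features of the other: $G(x_a,a,1{-}a)=x_{1-a}$ and $G(x_{1-a},1{-}a,a)=x_a$. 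Hence the augmented term also factors over $U$ and, for each $u$, equals $p(A{=}a)\,\E_{Y\mid U=u,A=a}[\ell(\phi(x_{1-a},1{-}a),Y)] + p(A{=}1{-}a)\,\E_{Y\mid U=u,A=1{-}a}[\ell(\phi(x_a,a),Y)]$ — the same two label laws as the ordinary term, but with the two evaluation points of $\phi$ interchanged.

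Then I would impose feasibility. By \Cref{thm:perfect-te}, $\mathrm{TE}(\phi)=0$ forces $\phi(x_a,a)\overset{\text{a.s.}}{=}\phi(x_{1-a},1{-}a)$; call the common value $\phi_0(u)$. Substituting $\phi_0(u)$ for both evaluations collapses \emph{both} the ordinary term and the augmented term, for each $u$, to the \emph{same} quantity $\sum_{a'}p(A{=}a')\,\E_{Y\mid U=u,A=a'}[\ell(\phi_0(u),Y)]$. Integrating over $U$, the CRM objective restricted to $\{\phi:\mathrm{TE}(\phi)=0\}$ equals exactly $2$ times the PCF objective restricted to the same set. A positive rescaling does not move the argmin, so the two constrained minimizers coincide; by \Cref{thm:opt-predictor} this common minimizer is $\phi^*_{\mathrm{CF}}$.

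The step I expect to be the main obstacle is the careful bookkeeping of labels in the augmented term: PCF-style counterfactual augmentation attaches the \emph{factual} label $Y$ (not a freshly drawn $Y\mid X=x_{1-a},A=1{-}a$) to the counterfactual sample, and it is precisely this choice that makes the augmented term reproduce the ordinary risk term under the CF constraint — with any other labelling the ``twice'' identity, and hence the conclusion, would break. A secondary, routine point is justifying the $U$-decomposition and the a.s.\ qualifier inherited from \Cref{thm:perfect-te}, which goes through verbatim under invertibility; note that, in contrast to \Cref{thm:opt-predictor}, no convexity of $\ell$ or explicit solution of the per-$u$ subproblem is needed here, since we only need the two functionals to agree up to a positive constant on the feasible set.
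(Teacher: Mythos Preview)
Your proposal is correct and follows essentially the same route as the paper: decompose both objectives over $U$ via invertibility, observe that the augmented term swaps the evaluation points $\phi(x_a,a)\leftrightarrow\phi(x_{1-a},1{-}a)$ while keeping the factual label laws, impose the CF constraint to collapse both evaluations to a common $\phi_0$, and conclude that the constrained CRM objective is twice the constrained plain risk. Your closing remark that convexity of $\ell$ is unnecessary here (since only agreement up to a positive scalar is needed) is a clean sharpening of the paper's presentation, which appeals back to the per-$u$ solution of \Cref{thm:opt-predictor}.
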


\begin{proof}

    \begin{align*}
         & \E_{X,A,Y} [ \ell(\phi(X,A),Y) + \ell(\phi(G(X,A,1-A),1-A),Y)]\\
         =& \E_{U,A,Y} [\ell(\phi(F^*_X(U,A),A),Y) + \ell(\phi(G(F^*_X(U,A),A,1-A),1-A),Y)]\\
         =& \E_{U,A,Y} [\ell(\phi(F^*_X(U,A),A),Y) + \ell(\phi(F^*_X(U,1-A),1-A),Y)]\\
         =& \E_{U} [
        p(A=a)\E_{Y|U=u,A=a}[\ell(\phi(F^*_X(u,a),a), Y)] + p(A=1-a)\E_{Y|U=u,A=1-a}[\ell(\phi(F^*_X(u,1-a),1-a), Y)]\\
        & +  p(A=a)\E_{Y|U=u,A=a}[\ell(\phi(F^*_X(u,1-a),1-a), Y)] + p(A=1-a)\E_{Y|U=u,A=1-a}[\ell(\phi(F^*_X(u,a),a), Y)]
         ]\\
                  =& \E_{X} [
        p(A=a)\E_{Y|X=x,A=a}[\ell(\phi(x,a), Y)] + p(A=1-a)\E_{Y|X=x_{1-a},A=1-a}[\ell(\phi(x_{1-a},1-a), Y)]\\
        & + p(A=a)\E_{Y|X=x,A=a}[\ell(\phi(x_{1-a},1-a), Y)] + p(A=1-a)\E_{Y|X=x_{1-a},A=1-a}[\ell(\phi(x,a), Y)]
         ]
\end{align*}
    Enforcing the constraint of CF, we define $\phi_0  \triangleq \phi(x,a) = \phi(x_{1-a},1-a)$, then the optimization problem of the inner expectation becomes
    \begin{align*}
    \argmin_{\phi_0} &  p(A=a)\E_{Y|X=x,A=a}[\ell(\phi_0, Y)] + p(A=1-a)\E_{Y|X=x_{1-a},A=1-a}[\ell(\phi_0, Y)] \\ 
    & +  p(A=a)\E_{Y|X=x,A=a}[\ell(\phi_0, Y)] + p(A=1-a)\E_{Y|X=x_{1-a},A=1-a}[\ell(\phi_0, Y)] \\
      = \argmin_{\phi_0} &  2 p(A=a)\E_{Y|X=x,A=a}[\ell(\phi_0, Y)] + 2 p(A=1-a)\E_{Y|X=x_{1-a},A=1-a}[\ell(\phi_0, Y)] 
\end{align*}
where the objective is just a scaled version of that in the proof of \Cref{thm:opt-predictor}. 
Hence, we get the same minimizer.
\end{proof}

\section{Experiment Details} \label{app-sec:exp-detail}

We included the codes to reproduce our results.
All GPU related experiments are run on RTX A5000.

\subsection{Dataset} \label{app-sec:dataset}
\paragraph{Synthetic Dataset}
In this section, we consider the two regression synthetic datasets and two classification tasks where all of our assumptions in \Cref{asm:main} are satisfied.
\begin{equation*}
\begin{aligned}[c]
    & \textit{Linear-Reg}\\
    A &\sim \textnormal{Bernoulli}(p_A), U \sim \mathcal{N}(0,1), \epsilon_Y \sim \mathcal{N}(0,1)\\
    X & =  w_A A + w_U U\\
    Y &  = w_X X + w'_U U + w_Y \epsilon_Y\\
\end{aligned}
\quad 
\begin{aligned}[c]
    & \textit{Cubic-Reg}\\
    A &\sim \textnormal{Bernoulli}(p_A), U \sim \mathcal{N}(0,1), \epsilon_Y \sim \mathcal{N}(0,1)\\
    X & =  w_A A + w_U U\\
    Y &  = w_X X^3 + w'_U U + w_Y \epsilon_Y\\
\end{aligned}
\end{equation*}
where in our experiments the parameters are chosen as $w_A=1, w_X=1, w_Y=1, w_U=1, w'_U=1$.

We also consider the following two classification tasks
\begin{equation*}
\begin{aligned}[c]
    & \textit{Linear-Cls}\\
A &\sim \textnormal{Bernoulli}(p_A), U \sim \mathcal{N}(0,1), \epsilon_Y \sim \mathcal{N}(0,1)\\
    X & =  w_A A + w_U U\\
    Y &  \sim \textnormal{Bernoulli}( \sigma(w_X X + w'_U U + w_Y \epsilon_Y ))\\
\end{aligned}
\quad 
\begin{aligned}[c]
    & \textit{Cubic-Cls}\\
A &\sim \textnormal{Bernoulli}(p_A), U \sim \mathcal{N}(0,1), \epsilon_Y \sim \mathcal{N}(0,1)\\
    X & =  w_A A + w_U U\\
    Y &  \sim \textnormal{Bernoulli}( \sigma(w_X X^3 + w'_U U + w_Y \epsilon_Y ))\\
\end{aligned}
\end{equation*}
where in our experiments the parameters are chosen as $w_A=2, w_X=1, w_Y=1, w_U=1, w'_U=1$.

\paragraph{Semi-synthetic Dataset}

We consider Law School Success \citep{wightman1998lsac}.
The sensitive attribute is gender and the target is first-year grade.
Other features contain race, LSAT and GPA.
However, since we need to evaluate TE of each method which requires access to ground truth, we use the simulated version of those datasets.
Following a similar setup in \citep{zuo2023counterfactually}, we train a generative model to get semi-synthetic datasets.
Specifically, we train a VAE with the following structure $u\sim Enc(x,a)$, $x\sim Dec_1(u,a)$, $y\sim Dec_2(u,x)$. 
The training objective includes a normal VAE objective to reconstruct $x$ via $Enc$ and $Dec_1$, and a supervised objective to generate $y$ via $Dec_2$.
After training, we first sample a prior $u \sim \mathcal{N}(0,I)$ and $a\sim Bernoulli(p)$ (where $p$ is acquired based on empirical frequency in real data), then we get the semi-synthetic $x,y$ using $Dec_1$ and $Dec_2$.
We want to emphasize that counterfactuals, regardless of train or test set, are hidden from downstream models and used for evaluation only.
This way, we get access to the ground truth $u^*$ and can generate ground truth counterfactuals without any error.
In our investigation, exogenous noise, factual data and counterfactual data are all actually the simulated version of original datasets.
However they do follow a fixed data generating mechanism that is close to the real data.

\subsection{Analytic Solution on Synthetic Datasets}\label{app-sec:ana-solution}

We know the analytic solution of Bayes optimal predictor in our synthetic data experiments.
More specifically, for \textit{Linear-Reg}, we have 
$$
\phi^*(x,a) = \mathbb{E}[Y|X=x,A=a] = w_X x + \frac{w'_U}{w_U} (x-w_A a)
$$
For $\textit{Cubic-Reg}$, we have 
$$
\phi^*(x,a) = \mathbb{E}[Y|X=x,A=a] = w_X x^3 + \frac{w'_U}{w_U} (x-w_A a)
$$
For \textit{Linear-Cls}, we have 
$$
\phi^*(x,a) = \mathbb{E}[Y|X=x,A=a] = \sigma(w_X x + \frac{w'_U}{w_U} (x-w_A a))
$$
For $\textit{Cubic-Cls}$, we have 
$$
\phi^*(x,a) = \mathbb{E}[Y|X=x,A=a] = \sigma(w_X x^3 + \frac{w'_U}{w_U} (x-w_A a))
$$

\subsection{Prediction Models}

In our synthetic experiments, we mainly use KNN based predictors.
We use the default parameters in \texttt{scikit-learn}.
All MLP methods uses a structure with hidden layer $(20,20)$ and Tanh activation.

In semi-synthetic experiments, we use MLP methods uses a structure with hidden layer $(5,5)$ and Tanh activation as this is closer to the ground truth SCM.

\section{Additional Results} \label{app-sec:exp-result}

\subsection{Additional results on synthetic datasets}
In \Cref{fig:gt_mlp}, we test how how all algorithms perform when using ground truth counterfactuals and $U$ on additional type of predictors.
We observe that PCF achieves lower error than CFU and CFR, which is similar to what we observe in \Cref{fig:gt_knn}.
This further validates our theory regarding optimality of PCF.

In \Cref{fig:est_b0.001_knn}, following the investigation in \Cref{fig:est_knn}, we test with adding gaussian noise with different mean.
We observe that when it is a fixed bias, CFU and CFR achieves better fairness than PCF.
Though PCF still achieves best predictive performance.
Furthermore, as we increase variance of the noise, PCF outperform these two methods in terms of both fairness and ML performance.
In \Cref{fig:estana_b0.001_knn}, similar to \Cref{fig:estana_knn}, we observe PCF-Analytic significantly improves over PCF. 
Notably, it is not affected by bias as PCF.

\subsection{Additional results on semi-synthetic datasets}
In \Cref{fig:law_allte}, we included the expanded version of \Cref{fig:law} with $\te_0$ and $\te_1$.
We observe that they show a very similar trend.
In \Cref{fig:law-pcf-erm-crm}, we directly compare PCF (with ERM) and PCF-CRM. 
The results validate the necessity of CRM as a plugin estimator $\phi$ in the case of limited causal knowledge.

\begin{figure}[!ht]
    \centering
    \begin{subfigure}[t]{0.22\linewidth}
        \centering
        \includegraphics[width=\linewidth]{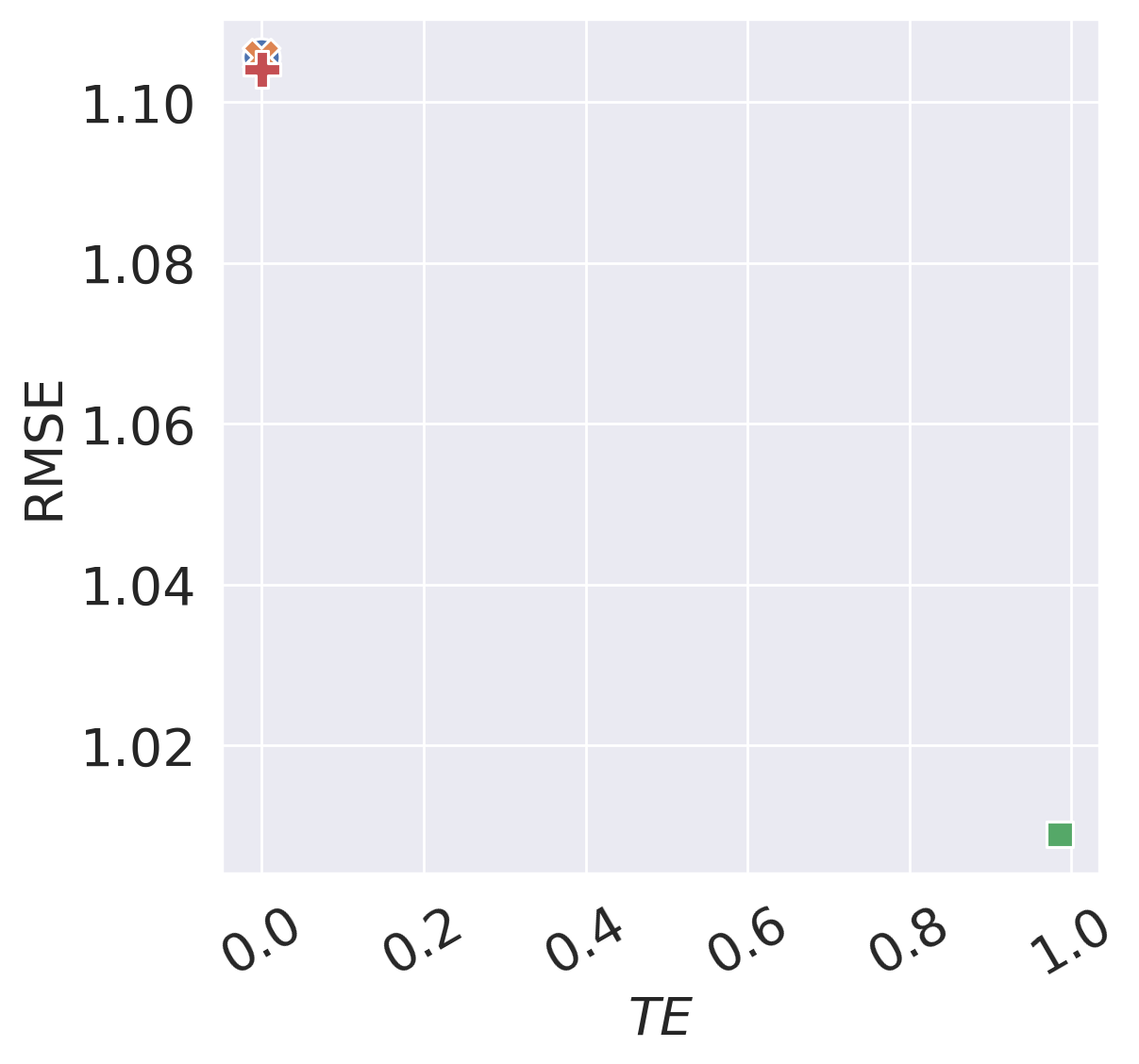}
        \caption{\textit{Linear-Reg}}
        \label{fig:gt_linear_knn_cf_effect}
    \end{subfigure}
    \begin{subfigure}[t]{0.27\linewidth}
        \centering
        \includegraphics[width=\linewidth]{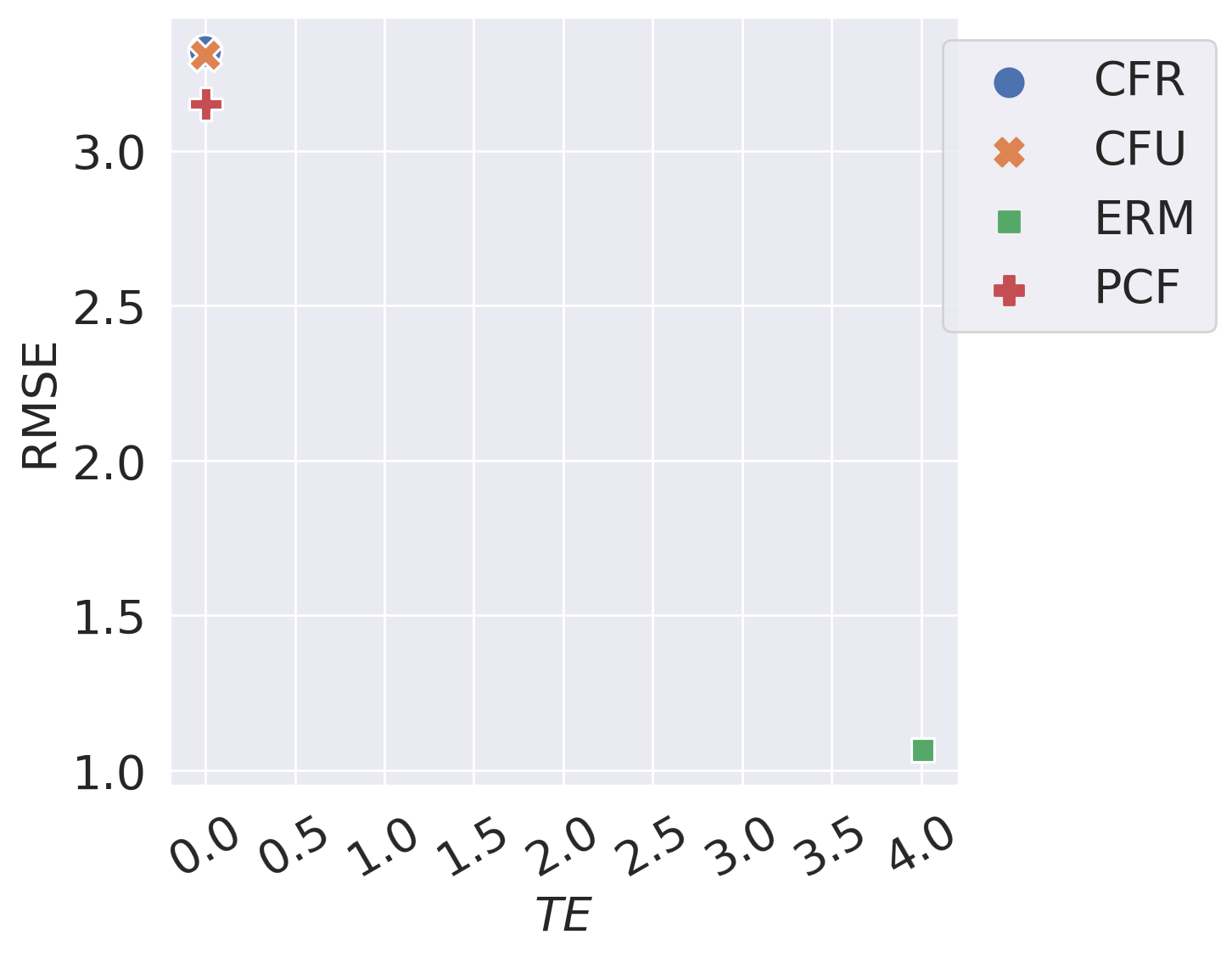}
        \caption{\textit{Cubic-Reg}}
        \label{fig:gt_cubic_knn_cf_effect}
    \end{subfigure}
    \begin{subfigure}[t]{0.22\linewidth}
        \centering
        \includegraphics[width=\linewidth]{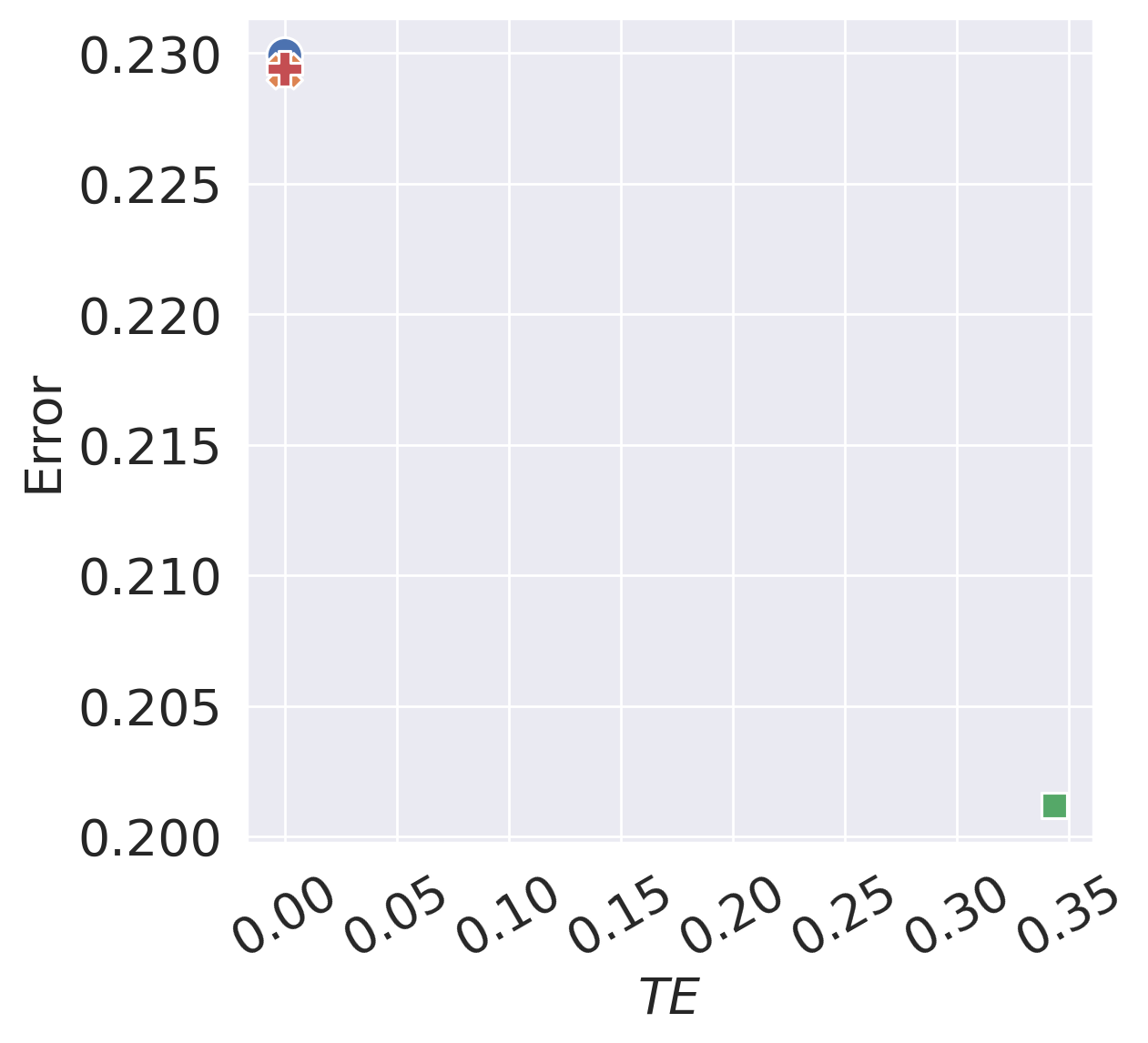}
        \caption{\textit{Linear-Cls}}
        \label{fig:gt_linear_knn_cf_effect}
    \end{subfigure}
    \begin{subfigure}[t]{0.27\linewidth}
        \centering
        \includegraphics[width=\linewidth]{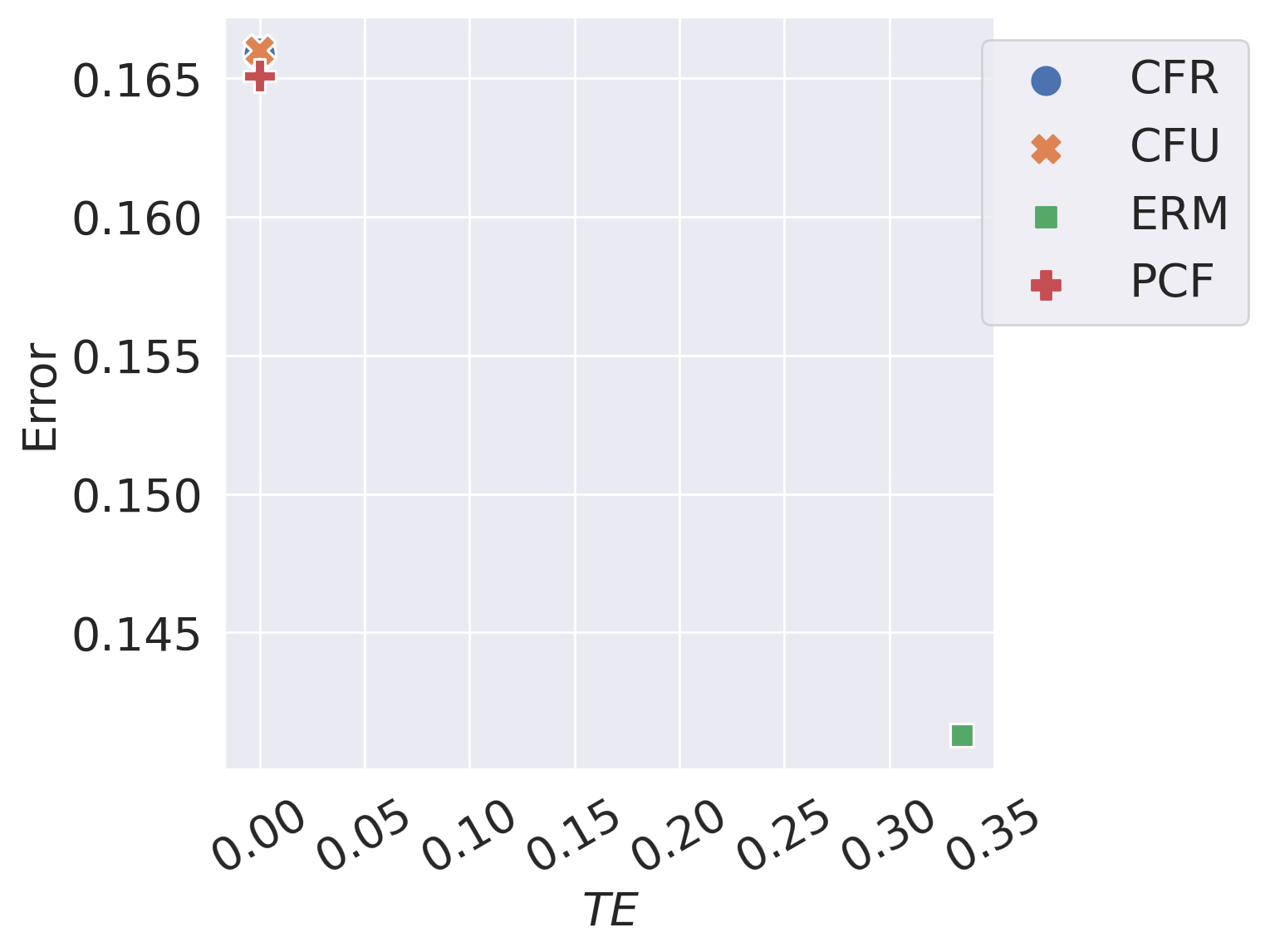}
        \caption{\textit{Cubic-Cls}}
        \label{fig:gt_cubic_knn_cf_effect}
    \end{subfigure}
    \caption{Results on synthetic datasets given ground truth counterfactuals with MLP predictors.}
    \label{fig:gt_mlp}
\end{figure}

\begin{figure}[!ht]
    \centering
    \begin{subfigure}[t]{0.22\linewidth}
        \centering
        \includegraphics[width=\linewidth]{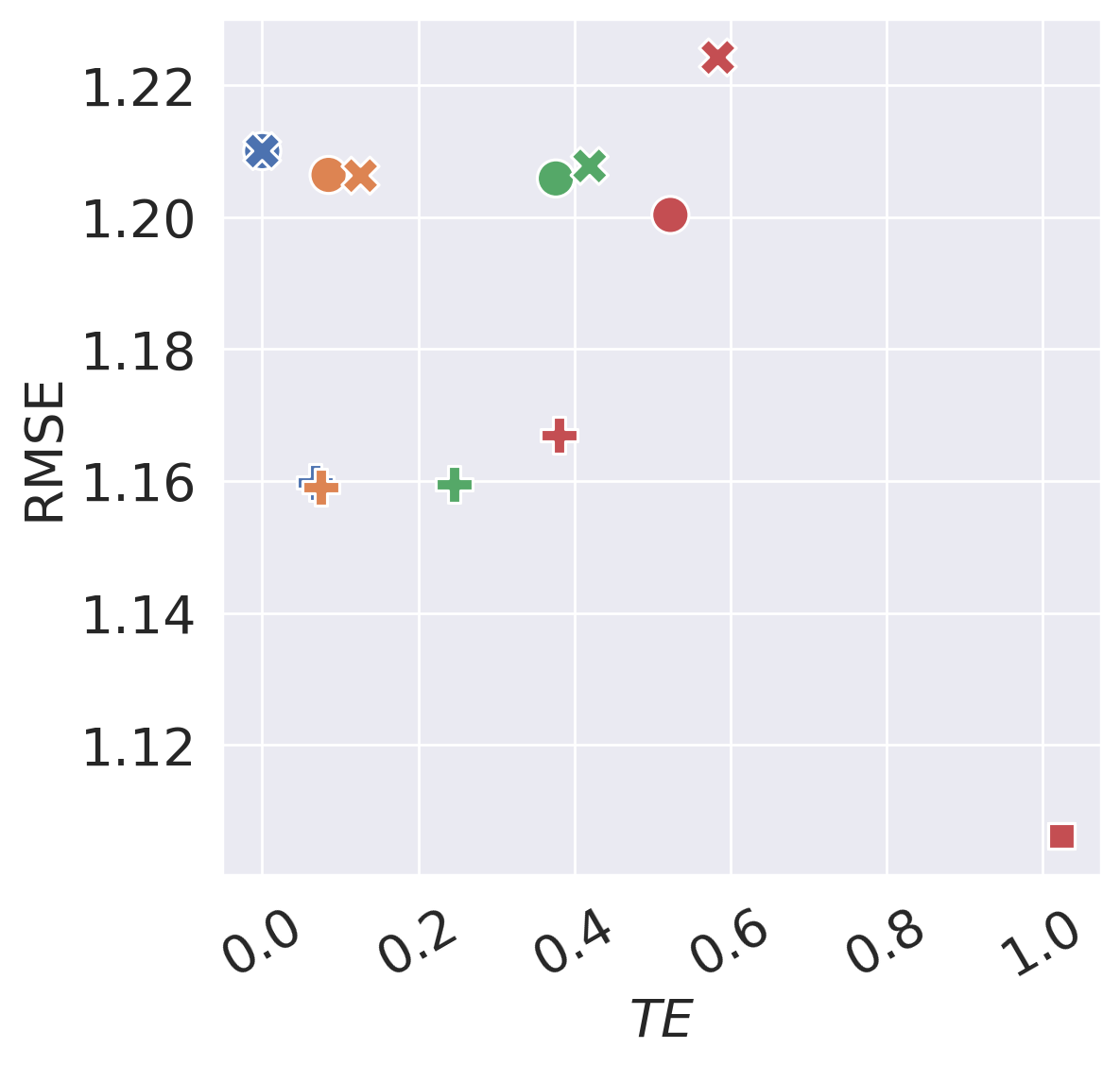}
        \caption{\textit{Linear-Reg}}
        \label{fig:est_linear_knn_cf_effect}
    \end{subfigure}
    \begin{subfigure}[t]{0.22\linewidth}
        \centering
        \includegraphics[width=\linewidth]{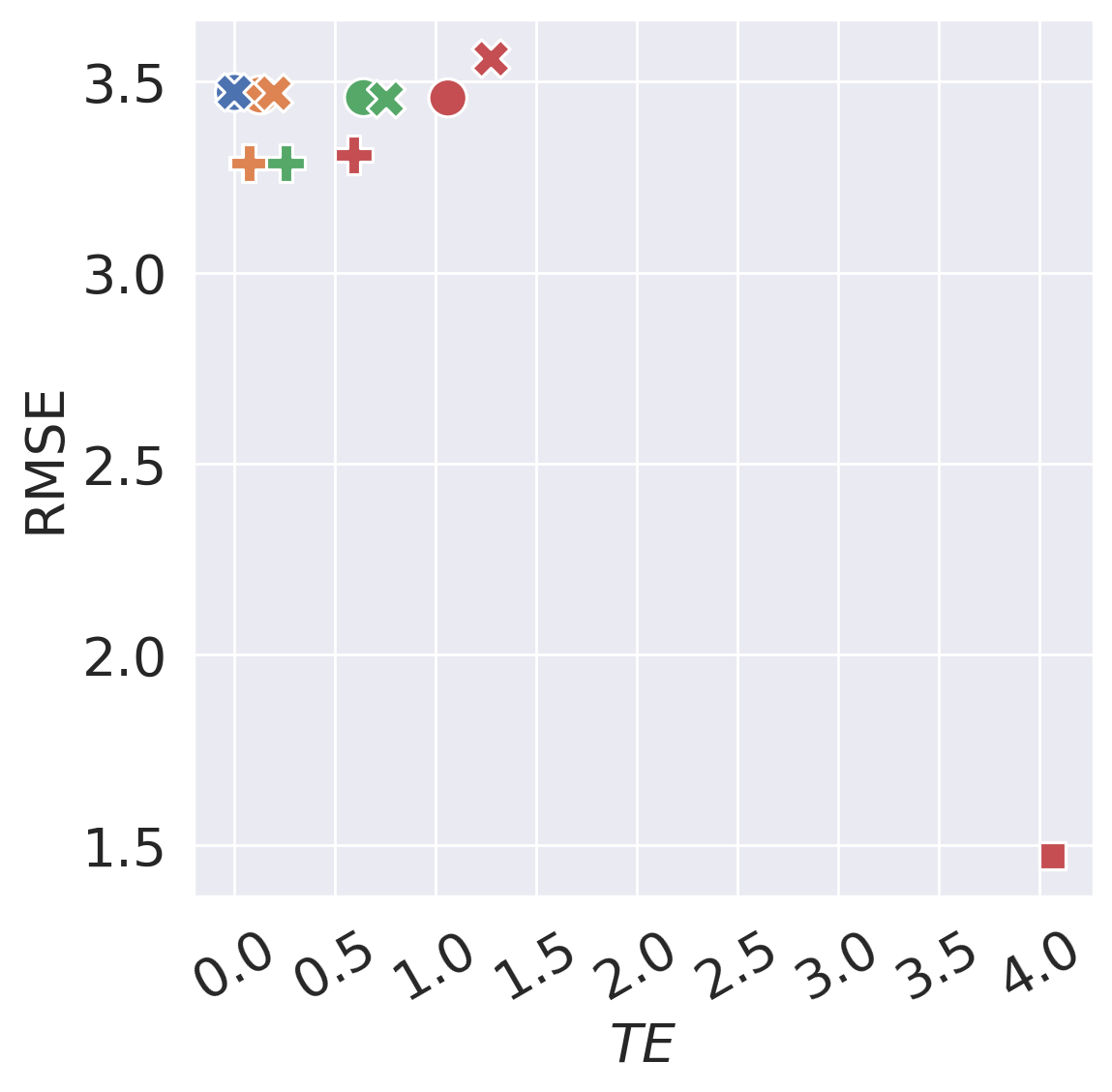}
        \caption{\textit{Cubic-Reg}}
        \label{fig:est_cubic_knn_cf_effect}
    \end{subfigure}
    \begin{subfigure}[t]{0.22\linewidth}
        \centering
        \includegraphics[width=\linewidth]{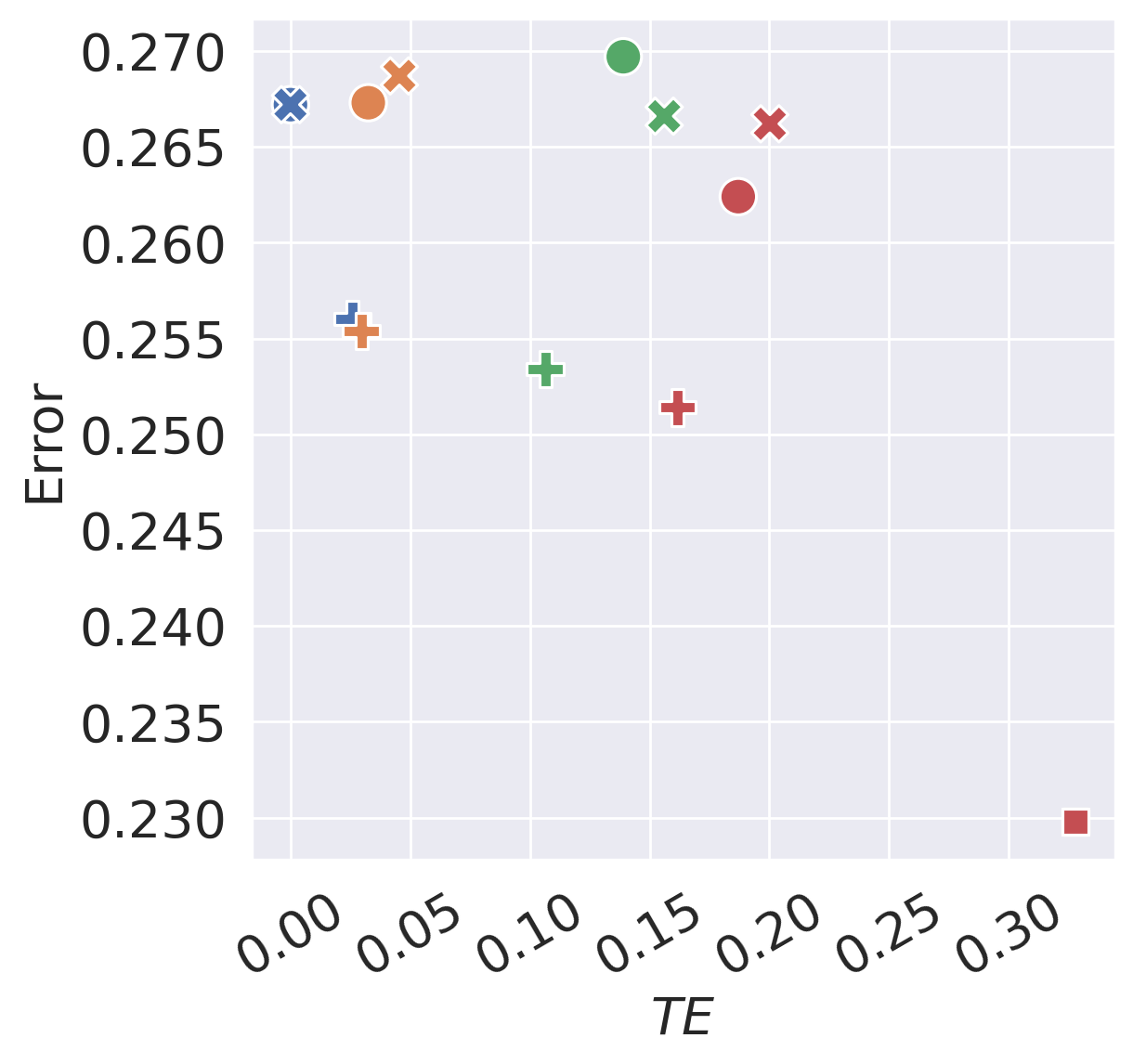}
        \caption{\textit{Linear-Cls}}
        \label{fig:est_linear_knn_cf_effect}
    \end{subfigure}
    \begin{subfigure}[t]{0.28\linewidth}
        \centering
        \includegraphics[width=\linewidth]{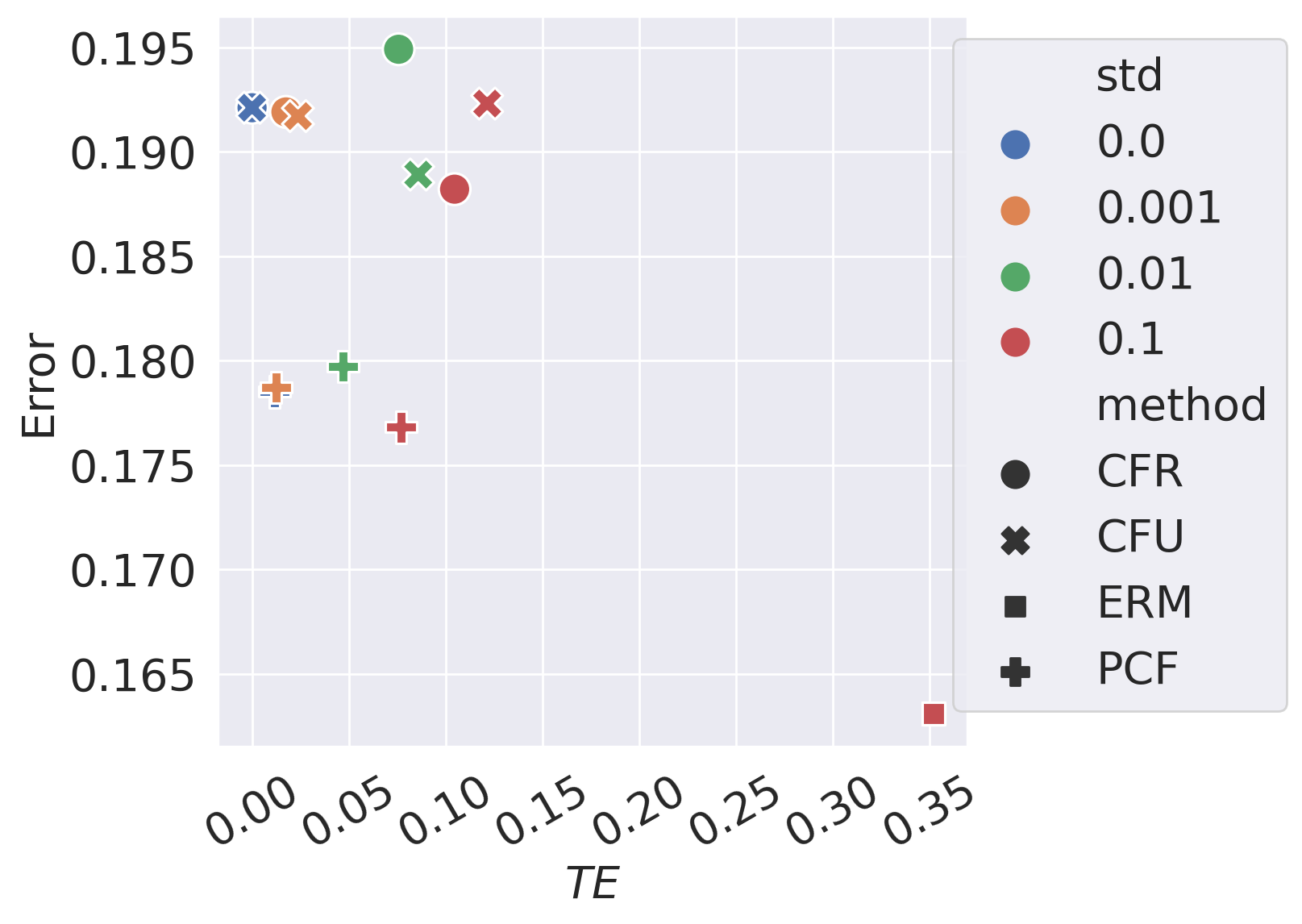}
        \caption{\textit{Cubic-Cls}}
        \label{fig:est_cubic_knn_cf_effect}
    \end{subfigure}
    \caption{Results on synthetic datasets under counterfactual estimation error with KNN predictors. 
    Different color represents different $\alpha$ indicating the standard deviation of the error ($\epsilon \sim \mathcal{N}(0.001,\alpha)$) while shape represents different algorithms.}
    \label{fig:est_b0.001_knn}
\end{figure}

\begin{figure}[!ht]
    \centering
    \begin{subfigure}[t]{0.22\linewidth}
        \centering
        \includegraphics[width=\linewidth]{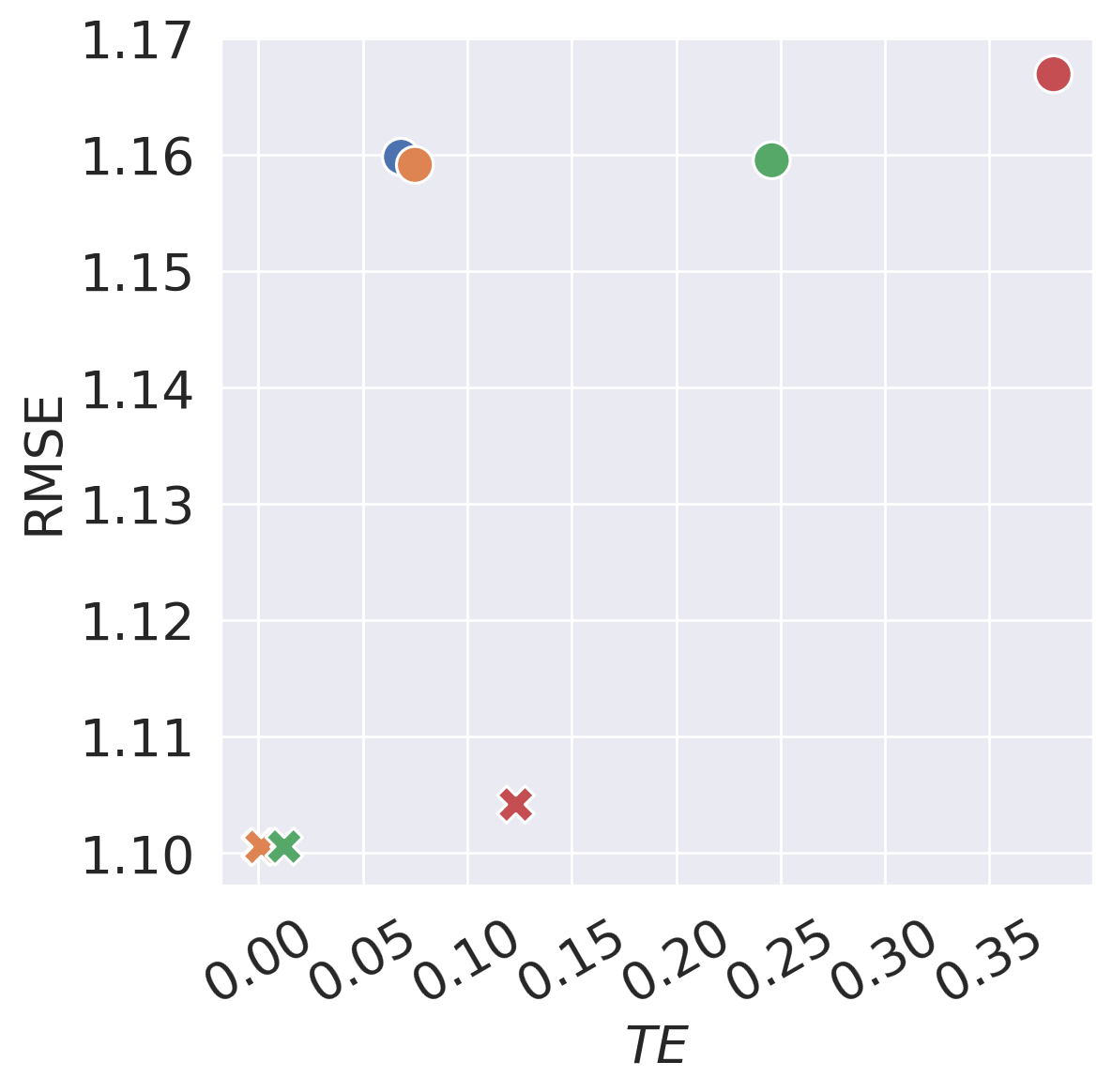}
        \caption{\textit{Linear-Reg}}
        \label{fig:estana_linear_knn_cf_effect}
    \end{subfigure}
    \begin{subfigure}[t]{0.22\linewidth}
        \centering
        \includegraphics[width=\linewidth]{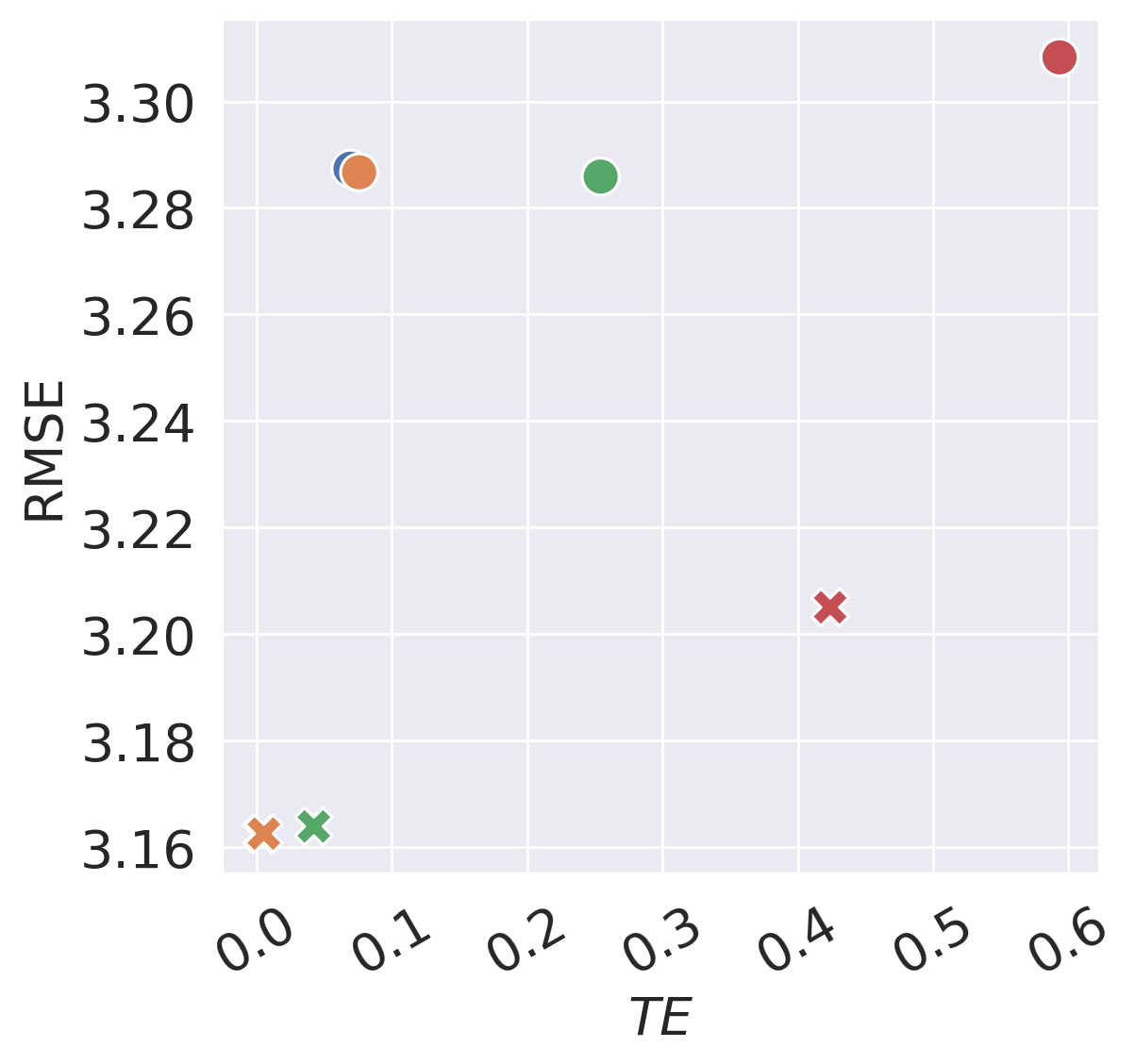}
        \caption{\textit{Cubic-Reg}}
        \label{fig:estana_cubic_knn_cf_effect}
    \end{subfigure}
    \begin{subfigure}[t]{0.22\linewidth}
        \centering
        \includegraphics[width=\linewidth]{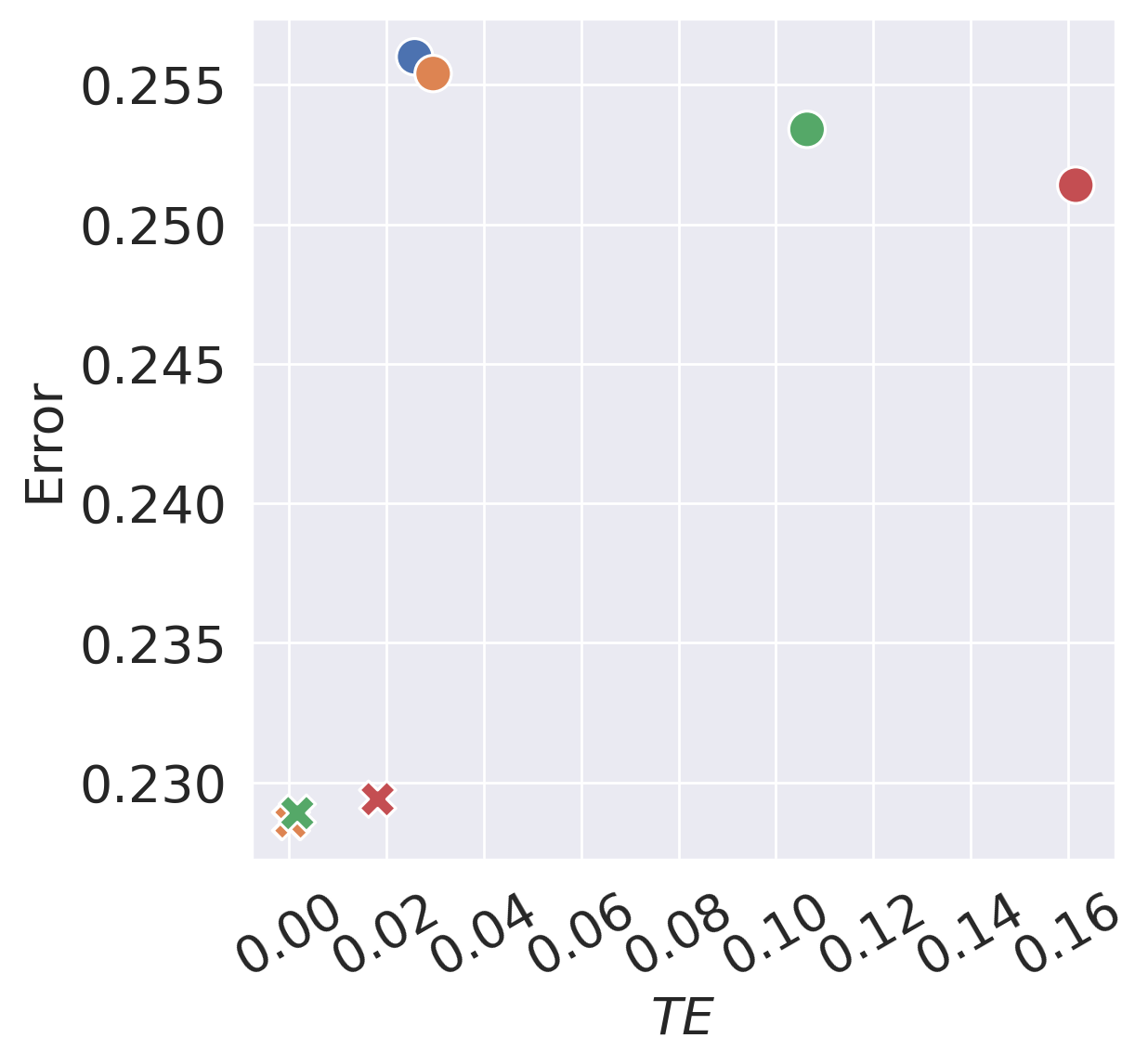}
        \caption{\textit{Linear-Cls}}
        \label{fig:estana_linear_knn_cf_effect}
    \end{subfigure}
    \begin{subfigure}[t]{0.28\linewidth}
        \centering
        \includegraphics[width=\linewidth]{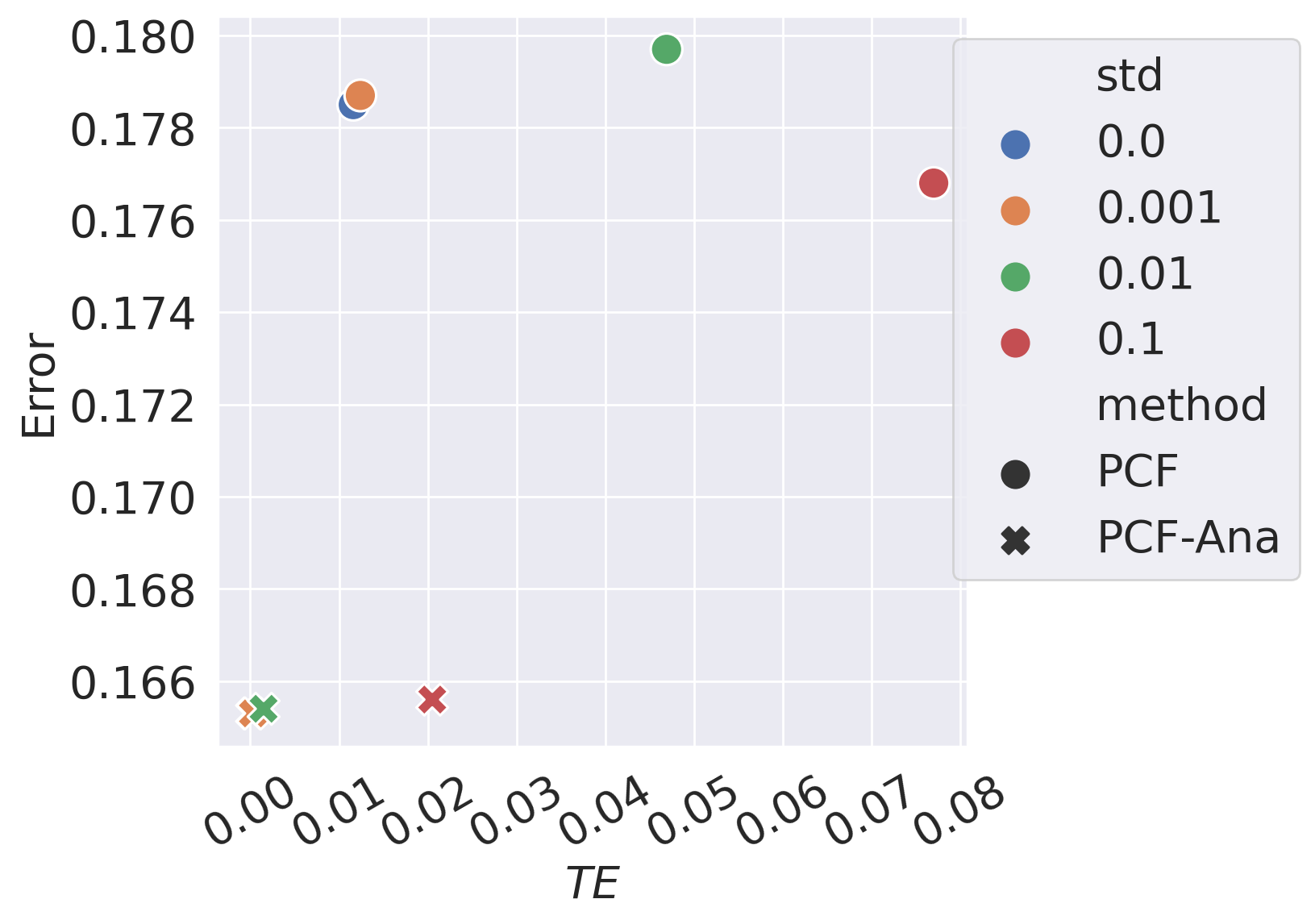}
        \caption{\textit{Cubic-Cls}}
        \label{fig:estana_cubic_knn_cf_effect}
    \end{subfigure}
    \caption{Results on synthetic datasets comparing PCF and PCF-Analytic with KNN predictors.
    Different color represents different $\alpha$ indicating the standard deviation of the error ($\epsilon \sim \mathcal{N}(0.001,\alpha)$) while shape represents different algorithms.}
    \label{fig:estana_b0.001_knn}
\end{figure}

\begin{figure}
    \centering
    \begin{subfigure}[t]{0.5\linewidth}
        \centering
        \includegraphics[width=\linewidth]{figures/law_official/estaug_mlp_cf_effect.png}
        \caption{$\TE$}
        \end{subfigure}
    \begin{subfigure}[t]{0.5\linewidth}
        \centering
        \includegraphics[width=\linewidth]{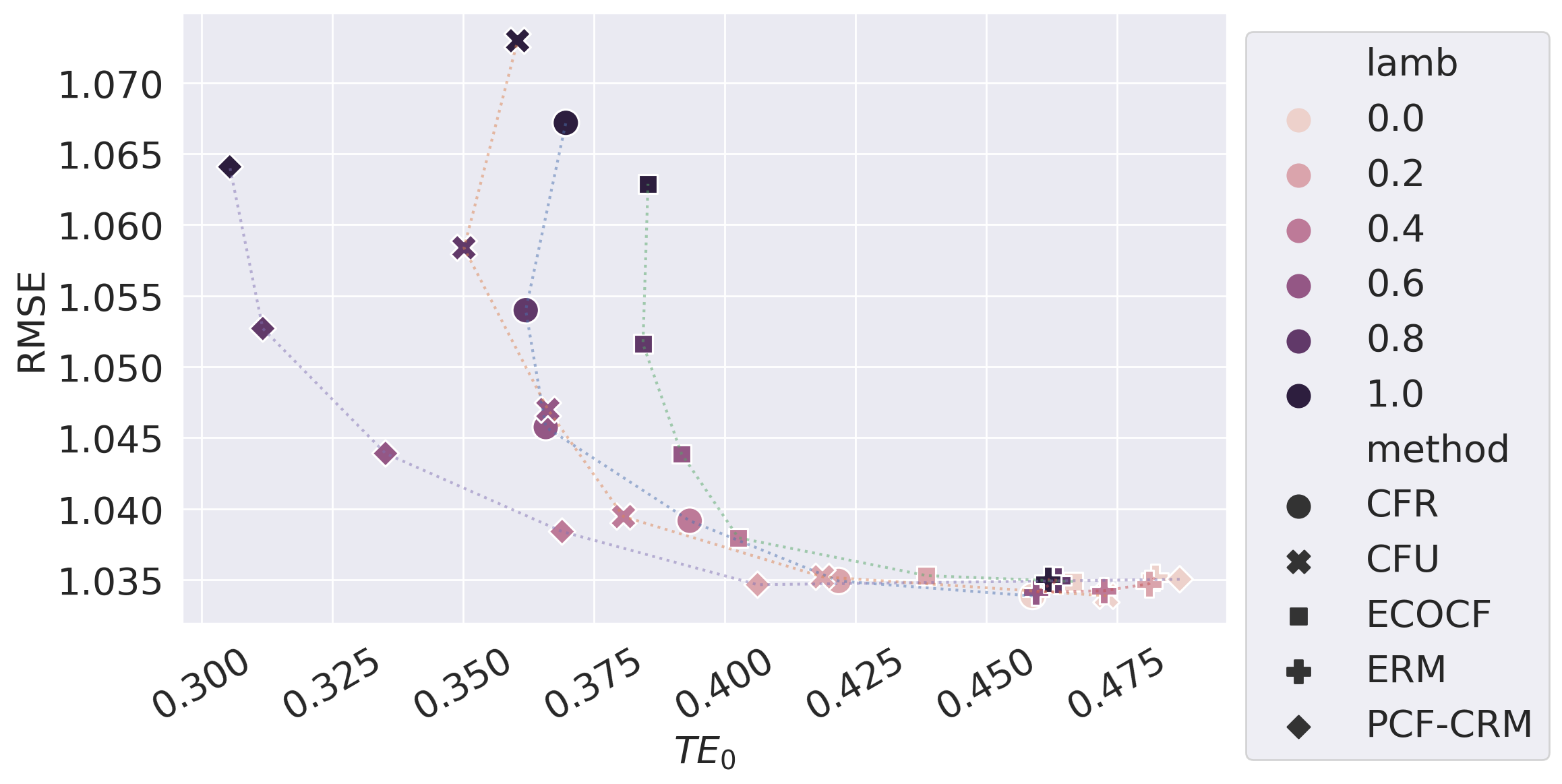}
        \caption{$\TE_0$}
        \end{subfigure}
    \begin{subfigure}[t]{0.5\linewidth}
        \centering
        \includegraphics[width=\linewidth]{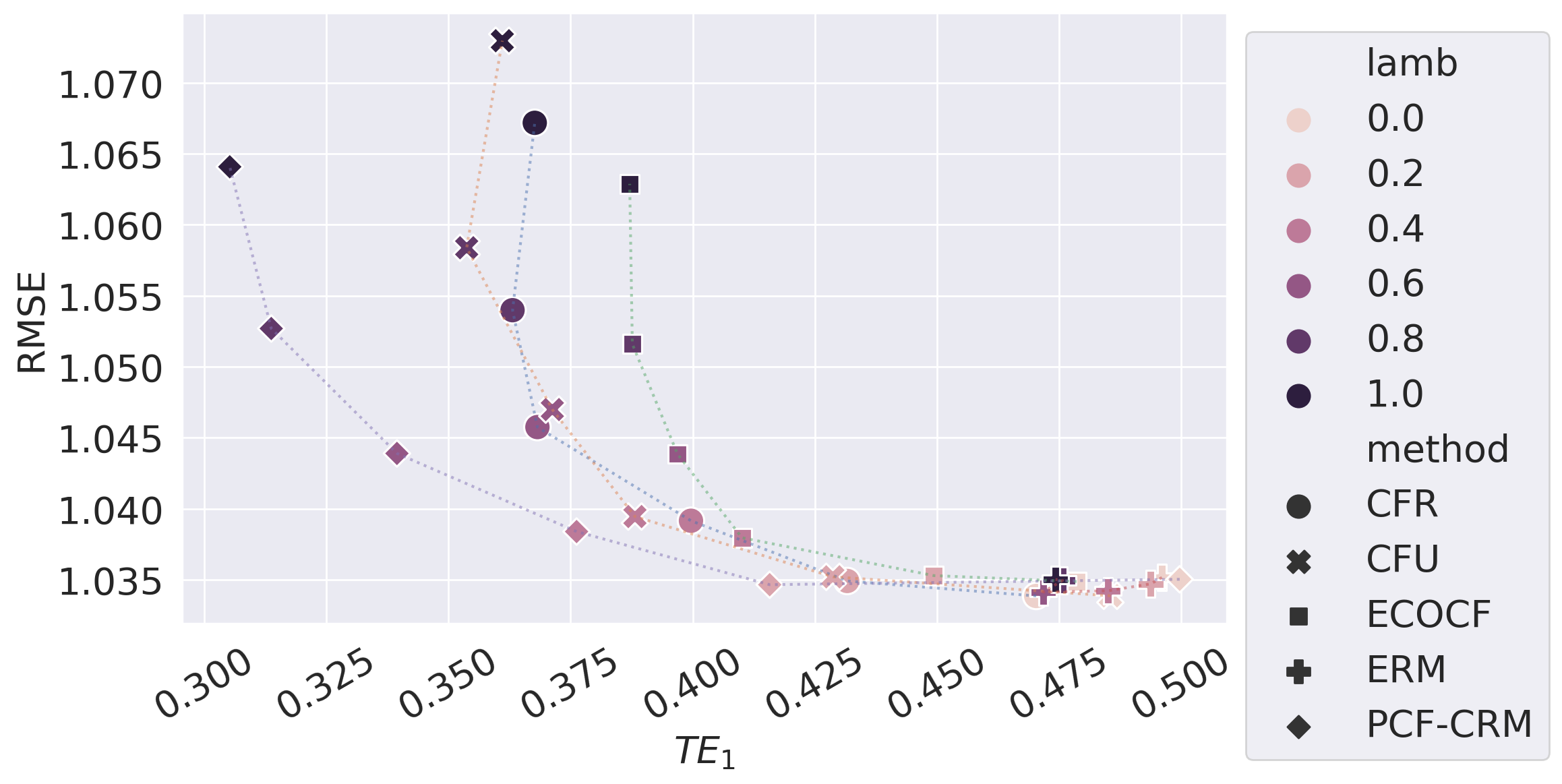}
        \caption{$\TE_1$}
        \end{subfigure}
    \caption{Results on Sim-Law with estimated counterfatuals.
    The predictor is a MLP regressor.
    We also test the convex combination of each algorithm and ERM.
    For example, PCFAug with $\lambda$ means $\hat{y} = \lambda \hat{y}_{\textnormal{PCFAug}} + (1-\lambda) \hat{y}_{\textnormal{ERM}}$.
    This suggests that PCFAug can achieve lower Error given the same TE and lower TE given the same Error.}
\label{fig:law_allte}
\end{figure}

\begin{figure}
    \centering
\includegraphics[width=0.5\linewidth]{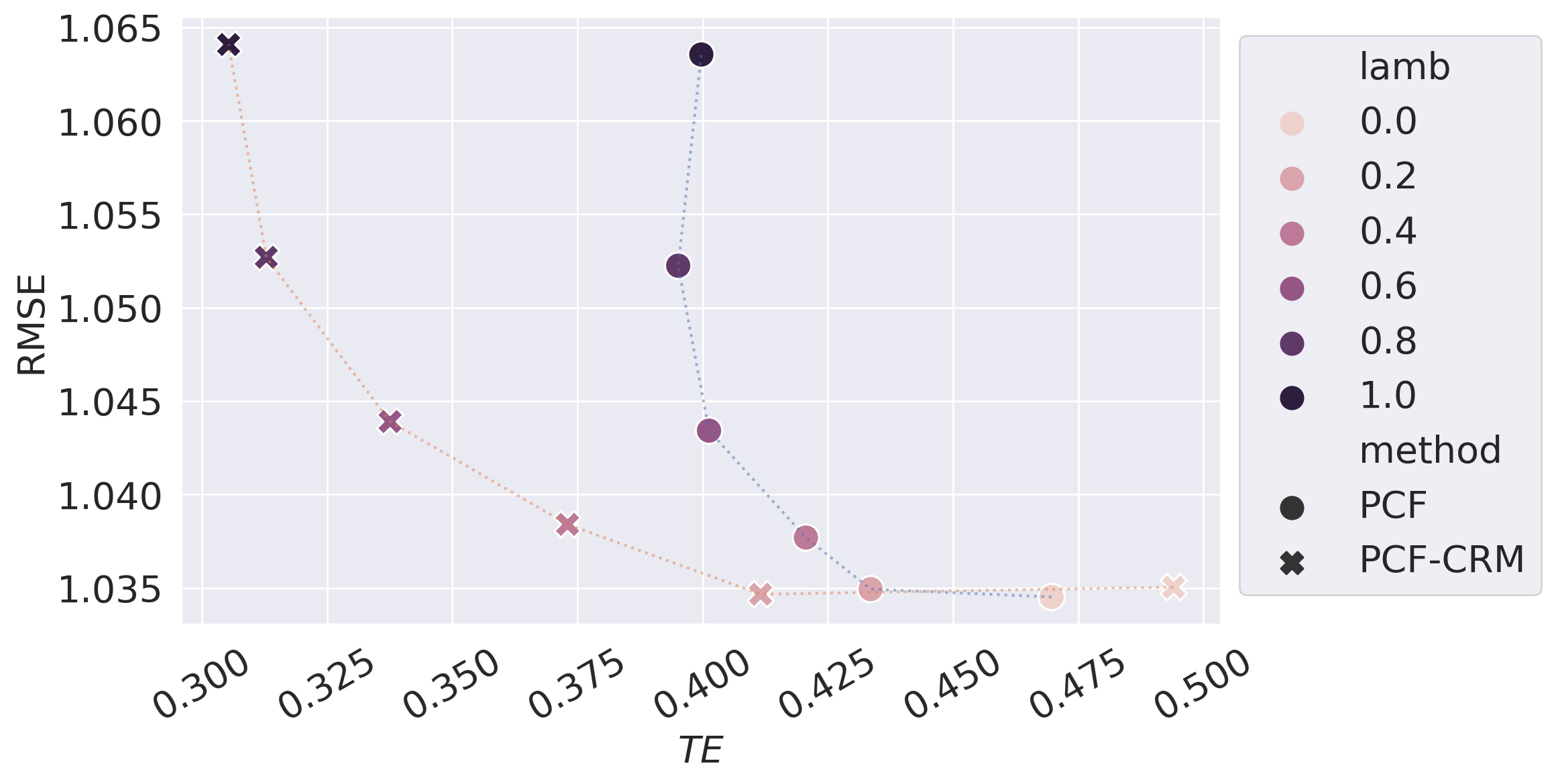}
\caption{Comparison between PCF and PCF-CRM on Sim-Law with estimated counterfatuals.
    The predictor is a MLP regressor.}
\label{fig:law-pcf-erm-crm}
\end{figure}

\begin{figure}
    \centering
        \includegraphics[width=0.75\linewidth]{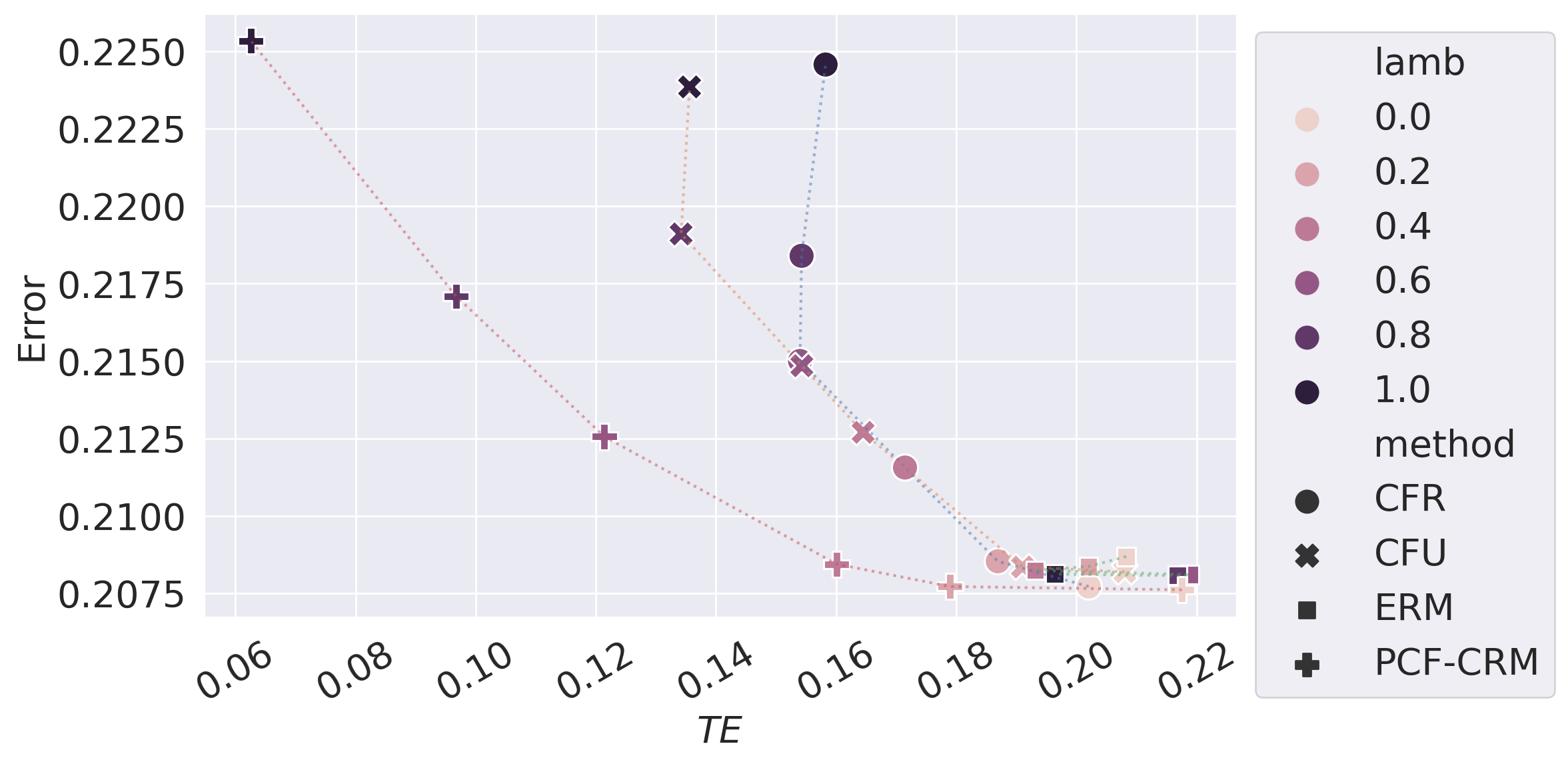}
    \caption{Results on Sim-Adult with estimated counterfatuals.
    The predictor is a MLP classifier.}
\label{fig:adult}
\end{figure}
 
\subsection{Additional experiments}

To evaluate the performance of our method across a broader range of data generating mechanisms, counterfactual estimation models, and datasets, we conducted experiments using the Disentangled Causal Effect Variational Autoencoder (DCEVAE) on the Adult dataset \citep{asuncion2007uci}.
Similarly, we trained one DCEVAE to simulate the Sim-Adult dataset and another DCEVAE to estimate counterfactuals. 
Here we followed the data preprocessing, model setup, and hyperparameter choices specified in \citet{zuo2023counterfactually}.
A key difference between the setup used here and our setup on the Law School dataset is that, in this case, the encoder of the ground truth DCEVAE takes $y$ as an input. 
This modification introduces an inconsistency between the ground truth CGM and the estimated CGM.
In \Cref{fig:adult}, we observe a trend similar to that in \Cref{fig:law}, which indicates the effectiveness of PCF-CRM.

\clearpage

\section*{NeurIPS Paper Checklist}

\begin{enumerate}

\item {\bf Claims}
    \item[] Question: Do the main claims made in the abstract and introduction accurately reflect the paper's contributions and scope?
    \item[] Answer: \answerYes{} \item[] Justification: The abstract and the introduction reflect the contributions and scope of the work presented in this paper.
    \item[] Guidelines:
    \begin{itemize}
        \item The answer NA means that the abstract and introduction do not include the claims made in the paper.
        \item The abstract and/or introduction should clearly state the claims made, including the contributions made in the paper and important assumptions and limitations. A No or NA answer to this question will not be perceived well by the reviewers. 
        \item The claims made should match theoretical and experimental results, and reflect how much the results can be expected to generalize to other settings. 
        \item It is fine to include aspirational goals as motivation as long as it is clear that these goals are not attained by the paper. 
    \end{itemize}

\item {\bf Limitations}
    \item[] Question: Does the paper discuss the limitations of the work performed by the authors?
    \item[] Answer: \answerYes{} \item[] Justification: It is discussed in both \Cref{sec:fair-alg} and \Cref{sec:conclusion}.
    \item[] Guidelines:
    \begin{itemize}
        \item The answer NA means that the paper has no limitation while the answer No means that the paper has limitations, but those are not discussed in the paper. 
        \item The authors are encouraged to create a separate "Limitations" section in their paper.
        \item The paper should point out any strong assumptions and how robust the results are to violations of these assumptions (e.g., independence assumptions, noiseless settings, model well-specification, asymptotic approximations only holding locally). The authors should reflect on how these assumptions might be violated in practice and what the implications would be.
        \item The authors should reflect on the scope of the claims made, e.g., if the approach was only tested on a few datasets or with a few runs. In general, empirical results often depend on implicit assumptions, which should be articulated.
        \item The authors should reflect on the factors that influence the performance of the approach. For example, a facial recognition algorithm may perform poorly when image resolution is low or images are taken in low lighting. Or a speech-to-text system might not be used reliably to provide closed captions for online lectures because it fails to handle technical jargon.
        \item The authors should discuss the computational efficiency of the proposed algorithms and how they scale with dataset size.
        \item If applicable, the authors should discuss possible limitations of their approach to address problems of privacy and fairness.
        \item While the authors might fear that complete honesty about limitations might be used by reviewers as grounds for rejection, a worse outcome might be that reviewers discover limitations that aren't acknowledged in the paper. The authors should use their best judgment and recognize that individual actions in favor of transparency play an important role in developing norms that preserve the integrity of the community. Reviewers will be specifically instructed to not penalize honesty concerning limitations.
    \end{itemize}

\item {\bf Theory Assumptions and Proofs}
    \item[] Question: For each theoretical result, does the paper provide the full set of assumptions and a complete (and correct) proof?
    \item[] Answer: \answerYes{} \item[] Justification: See \Cref{sec:prelim} and \Cref{sec:fair-alg}.
    \item[] Guidelines:
    \begin{itemize}
        \item The answer NA means that the paper does not include theoretical results. 
        \item All the theorems, formulas, and proofs in the paper should be numbered and cross-referenced.
        \item All assumptions should be clearly stated or referenced in the statement of any theorems.
        \item The proofs can either appear in the main paper or the supplemental material, but if they appear in the supplemental material, the authors are encouraged to provide a short proof sketch to provide intuition. 
        \item Inversely, any informal proof provided in the core of the paper should be complemented by formal proofs provided in appendix or supplemental material.
        \item Theorems and Lemmas that the proof relies upon should be properly referenced. 
    \end{itemize}

    \item {\bf Experimental Result Reproducibility}
    \item[] Question: Does the paper fully disclose all the information needed to reproduce the main experimental results of the paper to the extent that it affects the main claims and/or conclusions of the paper (regardless of whether the code and data are provided or not)?
    \item[] Answer: \answerYes{} \item[] Justification: See \Cref{sec:exp} and \Cref{app-sec:exp-detail}.
    \item[] Guidelines:
    \begin{itemize}
        \item The answer NA means that the paper does not include experiments.
        \item If the paper includes experiments, a No answer to this question will not be perceived well by the reviewers: Making the paper reproducible is important, regardless of whether the code and data are provided or not.
        \item If the contribution is a dataset and/or model, the authors should describe the steps taken to make their results reproducible or verifiable. 
        \item Depending on the contribution, reproducibility can be accomplished in various ways. For example, if the contribution is a novel architecture, describing the architecture fully might suffice, or if the contribution is a specific model and empirical evaluation, it may be necessary to either make it possible for others to replicate the model with the same dataset, or provide access to the model. In general. releasing code and data is often one good way to accomplish this, but reproducibility can also be provided via detailed instructions for how to replicate the results, access to a hosted model (e.g., in the case of a large language model), releasing of a model checkpoint, or other means that are appropriate to the research performed.
        \item While NeurIPS does not require releasing code, the conference does require all submissions to provide some reasonable avenue for reproducibility, which may depend on the nature of the contribution. For example
        \begin{enumerate}
            \item If the contribution is primarily a new algorithm, the paper should make it clear how to reproduce that algorithm.
            \item If the contribution is primarily a new model architecture, the paper should describe the architecture clearly and fully.
            \item If the contribution is a new model (e.g., a large language model), then there should either be a way to access this model for reproducing the results or a way to reproduce the model (e.g., with an open-source dataset or instructions for how to construct the dataset).
            \item We recognize that reproducibility may be tricky in some cases, in which case authors are welcome to describe the particular way they provide for reproducibility. In the case of closed-source models, it may be that access to the model is limited in some way (e.g., to registered users), but it should be possible for other researchers to have some path to reproducing or verifying the results.
        \end{enumerate}
    \end{itemize}

\item {\bf Open access to data and code}
    \item[] Question: Does the paper provide open access to the data and code, with sufficient instructions to faithfully reproduce the main experimental results, as described in supplemental material?
    \item[] Answer: \answerYes{} \item[] Justification: The code is released.
    \item[] Guidelines:
    \begin{itemize}
        \item The answer NA means that paper does not include experiments requiring code.
        \item Please see the NeurIPS code and data submission guidelines (\url{https://nips.cc/public/guides/CodeSubmissionPolicy}) for more details.
        \item While we encourage the release of code and data, we understand that this might not be possible, so “No” is an acceptable answer. Papers cannot be rejected simply for not including code, unless this is central to the contribution (e.g., for a new open-source benchmark).
        \item The instructions should contain the exact command and environment needed to run to reproduce the results. See the NeurIPS code and data submission guidelines (\url{https://nips.cc/public/guides/CodeSubmissionPolicy}) for more details.
        \item The authors should provide instructions on data access and preparation, including how to access the raw data, preprocessed data, intermediate data, and generated data, etc.
        \item The authors should provide scripts to reproduce all experimental results for the new proposed method and baselines. If only a subset of experiments are reproducible, they should state which ones are omitted from the script and why.
        \item At submission time, to preserve anonymity, the authors should release anonymized versions (if applicable).
        \item Providing as much information as possible in supplemental material (appended to the paper) is recommended, but including URLs to data and code is permitted.
    \end{itemize}

\item {\bf Experimental Setting/Details}
    \item[] Question: Does the paper specify all the training and test details (e.g., data splits, hyperparameters, how they were chosen, type of optimizer, etc.) necessary to understand the results?
    \item[] Answer: \answerYes{} \item[] Justification: See \Cref{sec:exp} and \Cref{app-sec:exp-detail}.
    \item[] Guidelines:
    \begin{itemize}
        \item The answer NA means that the paper does not include experiments.
        \item The experimental setting should be presented in the core of the paper to a level of detail that is necessary to appreciate the results and make sense of them.
        \item The full details can be provided either with the code, in appendix, or as supplemental material.
    \end{itemize}

\item {\bf Experiment Statistical Significance}
    \item[] Question: Does the paper report error bars suitably and correctly defined or other appropriate information about the statistical significance of the experiments?
    \item[] Answer: \answerNo{}{} \item[] Justification: It would be too messy to include error bars in the current plot.
    \item[] Guidelines:
    \begin{itemize}
        \item The answer NA means that the paper does not include experiments.
        \item The authors should answer "Yes" if the results are accompanied by error bars, confidence intervals, or statistical significance tests, at least for the experiments that support the main claims of the paper.
        \item The factors of variability that the error bars are capturing should be clearly stated (for example, train/test split, initialization, random drawing of some parameter, or overall run with given experimental conditions).
        \item The method for calculating the error bars should be explained (closed form formula, call to a library function, bootstrap, etc.)
        \item The assumptions made should be given (e.g., Normally distributed errors).
        \item It should be clear whether the error bar is the standard deviation or the standard error of the mean.
        \item It is OK to report 1-sigma error bars, but one should state it. The authors should preferably report a 2-sigma error bar than state that they have a 96\% CI, if the hypothesis of Normality of errors is not verified.
        \item For asymmetric distributions, the authors should be careful not to show in tables or figures symmetric error bars that would yield results that are out of range (e.g. negative error rates).
        \item If error bars are reported in tables or plots, The authors should explain in the text how they were calculated and reference the corresponding figures or tables in the text.
    \end{itemize}

\item {\bf Experiments Compute Resources}
    \item[] Question: For each experiment, does the paper provide sufficient information on the computer resources (type of compute workers, memory, time of execution) needed to reproduce the experiments?
    \item[] Answer: \answerYes{} \item[] Justification: See \Cref{app-sec:exp-detail}.
    \item[] Guidelines:
    \begin{itemize}
        \item The answer NA means that the paper does not include experiments.
        \item The paper should indicate the type of compute workers CPU or GPU, internal cluster, or cloud provider, including relevant memory and storage.
        \item The paper should provide the amount of compute required for each of the individual experimental runs as well as estimate the total compute. 
        \item The paper should disclose whether the full research project required more compute than the experiments reported in the paper (e.g., preliminary or failed experiments that didn't make it into the paper). 
    \end{itemize}
    
\item {\bf Code Of Ethics}
    \item[] Question: Does the research conducted in the paper conform, in every respect, with the NeurIPS Code of Ethics \url{https://neurips.cc/public/EthicsGuidelines}?
    \item[] Answer: \answerYes{} \item[] Justification: To the best of our knowledge we do not violate any portion of the NeurIPS Code of Ethics.
    \item[] Guidelines:
    \begin{itemize}
        \item The answer NA means that the authors have not reviewed the NeurIPS Code of Ethics.
        \item If the authors answer No, they should explain the special circumstances that require a deviation from the Code of Ethics.
        \item The authors should make sure to preserve anonymity (e.g., if there is a special consideration due to laws or regulations in their jurisdiction).
    \end{itemize}

\item {\bf Broader Impacts}
    \item[] Question: Does the paper discuss both potential positive societal impacts and negative societal impacts of the work performed?
    \item[] Answer: \answerYes{} \item[] Justification: This paper aims to enhance Counterfactual Fairness in practical scenarios, but it's important to note that our method's validation is currently limited to semi-simulated datasets with known counterfactuals. Therefore, further auditing and verification are essential before applying this method in real-world fairness scenarios.
    \item[] Guidelines:
    \begin{itemize}
        \item The answer NA means that there is no societal impact of the work performed.
        \item If the authors answer NA or No, they should explain why their work has no societal impact or why the paper does not address societal impact.
        \item Examples of negative societal impacts include potential malicious or unintended uses (e.g., disinformation, generating fake profiles, surveillance), fairness considerations (e.g., deployment of technologies that could make decisions that unfairly impact specific groups), privacy considerations, and security considerations.
        \item The conference expects that many papers will be foundational research and not tied to particular applications, let alone deployments. However, if there is a direct path to any negative applications, the authors should point it out. For example, it is legitimate to point out that an improvement in the quality of generative models could be used to generate deepfakes for disinformation. On the other hand, it is not needed to point out that a generic algorithm for optimizing neural networks could enable people to train models that generate Deepfakes faster.
        \item The authors should consider possible harms that could arise when the technology is being used as intended and functioning correctly, harms that could arise when the technology is being used as intended but gives incorrect results, and harms following from (intentional or unintentional) misuse of the technology.
        \item If there are negative societal impacts, the authors could also discuss possible mitigation strategies (e.g., gated release of models, providing defenses in addition to attacks, mechanisms for monitoring misuse, mechanisms to monitor how a system learns from feedback over time, improving the efficiency and accessibility of ML).
    \end{itemize}
    
\item {\bf Safeguards}
    \item[] Question: Does the paper describe safeguards that have been put in place for responsible release of data or models that have a high risk for misuse (e.g., pretrained language models, image generators, or scraped datasets)?
    \item[] Answer: \answerNA{} \item[] Justification: We do not believe our work has a risk of misuse
    \item[] Guidelines:
    \begin{itemize}
        \item The answer NA means that the paper poses no such risks.
        \item Released models that have a high risk for misuse or dual-use should be released with necessary safeguards to allow for controlled use of the model, for example by requiring that users adhere to usage guidelines or restrictions to access the model or implementing safety filters. 
        \item Datasets that have been scraped from the Internet could pose safety risks. The authors should describe how they avoided releasing unsafe images.
        \item We recognize that providing effective safeguards is challenging, and many papers do not require this, but we encourage authors to take this into account and make a best faith effort.
    \end{itemize}

\item {\bf Licenses for existing assets}
    \item[] Question: Are the creators or original owners of assets (e.g., code, data, models), used in the paper, properly credited and are the license and terms of use explicitly mentioned and properly respected?
    \item[] Answer: \answerYes{} \item[] Justification: The datasets used are cited.
    \item[] Guidelines:
    \begin{itemize}
        \item The answer NA means that the paper does not use existing assets.
        \item The authors should cite the original paper that produced the code package or dataset.
        \item The authors should state which version of the asset is used and, if possible, include a URL.
        \item The name of the license (e.g., CC-BY 4.0) should be included for each asset.
        \item For scraped data from a particular source (e.g., website), the copyright and terms of service of that source should be provided.
        \item If assets are released, the license, copyright information, and terms of use in the package should be provided. For popular datasets, \url{paperswithcode.com/datasets} has curated licenses for some datasets. Their licensing guide can help determine the license of a dataset.
        \item For existing datasets that are re-packaged, both the original license and the license of the derived asset (if it has changed) should be provided.
        \item If this information is not available online, the authors are encouraged to reach out to the asset's creators.
    \end{itemize}

\item {\bf New Assets}
    \item[] Question: Are new assets introduced in the paper well documented and is the documentation provided alongside the assets?
    \item[] Answer: \answerNA{}{} \item[] Justification: No new assets are released in the paper.
    \item[] Guidelines:
    \begin{itemize}
        \item The answer NA means that the paper does not release new assets.
        \item Researchers should communicate the details of the dataset/code/model as part of their submissions via structured templates. This includes details about training, license, limitations, etc. 
        \item The paper should discuss whether and how consent was obtained from people whose asset is used.
        \item At submission time, remember to anonymize your assets (if applicable). You can either create an anonymized URL or include an anonymized zip file.
    \end{itemize}

\item {\bf Crowdsourcing and Research with Human Subjects}
    \item[] Question: For crowdsourcing experiments and research with human subjects, does the paper include the full text of instructions given to participants and screenshots, if applicable, as well as details about compensation (if any)? 
    \item[] Answer: \answerNA{}{} \item[] Justification: The paper does not involve crowdsourcing.
    \item[] Guidelines:
    \begin{itemize}
        \item The answer NA means that the paper does not involve crowdsourcing nor research with human subjects.
        \item Including this information in the supplemental material is fine, but if the main contribution of the paper involves human subjects, then as much detail as possible should be included in the main paper. 
        \item According to the NeurIPS Code of Ethics, workers involved in data collection, curation, or other labor should be paid at least the minimum wage in the country of the data collector. 
    \end{itemize}

\item {\bf Institutional Review Board (IRB) Approvals or Equivalent for Research with Human Subjects}
    \item[] Question: Does the paper describe potential risks incurred by study participants, whether such risks were disclosed to the subjects, and whether Institutional Review Board (IRB) approvals (or an equivalent approval/review based on the requirements of your country or institution) were obtained?
    \item[] Answer: \answerNA{} \item[] Justification: The paper does not involve crowdsourcing nor research with human subjects.
    \item[] Guidelines:
    \begin{itemize}
        \item The answer NA means that the paper does not involve crowdsourcing nor research with human subjects.
        \item Depending on the country in which research is conducted, IRB approval (or equivalent) may be required for any human subjects research. If you obtained IRB approval, you should clearly state this in the paper. 
        \item We recognize that the procedures for this may vary significantly between institutions and locations, and we expect authors to adhere to the NeurIPS Code of Ethics and the guidelines for their institution. 
        \item For initial submissions, do not include any information that would break anonymity (if applicable), such as the institution conducting the review.
    \end{itemize}

\end{enumerate}

\end{document}